\newcommand*{\T}{^{\mkern-1.5mu\mathsf{T}}} %
\newcommand{\E}{\mathbb{E}}   %
\newcommand{\Q}{\mathbb{Q}}   %
\newcommand{\R}{\mathbb{R}}   %
\newcommand{\ip}[2]{\left\langle{#1}\right\rangle_{#2}} %
\newcommand{\norm}[2]{\left\|{#1}\right\|_{#2}} %
\newcommand{\tb}{\textbf}    %
\renewcommand{\H}{\mathcal{H}} %
\renewcommand{\O}{\mathcal{O}} %
\renewcommand{\P}{\mathbb{P}} %
\renewcommand{\b}{\mathbf}    %
\renewcommand{\d}{\mathrm{d}} %
\DeclareMathOperator{\MMD}{MMD}
\DeclareMathOperator*{\esssup}{ess\,sup}
\newtheorem{theorem}{Theorem}
\newtheorem{lemma}{Lemma}
\newtheorem{corollary}{Corollary}
\newtheorem{assumption}{Assumption}
\newtheorem{example}{Example}
\newcommand{\cmark}{\ding{51}}
\newcommand{\xmark}{\ding{55}}
\title{Optimal Online Change Detection via \\ Random Fourier Features}
\author{%
Florian Kalinke\thanks{Contributed equally.} \\
Information Systems\\
Karlsruhe Institute of Technology (KIT)\\
Karlsruhe, Germany \\
\texttt{florian.kalinke@kit.edu}
\And
Shakeel Gavioli-Akilagun\textsuperscript{*} \\
Department of Decision Analytics and Operations \\
City University Hong Kong \\
Hong Kong, China \\
\texttt{sgavioli@cityu.edu.hk}
}
\begin{document}

\maketitle

\begin{abstract}
This article studies the problem of online non-parametric change point detection in multivariate data streams. We approach the problem through the lens of kernel-based two-sample testing and introduce a sequential testing procedure based on random Fourier features, running with logarithmic time complexity per observation and with overall logarithmic space complexity. The algorithm has two advantages compared to the state of the art. First, our approach is genuinely online, and no access to training data known to be from the pre-change distribution is necessary. Second, the algorithm does not require the user to specify a window parameter over which local tests are to be calculated. We prove strong theoretical guarantees on the algorithm's performance, including information-theoretic bounds demonstrating that the detection delay is optimal in the minimax sense. Numerical studies on real and synthetic data show that our algorithm is competitive with respect to the state of the art.
\end{abstract}

\section{Introduction}

In the online change point detection problem, data is observed sequentially, and the goal is to flag a change if the distribution of the data changes. The problem dates back to the early work of \citet{page54continuous}, and has now been extensively studied in the statistics and machine learning literature \citep{lai98information,basseville93abrupt,siegmund13sequential}. However, these classical approaches assume that the data are low-dimensional and that the pre- and post-change distributions belong to a known parametric family. In modern applications, both assumptions are usually not satisfied. Examples of modern online change point detection problems include: detecting changes in audio streams \citep{bietti15onlineem}, in videos \citep{abouelailah15abrupt, kim09using}, in highway traffic data \citep{grossman05real}, in internet traffic data \citep{levy09detection}, or in cardiac time series \citep{yang06adaptive}. Further, such data frequently has high volume, and online change point detection procedures should still be able to process new data in real time and with limited memory.

While algorithms for the problem of non-parametric online change point detection have been proposed---see Section~\ref{section: related work} for a brief overview---state-of-the-art methods suitable for modern data suffer from at least one of the following two limitations. First, most procedures are not genuinely online from a statistical perspective, in the sense that they either assume the pre-change distribution to be known completely or they assume having access to historical data known to be from the pre-change distribution. Second, most approaches require the user to specify a window parameter over which local tests for a change in distribution will be applied. Choosing such a window is notoriously difficult \citep{romano23fast}, and choosing the window too small or too large leads to a reduction in power or an increase in the detection delay, respectively. Moreover, the window size imposes a limit on the historic data considered, rendering detecting sufficiently small changes impossible.

Motivated by the above limitations and challenges, we propose a new algorithm called Online RFF-MMD (Random Fourier Feature Maximum Mean Discrepancy). On a high level, the algorithm performs sequential two-sample tests based on the kernel-based maximum mean discrepancy (MMD; \citep{smola07hilbert,gretton12kernel}). Crucially, approximating the maximum mean discrepancy using random Fourier features (RFFs; \citep{rahimi07random}) leads to a detection statistic that can be computed in linear and updated in constant time. By embedding these local tests in a sequential testing scheme on a dyadic grid of candidate change point locations, we obtain an algorithm that does not require a window parameter and features a time and space complexity logarithmic in the amount of data observed.

This article makes the following contributions: 

\begin{itemize}
    \item \textbf{Computational efficiency:} We propose Online RFF-MMD, a fully non-parametric change point detection algorithm that does not require access to historical data, has no window parameter, and features logarithmic runtime and space complexity.
    \item \textbf{Minimax optimality:} Online RFF-MMD comes with strong theoretical guarantees. In particular, we derive information-theoretic bounds showing that the detection delay incurred by Online RFF-MMD is optimal up to logarithmic terms in the minimax sense. While related results are known in the offline setting \citep{padilla23change, padilla21optimal} and in the parametric online setting \citep{lai98information, yu23note}, ours is the first result of this kind for kernel-based online change point detection. 
    \item \textbf{Empirical validation:} We perform a suite of benchmarks on synthetic data, the MNIST data, and the HASC data to demonstrate the applicability of the proposed method. Our approach achieves competitive results throughout all experiments.
\end{itemize}

The article is structured as follows. We recall related work in Section~\ref{section: related work} and introduce our notations and the problem in Section~\ref{section: preliminaries}. Section~\ref{section:proposed-algorithm} presents our algorithm and its guarantees, and Section~\ref{section: minimax optimality} its minimax optimality. Experiments are in Section~\ref{sec:experiments} and limitations in Section~\ref{sec:limitations}. Additional results and all proofs are in the appendices.

\begin{table}[!h]
\centering
\caption{Comparison of kernel-based change detectors. Genuinely online \ --- whether the algorithm can be executed without reference data known to be from the pre-change distribution; Window free \ --- whether the algorithm requires selection of a window parameter over which the detection statistic is calculated; Time comp.\ --- runtime complexity per new observation; Space comp.\ --- total space complexity; $n$ --- total number of observations.\tablefootnote{In Table \ref{tab:comparison}, where applicable, $r$ denotes the number of random Fourier features, $W$ denotes the size of the window, and $N$ denotes the number of blocks of historical data. We treat the dimension of the data as fixed.}}
\begin{tabular}{lccll}
\toprule
Algorithm & Genuinely online \ & Window free \ & Time comp.\ & Space comp. \\
\midrule
Scan $B$-statistics \citep{li19scanbstatistics} & \xmark & \xmark &  $\O\!\left(NW^2\right)$   & $\O\!\left(NW\right)$  \\
NEWMA \citep{keriven20newma} & \cmark & \xmark & $\O(r)$  & $\O(r)$ \\ 
Online kernel CUSUM \citep{wei22online} & \xmark & \xmark & $\O\!\left(NW^2\right)$ & $\O(NW)$ \\
\tb{Online RFF-MMD} & \cmark & \cmark & $\O(r\log n)$ & $\O(r \log n)$ \\
\bottomrule
\end{tabular}
\label{tab:comparison}
\end{table}

\section{Related work} \label{section: related work}

In the case of univariate data, numerous methods for non-parametric online change point detection have been proposed; we refer to \citet{ross15cpmpackage} for a more comprehensive overview. The most successful among these approaches exploit the fact that all information is contained in the data's empirical distribution function, which is a functional of the ranks. Rank-based online change point detection methods have been proposed by \citep{gordon94efficient,hawkins10nonparametric,romano23changedetection}. However, their extension to multivariate data is challenging as this requires a computationally efficient multivariate analogue to scalar ranks. 

To tackle change point detection on multivariate data, many approaches exist; see \citet{wang24sequential} for a survey. For example, \citet{yilmaz17online, kurt20real} introduce procedures using summary statistics based on geometric entropy minimization, but assume knowledge of the pre-change distribution. An alternative non-parametric approach is using information on distances between data points \citep{chen19sequential, chu22sequential}. However, here one can construct alternatives which the procedures will always fail to detect as the limits of the test statistics employed do not metrize the space of probability distributions.

One principled approach to tackle this challenging setting is using kernel-based two sample tests via the MMD, which we recall briefly in Section \ref{section: kernel two sample tests}. The key property, if the underlying kernel is characteristic \citep{fukumizu07kernel,sriperumbudur10hilbert}, is that the MMD metrizes the space of probability distributions and thus allows detecting any change. However, the sample estimators of the MMD typically have a quadratic runtime complexity, which prohibits their direct application in the online setting. To overcome this challenge, \citet{zaremba13btest} propose Scan $B$-statistics, which achieve sub-quadratic time complexity by splitting the data into blocks and computing the
quadratic-time MMD on each block. Consequently, \citet{li15mstatistic,li19scanbstatistics} introduce an online change point detection algorithm that recursively estimates the Scan $B$-statistic over a sliding window. Extending this idea, \citet{wei22online} propose an algorithm which performs the same operation over grid of windows of different sizes. However, these algorithms all require the choice of a window parameter and are not genuinely online as they require historical data known to be from the pre-change distribution. \citet{keriven20newma} propose comparing two exponentially smoothed MMD statistics, where the MMD is approximated by using random Fourier features. However, the window selection problem is not avoided because, as shown by the authors, the smoothed statistic can be interpreted as computing differences between MMDs calculated on two windows of different sizes.

We summarize the kernel-based approaches coming with theoretical guarantees in Table~\ref{tab:comparison} and note that we present two extensions of our proposed Online RFF-MMD approach in the appendix. The first allows taking historical data into account (Appendix~\ref{section: estimatable or known pre change}). The second permits detecting multiple change points within the same stream, with theoretical guarantees (Appendix~\ref{sec:online detection of multiple change points}).

\section{Preliminaries} \label{section: preliminaries}

In this section, we formally introduce the online change point detection problem (Section~\ref{section: problem statement}) and recall kernel-based two sample testing together with its RFF-based approximation (Section~\ref{section: kernel two sample tests}).

\subsection{Problem statement} \label{section: problem statement}

We consider a data stream $X_1, X_2, \dots$ observed online, where the $X_t$-s are independent random variables taking values in $\R^d$, with $d$ arbitrary but fixed. Let $\P,\Q \in \mathcal M_1^+:= \mathcal M_1^+\!\left(\R^d\right)$ and $\P\neq \Q$, where $\mathcal M_1^+\!\left( \R^d \right)$ denotes the set of all Borel probability measures on $\mathbb{R}^d$. We assume that there exists an $\eta \in \mathbb{N} := \{1,2,\ldots\}$ such that
\begin{align*}
X_t \sim  \begin{cases}
\mathbb{P} & \text{ for } t = 1, \dots, \eta \\
\mathbb{Q} & \text{ for } t = \eta +1, \eta +2, \dots   
\end{cases}. 
\end{align*}
The goal is to stop the process with minimal delay as soon as $\eta$ is reached, but not before. Note that we may have $\eta = \infty$ in which case the process should never be stopped. Formally, one wants to test
\begin{align*}
    & H_{0,n}: X_t \sim \mathbb{P} \hspace{0.5em} \text{ for each } \hspace{0.5em} t \leq n \hspace{0.5em} \text{and some} \hspace{0.5em} \mathbb{P} \in \mathcal{M}_1^+ \hspace{0.5em} \text{versus} \nonumber \\  
    &H_{1,n}: \exists \eta < n \hspace{0.5em} \text{s.t.} \hspace{0.5em} X_t \sim  \begin{cases}
    \mathbb{P} & \text{ if } 1 \leq t \leq \eta \\
    \mathbb{Q} & \text{ if } \eta < t \leq n  
    \end{cases}, \hspace{0.5em} \text{and some} \hspace{0.5em} \mathbb{P}, \mathbb{Q} \in \mathcal{M}_1^+ \hspace{0.5em} \text{where} \hspace{0.5em} \P\neq\Q,
    \end{align*}
for each $n \in \mathbb{N}$, until a local null is rejected. A secondary aim, once a local null has been rejected, is to accurately estimate $\eta$. Solving the aforementioned problems boils down to constructing an extended stopping time $N$, which, in a sense we make precise in Section \ref{section: theoretical results}, is close to $\eta$. Let $\mathcal{F}_t = \sigma \!\left ( X_s \mid s = 1, \dots, t \right )$ be the natural filtration generated by the $X_s$'s up to time $t$ and recall that a random variable $N$ is an extended stopping time if (i) $N$ takes vales in $\mathbb{N} \cup \left \{ \infty \right \}$ and (ii) for each $t \in \mathbb{N}$ the event $\left \{ N \leq t \right \}$ is $\mathcal{F}_t$-measurable. 

Minimizing the distance between $\eta$ and $N$ is analogous to maximizing the power of a particular sequential testing procedure, and it is therefore natural to impose some conditions on the sequential testing analogue of statistical size; we recall the two most frequent ones in the following. In the sequel, let $\mathbb{P}_k$ be the joint distribution of $\{ X_t \}_{t > 0}$ when $\{ \eta = k \}$ ($k \in \mathbb{N}$), and let $\mathbb{E}_k$ be the expectation under this distribution.
\begin{enumerate}
    \item The average run length until a spurious rejection under the global null should be bounded from below by a chosen quantity \citep{lorden70glr}. Specifically, for a given $\gamma > 1$ it should hold that
    \begin{equation}
        \mathbb{E}_\infty \!\left [ N \right ] \geq \gamma. 
        \label{equation: average run length}
    \end{equation}
    \item The uniform false alarm probability should be bounded from above by a chosen quantity \citep{lai98information}. Specifically, for a given $\alpha \in (0,1)$ it should hold that 
    \begin{equation}
        \mathbb{P}_\infty \!\left ( N < \infty \right ) \leq \alpha. 
        \label{equation: false alarm probability}
    \end{equation}
\end{enumerate}
We show in Section \ref{section: theoretical results} that Online RFF-MMD is able to satisfy either  of the conditions \eqref{equation: average run length} or \eqref{equation: false alarm probability}.

\subsection{Fast kernel-based two sample tests} \label{section: kernel two sample tests}

To resolve the change point detection problem (Section~\ref{section: problem statement}), we embed fast two sample tests based on RFF approximations to the MMD into a particular sequential testing scheme. In this section, we briefly recall the MMD statistic and its RFF approximation. 

Let $\H_K$ be a reproducing kernel Hilbert space (RKHS; \citep{aronszajn50theory,steinwart08support}) on $\R^d$ with (reproducing) kernel $K : \R^d \times \R^d \to \R$. Denote by $\text{supp}(\Lambda) = \overline{\{ A \in \sigma ( \mathbb{R}^d ) \mid \Lambda (A) > 0 \}}$ the support of a Borel measure $\Lambda$ on $\mathbb{R}^d$, where $\overline{A}$ denotes the closure of a set $A$ and $\sigma(\R^d)$ the Borel $\sigma$-algebra on $\R^d$. Throughout the article, we make the following assumption on the kernel: 
\begin{assumption} \label{section: kernel assumptions}
The kernel $K: \mathbb{R}^d \times \mathbb{R}^d \rightarrow \mathbb{R}$ is non-negative, continuous, bounded ($\exists B>0 \text{ s.t. } \sup_{\b x \in \mathbb{R}^d} K(\b x, \b x) \leq B$), translation-invariant ($K( \b x, \b y) = \psi (\b x- \b y)$ for some positive definite $\psi : \mathbb{R}^d \rightarrow \mathbb{R}$), and characteristic ($\text{supp}(\Lambda) = \mathbb{R}^d$ with $\psi (\b x) = \int e^{-i\omega\T \b x} \mathrm{d} \Lambda (\omega)$).
\end{assumption}
To simplify exposition, we also assume that $K(\bm0,\bm0) = 1$, which can be achieved by scaling any bounded kernel. The conditions in Assumption \ref{section: kernel assumptions} are satisfied by several commonly used kernels, including the Gaussian kernel, mixtures of Gaussians, inverse multi-quadratic kernels, Matérn kernels, Laplace kernels, or B-spline kernels \citep{sriperumbudur10hilbert}.  Assumption~\ref{section: kernel assumptions} permits to approximate the respective kernel function by finite-dimensional feature maps through Bochner's theorem (elaborated below), which, in turn, allows the effective online estimation of MMD detailed in Section~\ref{section:proposed-algorithm}.

To any $\P \in \mathcal{M}_1^+$, one can associate the kernel mean embedding $\mu_K(\P) \in \H_K$, taking the form\footnote{For $\b x\in\R^d$, $K(\cdot,\b x) : \R^d \to \R$ denotes the map $\b x' \mapsto K(\b x', \b x)$.}
\begin{align}
    \mu_K(\P) = \int_{\R^d}K(\cdot, \b x)\d \P (\b x), \label{eq:mean-embedding}
\end{align}
where the integral is meant in Bochner's sense \citep[Chapter~II.2]{diestel77vector}. The continuity and boundedness assumptions ensure the existence of $\mu_K(\P)$ for any $\P\in \mathcal{M}_1^+$ \citep[Proposition 2]{sriperumbudur10hilbert}. Expression~\eqref{eq:mean-embedding} gives rise to the MMD, which quantifies the distance between two measures $\mathbb{P},\mathbb{Q} \in \mathcal{M}_1^+$ via the distance between their mean embeddings in terms of the RKHS norm, and takes the form
\begin{align}
    \MMD_K \!\left [ \P,\Q \right ] = \norm{\mu_K(\P) - \mu_K(\Q)}{\H_K}. \label{eq:def-mmd}
\end{align}
Crucially, the characteristic assumption ensures that \eqref{eq:mean-embedding} is injective and that the MMD metrizes the space $ \mathcal{M}_1^+$ \citep[Theorem~9]{sriperumbudur10hilbert}, implying $\MMD_K[\P,\Q] = 0$ iff.\ $\P=\Q$. Given samples $X_1,\ldots,X_n \stackrel{\text{i.i.d.}}{\sim} \P$ and  $Y_1,\ldots,Y_m \stackrel{\text{i.i.d.}}{\sim} \Q$ with associated empirical measures $\hat\P_n =: X_{1:n}$ and $\hat\Q_m =: Y_{1:m}$, respectively, the squared plug-in estimator of $\MMD_K [\mathbb{P}, \mathbb{Q}]$ takes the form
\begin{align}
\MoveEqLeft\left(\MMD_K \!\left [ X_{1:n}, Y_{1:m} \right] \right)^2 = \left \| \mu_K \!\left ( \hat{\mathbb{P}}_n \right ) - \mu_K \!\left ( \hat{\mathbb{Q}}_n \right ) \right \|_{\mathcal{H}_K}^2 \hspace{-0.1cm} = \left \| \frac{1}{n} \sum_{i=1}^n K \!\left ( \cdot, X_i \right ) - \frac{1}{m} \sum_{i=1}^{m} K \!\left ( \cdot, Y_i \right ) \right \|_{\mathcal{H}_K}^2 \nonumber \\
&= \frac{1}{n^2} \sum_{i=1}^n \sum_{j=1}^n K \!\left ( X_i , X_j \right ) + \frac{1}{m^2} \sum_{i=1}^m \sum_{j=1}^m K\! \left ( Y_i , Y_j \right )  - \frac{2}{nm} \sum_{i=1}^n \sum_{j=1}^m K\! \left ( X_i, Y_j \right ). 
\label{equation: biased MMD estimate}
\end{align}
The computation of \eqref{equation: biased MMD estimate} costs $\O\!\left((\max(m,n))^2\right)$, rendering its use in an online testing procedure computationally infeasible.

Random Fourier features \citep{rahimi07random,sriperumbudur15optimal} alleviate this bottleneck in the offline setting; we recall the method in the following. For some $\omega \in \mathbb{R}^d$ write $\zeta_\omega (\b x) = e^{-i \omega\T \b x}$, where $i = \sqrt{-1}$. By Bochner's theorem,
\begin{align*}
K \!\left ( \b x,  \b y \right ) = \psi( \b x- \b y) = \int_{\mathbb{R}^d} e^{-i \omega\T ( \b x - \b y)} \mathrm{d} \Lambda \!\left ( \omega \right ) = \mathbb{E}_{\omega \sim \Lambda} \!\left [ \zeta_\omega \left ( \b x \right ) \zeta_\omega \left ( \b y \right )^* \right ], 
\end{align*}
where $^*$ denotes the complex conjugate and, using that $K(0,0) = 1$, one has that $\Lambda \in \mathcal M_1^+$. As $\Lambda$ and $K$ are real-valued,  Euler's identity implies that $\int e^{-i \omega\T ( \b x - \b y)} \mathrm{d} \Lambda \!\left ( \omega \right ) = \int\cos \left ( \omega\T\! \left ( \b x - \b y \right ) \right ) \mathrm{d} \Lambda \!\left ( \omega \right )$.  Therefore, using that $\cos(\alpha - \beta) = \cos\alpha \cos \beta +\sin\alpha\sin \beta$, picking some $r \in \mathbb{N}$, and sampling $\omega_1, \dots, \omega_r \overset{\text{i.i.d.}}{\sim}\Lambda$, a low variance estimator for $K(\b x, \b y)$ is given by
\begin{align}
\hat{K} \!\left ( \b x, \b y \right ) &:= \ip{\hat{z}_K (\b x), \hat{z}_K ( \b y)}{}, \text{  
 where  } \hat{z}_K (\b x) = \frac{1}{\sqrt{r}} \left ( (\sin (\omega_j\T \b x), \cos (\omega_j\T \b x))\right )_{j=1}^r \in \R^{2r}. 
\label{equation: random Fourier feture kernel approx}
\end{align}

By noting that $\hat{K} : \mathbb{R}^d \times \mathbb{R}^d \rightarrow \mathbb{R}$ is the kernel associated with the RKHS $\mathcal{H}_{\hat{K}} = \R^{2r}$, we may approximate \eqref{equation: biased MMD estimate} by
\begin{align}
\MMD_{\hat{K}} \!\left [ X_{1:n}, Y_{1:m} \right]  & = \left \| \mu_{\hat{K}} \!\left ( \hat{\mathbb{P}}_n \right ) - \mu_{\hat{K}} \!\left ( \hat{\mathbb{Q}}_n \right ) \right \|_{\mathcal{H}_{\hat{K}}} =  \left \| \frac{1}{n} \sum_{i=1}^n \hat{z}_{K} \!\left ( X_i \right ) - \frac{1}{m} \sum_{i=1}^{m} \hat{z}_K\! \left ( Y_i \right ) \right \|_2 . 
\label{equation: random Fourier MMD}
\end{align}
Importantly, as the mean embeddings in \eqref{equation: random Fourier MMD} are Euclidean vectors, their distance can be computed with the standard Euclidean norm. This leads to a statistic which can computed in linear time and updated in constant time, allowing its use for sequential testing. 

\section{Online change point detection via random Fourier features}
\label{section:proposed-algorithm}

In this section, we present our proposed stopping time for resolving the change point detection problem. In particular, we give a precise definition in Section \ref{section: RFF-MMD stopping time}, an efficient algorithm in Section \ref{sec:algorithm}, and theoretical guarantees in Section \ref{section: theoretical results}.

\subsection{The RFF-MMD stopping time} \label{section: RFF-MMD stopping time}

The intuitive construction of our RFF-MMD stopping time is as follows. We begin by choosing an $r\in\mathbb N$ and a kernel $K$, and construct its RFF approximation \eqref{equation: random Fourier feture kernel approx} using $r$ random features. For every $n \geq 2 $, having observed data $\left \{ X_1, \dots, X_n \right \}$, we consider $\log_2 n$ possible sample splits of the domain $\{ 1, \dots, n \}$ at locations $n-2^j$ with $j = 0, \dots, \left \lfloor \log_2n\right \rfloor -1$. For every such split, we approximate the MMD between the two samples using \eqref{equation: random Fourier MMD}. A change is declared at the first $n$ for which at least one such statistic, appropriately normalized so that it is $\mathcal{O}_{P}(1)$ under its local null, is larger than a given threshold. Formally, we have
\begin{equation}
N = \inf \left \{ n \geq 2 \mid \bigcup_{j=0}^{\left \lfloor \log_2 (n) \right \rfloor - 1} \sqrt{\frac{2^j(n-2^j)}{n}} \MMD_{\hat{K}} \! \left [ X_{1:(n-2^j)}, X_{(n-2^j+1):n} \right ] > \lambda_{n} \right \},
\label{equation: MMD stop time}
\end{equation}
where $ \{ \lambda_n \mid n \in \mathbb{N} \}$ is a non-decreasing sequence that we make precise in Section \ref{section: theoretical results}, taking requirements \eqref{equation: average run length} or \eqref{equation: false alarm probability} into account.

The first use of an exponential grid in online change point detection appears to be due to \citet{lai95sequential}. Recently, similar techniques have been used by \citet{yu20review,kalinke25maximum,moen25general}. The dyadic grid used in (\ref{equation: MMD stop time}) has two advantages. First, only a logarithmic number of tests must be performed with each new observation. Second, the grid is sufficiently dense, so the obtained stopping time has essentially the same behavior as the computationally infeasible variant, which considers every possible candidate change point location. 

\subsection{The RFF-MMD algorithm}
\label{sec:algorithm}

We now present an efficient implementation of \eqref{equation: MMD stop time} and analyze its runtime and space complexity. We show the pseudo code of our proposed method in Algorithm~\ref{alg:main-algorithm}; see also Example~\ref{example:alg-details} and Figure~\ref{fig:algorithm-description} for a summary. The details are as follows. For each new observation $X_t$, we create a new window $W$, storing $z=\hat z_K(X_t)$ and $c=1$ (Lines \ref{alg:line:store-featuremap}--\ref{alg:line:store-count}). The window $W$ is then added to the list of all windows $\mathcal W$ (Line~\ref{alg:line:save-window}). The remaining algorithm has two main parts.

\begin{enumerate}
    \item \tb{Change point detection.} To detect changes, we iterate all $|\mathcal W|-1$ dyadic points $i$ (Line~\ref{alg:line:cd-loop}), and, for each $i$, merge the feature maps coming before $i$ and coming after $i$ (along with their counts) to compute the MMD statistic (Lines~\ref{alg:line:merge-c1}--\ref{alg:line:compute-MMD}). If the statistic exceeds the threshold (Line~\ref{alg:line:threshold}; see Section~\ref{section: theoretical results} for its value), a change is flagged and we drop the data coming before the change.

    \item \tb{Structure maintenance.} To set up and maintain the dyadic structure, we merge windows that have the same counts (Line~\ref{alg:line:merge-same-counts}) by first summing their $z$-s and their $c$-s (Lines~\ref{alg:line:summing-1}--\ref{alg:line:summing-2}), and then replacing them in the list of windows $\mathcal W$ accordingly (Line~\ref{alg:line:replace}). We note that $\text{pop}$ removed the windows beforehand.

\end{enumerate}
\begin{algorithm}
    \caption{Online RFF-MMD change point detection}
    \label{alg:main-algorithm}
    \algrenewcommand\algorithmicrequire{\textbf{Input:}}
    \algrenewcommand\algorithmicensure{\textbf{Output:}}
    \begin{algorithmic}[1] %
    \Require Stream $X_1,X_2,\ldots$ and a sequence of thresholds $\{ \lambda_t \mid t \in \mathbb{N} \}$. 
    \Ensure Changepoint location and detection time. 
    \State $\mathcal W \gets \text{empty list}$
    \For{$X_t \in X_1,X_2,\ldots$} \Comment{Main loop}
        \State $W.z \gets \hat z_K (X_t)$ \label{alg:line:store-featuremap}
        \State $W.c \gets 1$ \label{alg:line:store-count}
        \State $\mathcal W \gets \mathcal W\text{.append}(W)$ \label{alg:line:save-window}
        \For{$i \in 1,\ldots,|\mathcal W|-1$} \Comment{Detect changes} \label{alg:line:cd-loop}
            \State $c_1 \gets  \sum_{j=i+1}^{|\mathcal W|}\mathcal W_j.c$ \label{alg:line:merge-c1}
            \State $c_2 \gets \sum_{j=1}^{i}\mathcal W_j.c$ 
            \State $\MMD_{\hat K} \gets \norm{\frac{1}{c_1}\sum_{j=i+1}^{|\mathcal W|}\mathcal W_j.z - \frac{1}{c_2}\sum_{j=1}^i\mathcal W_j.z}{2}$ \label{alg:line:compute-MMD}
            \If{$ \sqrt{\frac{c_1c_2}{c_1+c_2}}\MMD_{\hat K} > \lambda_{t}$} \label{alg:line:threshold}
                \State $\text{\tb{print} Change detected at element } X_t \text{; most likely at position } i$. 
                \State \textbf{return} 
            \EndIf
        \EndFor
        \While{$|\mathcal W| \ge 2$} \Comment{Maintain exponential structure} \label{alg:line:maintain}
            \State $W_1 \gets \text{pop } \mathcal W$ 
            \State $W_2 \gets \text{pop } \mathcal W$ 
            \If{$W_1.c = W_2.c$} \label{alg:line:merge-same-counts}
                \State $W.c \gets W_1.c + W_2.c$ \label{alg:line:summing-1}
                \State $W.z \gets W_1.z + W_2.z$ \label{alg:line:summing-2}
                \State $\mathcal W \gets \mathcal W\text{.append}(W)$ \label{alg:line:replace}
            \Else
                \State $\mathcal W \gets \mathcal W\text{.append}(W_1)\text{.append}(W_2)$
                \State \tb{break}
            \EndIf
        \EndWhile
    \EndFor
    \end{algorithmic}
\end{algorithm}

We now analyze the runtime and space complexity of Online RFF-MMD. For each insert operation, Algorithm~\ref{alg:main-algorithm} performs three steps, which we analyze independently.
\begin{enumerate}
    \item \tb{Setup.} The computation of $\hat z_K(X_t)$, defined in \eqref{equation: random Fourier feture kernel approx}, requires computing $2r$ trigonometric functions of $d$-dimensional inner products and thus is in $\O(rd)$.
    \item \tb{Change point detection.} The dominating term is computing $\MMD_{\hat K}$, which requires $\O(|\mathcal W|r)$ computations. Repeating the computation $|\mathcal W|$ times leads to a cost of $\O(|\mathcal W|^2r)$. The calculation of the threshold is in $\O(1)$, which gives an overall cost of $\O(|\mathcal W|^2r)$. However, we note that memoization of all sums allows to implement the change point detection in a single sweep over $\mathcal W$ (at each step, the attributes of one $W \in \mathcal W$ are subtracted from one sum and added to another sum) and thereby reduces the runtime complexity to $\O(|\mathcal W|r)$. 
    \item \tb{Maintenance.} In the worst case, $\O( | \mathcal W|)$ merge operations need to be performed. Each merge requires $\O(r)$ operations, which yields a total cost of $\O(|\mathcal W|r)$.
\end{enumerate}

Adding the results obtained in steps 1.--3. shows that the algorithm has an overall runtime complexity of $\O(|\mathcal W|r) = \O(r\log n)$ per insert operation.
As the algorithm stores, for each $W\in\mathcal W$, a number ($W.c$) and a vector ($W.z \in \R^{2r}$), the total space complexity when having observed $n$ samples is $\O(|\mathcal W|r) = \O(r\log n)$. We note that $r$ is a fixed parameter in practice and thus constant.

The following Example~\ref{example:alg-details} and the corresponding Figure~\ref{fig:algorithm-description} illustrate how Algorithm~\ref{alg:main-algorithm} operates upon observing the first $6$ samples $X_1,\ldots,X_6$.

\begin{example}\label{example:alg-details}    
When observing the first element $X_1$, the algorithm creates a new window $W$, storing the feature map $\hat z_K(X_1)$ and that $W$ has one element ($c=1$). Similarly, when observing $X_2$, the algorithm creates a new window $W'$, storing $\hat z_K(X_2)$ and $c=1$. As both windows, $W$ and $W'$, have the same counts, the algorithm merges them into a new window $W$, storing $\hat z_K(X_1) + \hat z_K(X_2)$ and $c=2$, thereby maintaining the dyadic structure. The algorithm proceeds in this manner, resulting in the construction of the dyadic grid outlined in Section~\ref{section: RFF-MMD stopping time}. For example, when observing $X_6$, there are again two windows of size $1$, which the algorithm merges to store a total of two windows, one capturing $X_1,\ldots,X_4$ and the other one capturing $X_5,X_6$. We recall that the observations themselves are never stored explicitly.
\begin{figure}[]
    \center
    \includegraphics[width=.8\linewidth]{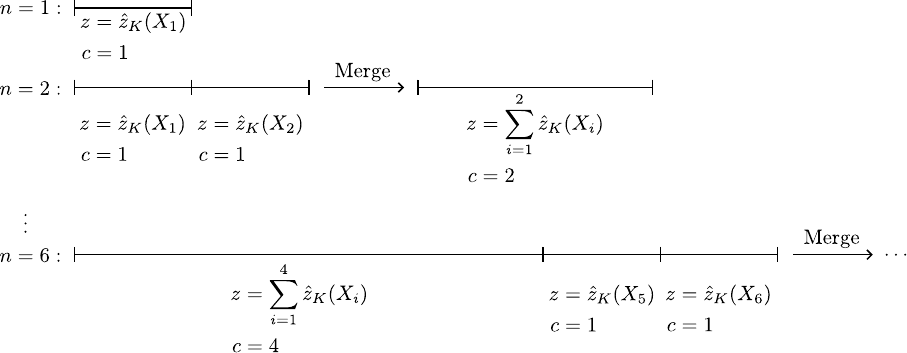}
    \caption{Schematic representation of the proposed algorithm upon observing the first $n=6$ elements. Merging equal sized ``windows'' yields the division along dyadic points.}
    \label{fig:algorithm-description}
\end{figure}
\end{example}

Before presenting out theoretical results in the following section, we emphasize that Algorithm~\ref{alg:main-algorithm} matches \eqref{equation: MMD stop time}, that is, there is no approximation error.

\subsection{Theoretical results} \label{section: theoretical results}

In this section, we analyze the theoretical behavior of the RFF-MMD algorithm. We first study the behavior of the stopping time defined in (\ref{equation: MMD stop time}) under the global null of no change. Theorems \ref{theorem: average run length} and \ref{theorem: uniform false discovery control} show that with an appropriately chosen sequence of thresholds the stopping time can be made to attain, respectively, a desired average run length \eqref{equation: average run length} or a desired uniform false alarm probability \eqref{equation: false alarm probability}. 

\begin{theorem}
Let $N$ be the extended stopping time defined via (\ref{equation: MMD stop time}). For any $\gamma > 1$, if the sequence of thresholds satisfies $\lambda_n \geq \sqrt{2} + \sqrt{2 \log \left ( 4 \gamma \log_2 \left ( 2 \gamma \right ) \right )}$ for all $n \in \mathbb{N}$, it holds that $\mathbb{E}_\infty\! \left [ N \right ] \geq \gamma$.
\label{theorem: average run length}
\end{theorem}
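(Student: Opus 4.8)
The plan is to control, uniformly over the dyadic grid and over a time window whose length is proportional to $\gamma$, the probability that any single normalized statistic exceeds the threshold under the global null $\P_\infty$, and then to convert this into the desired lower bound on the average run length. First I would record the one structural fact that drives the argument: the random feature map satisfies $\norm{\hat z_K(\b x)}{2} = 1$ for every $\b x$, since \eqref{equation: random Fourier feture kernel approx} gives $\norm{\hat z_K(\b x)}{2}^2 = \frac{1}{r}\sum_{j=1}^r(\sin^2(\omega_j\T\b x)+\cos^2(\omega_j\T\b x)) = 1$ for any realization of the features. Hence each summand of the RFF mean embedding is a deterministically bounded vector in $\R^{2r}$, and under $\P_\infty$ the two half-samples $X_{1:(n-2^j)}$ and $X_{(n-2^j+1):n}$ are independent samples from the same law $\P$.

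Next, fixing $n$ and $j$, I would write $m = n-2^j$, $k = 2^j$, and set $S = \sqrt{\frac{mk}{m+k}}\bigl(\frac{1}{m}\sum_{i}\hat z_K(X_i) - \frac{1}{k}\sum_{i}\hat z_K(X_i)\bigr)$, so that the $j$-th statistic at time $n$ equals $\norm{S}{2}$. I would then apply the bounded-differences (McDiarmid) inequality to the real-valued map $\norm{S}{2}$, viewed as a function of the $m+k$ independent data points. Changing one point in the first group perturbs $S$ in norm by at most $\frac{2}{m}\sqrt{\frac{mk}{m+k}}$ and one in the second group by at most $\frac{2}{k}\sqrt{\frac{mk}{m+k}}$, so the sum of squared perturbations is $m\cdot\frac{4k}{m(m+k)} + k\cdot\frac{4m}{k(m+k)} = 4$, independent of the split. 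A second-moment computation gives $\E_\infty\norm{S}{2}^2 = 1 - \norm{\mu_{\hat K}(\P)}{2}^2 \le 1$, whence $\E_\infty\norm{S}{2} \le \sqrt{2}$. McDiarmid then yields the split-free tail bound $\P_\infty(\norm{S}{2} > \lambda) \le \exp\!\bigl(-(\lambda-\sqrt{2})^2/2\bigr)$ for all $\lambda \ge \sqrt{2}$.

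Finally, with $\lambda_n \ge \sqrt{2} + \sqrt{2\log(4\gamma\log_2(2\gamma))}$, each individual test has false-alarm probability at most $1/(4\gamma\log_2(2\gamma))$. A union bound over the at most $\log_2 n$ values of $j$ and over $2 \le n \le 2\gamma$ involves at most $2\gamma\log_2(2\gamma)$ tests, giving $\P_\infty(N \le 2\gamma) \le 1/2$. The claim then follows from the elementary bound $\E_\infty[N] \ge 2\gamma\,\P_\infty(N > 2\gamma) \ge 2\gamma\cdot\tfrac{1}{2} = \gamma$.

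I expect the main obstacle to be the second step: obtaining a sub-Gaussian tail bound for the normalized statistic that is uniform in the split sizes $m,k$ and free of the ambient dimension $2r$. The exact cancellation that makes the sum of squared bounded differences equal $4$ regardless of the split is precisely what renders the per-test bound independent of $n$ and $j$, and this uniformity is exactly what the subsequent union bound requires. Once it is in hand, the remaining bookkeeping—choosing the window length $2\gamma$ and matching the constants in $4\gamma\log_2(2\gamma)$—is routine.
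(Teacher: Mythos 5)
Your proposal is correct and follows essentially the same route as the paper: a per-split sub-Gaussian tail bound obtained from McDiarmid's inequality (with the same split-free variance proxy $\nu=1$ coming from bounded-difference constants $2/m$ and $2/k$) together with the expectation bound $\sqrt{2}$, then a union bound over the at most $2\gamma\log_2(2\gamma)$ tests in $[2,2\gamma]$ and the total-expectation inequality $\mathbb{E}_\infty[N]\geq 2\gamma\,\mathbb{P}_\infty(N>2\gamma)$. The only cosmetic differences are that you compute the bounded differences directly on the Euclidean norm of the feature-mean difference rather than via the RKHS dual representation, and that you should state explicitly that the concentration step is carried out conditionally on $\omega_1,\dots,\omega_r$ and then integrated out, as the paper does.
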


\begin{theorem}
Let $N$ be the extended stopping time defined via (\ref{equation: MMD stop time}). For any $\alpha \in (0,1)$, if the sequence of thresholds satisfies $\lambda_n \geq \sqrt{2} + \sqrt{2 \left ( \log(n / \alpha) + 2 \log \left ( \log_2 (n) \right ) + \log \left ( \log_2 (2n) \right ) \right )}$ for each $n \in \mathbb{N}$, it holds that $\mathbb{P}_\infty\! \left ( N < \infty \right ) \leq \alpha$. 
\label{theorem: uniform false discovery control}
\end{theorem}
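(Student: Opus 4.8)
The plan is to bound $\P_\infty(N<\infty)$ by a union bound over all candidate detection times $n$ and all dyadic split points $j$, reduce the problem to a null tail bound for a single normalised two-sample statistic, and then choose $\lambda_n$ so that the resulting doubly-indexed series sums to at most $\alpha$. Since $N<\infty$ forces the rule in \eqref{equation: MMD stop time} to fire at some $n\ge 2$ and some $1\le j\le \lfloor\log_2 n\rfloor-1$, a union bound gives
\begin{equation*}
\P_\infty(N<\infty) \le \sum_{n\ge 2}\ \sum_{j=1}^{\lfloor \log_2 n\rfloor-1}\P_\infty\!\left(\sqrt{\tfrac{2^j(n-2^j)}{n}}\,\MMD_{\hat K}\!\left[X_{1:(n-2^j)},X_{(n-2^j+1):n}\right] > \lambda_n\right).
\end{equation*}
Under the global null every $X_t$ is i.i.d.\ $\sim\P$, so for each fixed $(n,j)$ the two blocks are independent samples from a common law and the statistic carries no signal; it therefore suffices to bound its null tail. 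Note that cross-$(n,j)$ dependence is irrelevant, as the union bound handles it.

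For the single-statistic bound I would exploit that the RFF feature map has constant norm, $\norm{\hat z_K(\b x)}{2}=1$ for every $\b x$, which follows directly from \eqref{equation: random Fourier feture kernel approx}. Writing $m=n-2^j$, $\ell=2^j$ and $S=\sqrt{\tfrac{m\ell}{m+\ell}}\bigl(\tfrac1m\sum_{i\le m}\hat z_K(X_i)-\tfrac1\ell\sum_{i>m}\hat z_K(X_i)\bigr)$, the statistic equals $\norm{S}{2}$. Viewed as a function of the independent variables $X_1,\dots,X_n$, replacing one observation changes $\norm{S}{2}$ by at most $\sqrt{\tfrac{m\ell}{m+\ell}}\tfrac2m$ (first block) or $\sqrt{\tfrac{m\ell}{m+\ell}}\tfrac2\ell$ (second block), and the sum of squared increments is exactly $4$. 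McDiarmid's bounded-differences inequality then yields the sub-Gaussian tail $\P_\infty(\norm{S}{2}-\E_\infty\norm{S}{2}\ge t)\le e^{-t^2/2}$. A direct second-moment computation, using $\E_\infty\norm{\hat z_K(X)}{2}^2=1$ and independence of the blocks, gives $\E_\infty\norm{S}{2}^2\le 1$, hence $\E_\infty\norm{S}{2}\le\sqrt2$. Since $\norm{\hat z_K(\cdot)}{2}=1$ holds for every realisation of the random frequencies $\omega_{1:r}$, these bounds are uniform in $\omega_{1:r}$ and hold unconditionally. Combining, with $t=\sqrt{2u}$,
\begin{equation*}
\P_\infty\!\left(\norm{S}{2}>\sqrt2+\sqrt{2u}\right)\le e^{-u}.
\end{equation*}
Substituting the stated $\lambda_n$, i.e.\ taking $u=u_n:=\log(n/\alpha)+2\log(\log_2 n)+\log(\log_2(2n))$, gives the per-$(n,j)$ bound $e^{-u_n}=\alpha\bigl/\bigl(n(\log_2 n)^2\log_2(2n)\bigr)$.

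It remains to sum. The inner union ranges over at most $\log_2 n$ values of $j$, cancelling one factor of $\log_2 n$ and leaving
\begin{equation*}
\P_\infty(N<\infty)\le \alpha\sum_{n\ge 4}\frac{1}{n\,\log_2 n\,\log_2(2n)},
\end{equation*}
the series starting at $n=4$ since the union is empty for $n\le 3$. I expect this estimate to be the crux: one must show the series is at most $1$, which is exactly what the logarithmic corrections in $\lambda_n$ are engineered to guarantee. Using $\log_2(2n)=1+\log_2 n$ and comparing the decreasing summand to $\int_{4}^\infty \tfrac{dx}{x\log_2 x\,(1+\log_2 x)}$ via the substitution $u=\log_2 x$ (which integrates to $\ln2\,\ln(3/2)<1$), together with the leading term $\tfrac1{24}$, bounds the sum well below $1$, so that $\P_\infty(N<\infty)\le\alpha$. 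The main obstacle is thus not the concentration step---made routine by the unit-norm feature map and McDiarmid---but the careful verification that the weighted dyadic-grid series is controlled by the chosen threshold sequence.
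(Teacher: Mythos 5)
Your proof is correct and follows essentially the same route as the paper's: a union bound over the dyadic grid of candidate splits combined with the sub-Gaussian null tail $\P_\infty\!\left(\hat{M}_{n,j} > \sqrt{2} + \sqrt{2u}\right) \leq e^{-u}$, which you re-derive exactly as in Lemma~\ref{lemma: Fourier MMD exponential bound} (unit-norm feature map, bounded differences with total squared increment $4$, a second-moment bound on the conditional mean, all uniform in the random frequencies). The only cosmetic difference is in the final summation: the paper organizes the union over $n$ by dyadic peeling so the series collapses to $\alpha \sum_{l \geq 1} l^{-1}(l+1)^{-1} = \alpha$ exactly, whereas you sum term by term and verify $\sum_{n \geq 4} \left(n \log_2(n) \log_2(2n)\right)^{-1} < 1$ by integral comparison; both verifications are valid.
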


We emphasize that these guarantees do not depend on the number of random features used in constructing (\ref{equation: random Fourier feture kernel approx}) and the bounds on the threshold sequences do not require any knowledge of the pre-change distribution.

Next, we study the detection delay incurred by (\ref{equation: MMD stop time}) when the threshold sequence is chosen to control the uniform false alarm probability at some level $\alpha \in (0,1)$. In the following, we assume that the data take values in a compact subset of $\mathbb{R}^d$, and denote the Lebesgue measure of a set by $| \cdot |$. The following result shows that with high probability, provided the number of RFFs is chosen sufficiently large, the detection delay incurred by \eqref{equation: MMD stop time} is bounded from above by a quantity depending only on the chosen $\alpha$, the number of pre-change observations, and the squared MMD between the pre- and post-change distributions.

\begin{theorem}
Let $N$ be the extended stopping time defined via (\ref{equation: MMD stop time}) with threshold sequence $\left \{ \lambda_n \mid n \in \mathbb{N} \right \}$ defined as in Theorem \ref{theorem: uniform false discovery control} for a chosen $\alpha \in (0,1)$. If $\operatorname{supp} \left ( \mathbb{P} \right ) \cup \operatorname{supp} \left ( \mathbb{Q} \right ) \subseteq \mathcal{X}$ for some compact set $\mathcal{X} \subset \mathbb{R}^d$, the quantities $\eta$, $\alpha$, and $\MMD_K \!\left [ \mathbb{P}, \mathbb{Q} \right ]$ jointly satisfy
\begin{align}
\eta \geq C_1 \frac{\log \left ( 2 \eta / \alpha \right )}{ \left( \MMD_K \!\left [ \mathbb{P}, \mathbb{Q} \right ]\right)^2}, 
\label{equation: signal strength condition I}
\end{align}
and the number of random features in \eqref{equation: random Fourier MMD} is chosen so that
\begin{align}
\sqrt{r} \geq C_2 \frac{h \!\left ( d , \left | \mathcal{X} \right |, \sigma \right ) + \sqrt{2 \log \left ( 2 / \alpha \right )}}{ \left( \MMD_K \!\left [ \mathbb{P}, \mathbb{Q} \right ]\right)^2},
\label{equation: no. features condition I}
\end{align}
then with probability at least $1 - \alpha$, it holds that
\begin{align}
\left ( N - \eta \right )^+ \leq 1 \vee C_3 \frac{\log \left ( 2 \eta / \alpha \right )}{ \left ( \MMD_K\! \left [ \mathbb{P}, \mathbb{Q} \right ] \right)^2 }, 
\label{equation: RFF-MMD delay}
\end{align}
where $C_1$, $C_2$, and $C_3$ are absolute constants independent of $\eta$, $\alpha$, and $\MMD_K \!\left [ \mathbb{P}, \mathbb{Q} \right ]$, and, with $\sigma^2 = \int_{\mathbb{R}^d} \left \| \omega \right \|_2^2 \mathrm{d} \Lambda \left ( \omega \right )$, we have put  
\begin{align*}
    h \! \left ( d , \left | \mathcal{X} \right |, \sigma \right ) = 23 \sqrt{2d \log \left ( 2 \left | \mathcal{X} \right | + 1 \right )} + 32 \sqrt{2d \log \left ( \sigma + 1 \right )} + 16 \sqrt{2d \left [ \log \left ( 2 \left | \mathcal{X} \right | + 1 \right ) \right ]^{-1}},
\end{align*}
which is likewise independent of $\eta$, $\alpha$, and $\MMD_K \!\left [ \mathbb{P}, \mathbb{Q} \right ]$. 
\label{theorem: high prob detection delay}
\end{theorem}

Condition \eqref{equation: signal strength condition I} can be interpreted as a signal strength requirement, measuring the strength according to the number of observations from the pre-change distribution and the squared MMD between $\mathbb{P}$ and $\mathbb{Q}$. The term $\log ( 2 \eta / \alpha )$ reflects the cost of multiple testing when the data are drawn from $\mathbb{P}$. Such requirements are unavoidable from the minimax perspective in the corresponding offline problem \citep{yu20review,padilla21optimal,padilla23change}, and the discussion in \citet[Section 4.1]{yu23note} suggests that the same is true for genuinely online change point problems. 

The requirement on the number of RFFs in \eqref{equation: no. features condition I} depends on $\MMD_K\! \left [ \mathbb{P}, \mathbb{Q} \right ]$, which is unknown in practice. However, if one assumes an asymptotic setting with a fixed distance between $\mathbb{P}$ and $\mathbb{Q}$, and $\alpha \downarrow 0$, then \eqref{equation: no. features condition I} suggests that the number of RFFs should be chosen as $r = \Theta ( \log 1/\alpha  )$. To put this result into perspective, we compare it to online change procedures having a window. Here, the optimal window length also depends on the distance between the pre- and post-change distributions \citep{li19scanbstatistics,wei22online}. However, choosing the window larger than this quantity can lead to an increase in the detection delay. This is not the case for RFF-MMD: in practice, one may choose the number of RFFs as large as possible subject to computational constraints, and choosing a larger number of RFFs does not negatively impact the detection delay. 

\section{Minimax optimality of RFF-MMD} \label{section: minimax optimality}

Recall that with the conditions of Assumption~\ref{section: kernel assumptions}, the underlying kernel is characteristic and $\MMD_K$ metrizes the space $\mathcal{M}_1^+$. Therefore, $\MMD_K [\mathbb{P}, \mathbb{Q}] > 0$ for any $\mathbb{P} \neq \mathbb{Q}$ and we have that \eqref{equation: RFF-MMD delay} guarantees that our stopping time \eqref{equation: MMD stop time} obtains a finite detection delay, with high probability for any fixed alternative. Still, one may ask whether the detection delay is optimal. The following theorem resolves this question and shows that the detection delay of RFF-MMD is essentially optimal from a minimax perspective, up to logarithmic terms.

\begin{theorem} \label{thm:minimax}
For every kernel $K : \mathbb{R}^d \times \mathbb{R}^d \rightarrow \mathbb{R}$ satisfying Assumption~\ref{section: kernel assumptions} there is a constant $C_K$ depending only on $K$ and absolute constants $\alpha_0, \beta_0  \in (0,1)$ independent of $K$, such that for any $\alpha \leq \alpha_0$ it holds that
\begin{equation}
\inf_{N \, : \, \mathbb{P}_\infty \left ( N < \infty \right ) \leq \alpha} \sup_{\substack{\eta > 1 \\ \mathbb{P},\mathbb{Q} \in \mathcal{M}_1^+}} \mathbb{P}_\eta\! \left ( N \geq \eta + C_K \frac{\log \left ( 1 / \alpha \right )}{\left(\MMD_K\! \left [ \mathbb{P} , \mathbb{Q} \right ]\right)^2 } \right ) \geq \beta_0
\label{equation: minimax event}
\end{equation}
with the infimum being over all extended stopping times. 
\label{theorem: minimax}
\end{theorem}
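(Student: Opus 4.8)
The plan is to prove a matching lower bound via a change-of-measure (Wald likelihood-ratio) argument of the type pioneered by \citet{lai98information}, reducing the problem to a single hard instance. Since the statement is a supremum over $(\eta,\P,\Q)$, it suffices to exhibit, for each $\alpha\le\alpha_0$, one pair $\P\neq\Q$ and one change time $\eta$ (say $\eta=2$) for which every stopping time obeying $\P_\infty(N<\infty)\le\alpha$ satisfies $\P_\eta(N\ge\eta+m)\ge\beta_0$, with $m\ge C_K\log(1/\alpha)/(\MMD_K[\P,\Q])^2$. The kernel only enters through the construction of this instance; the delay bound itself is driven by the Kullback--Leibler divergence $D:=D_{\mathrm{KL}}(\Q\,\|\,\P)$, so the crux is to build an instance in which $D$ is controlled by $(\MMD_K[\P,\Q])^2$ up to a $K$-dependent constant.

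For the change-of-measure step, write $Z_t=\log(\mathrm{d}\Q/\mathrm{d}\P)(X_t)$ and $S_n=\sum_{t=\eta+1}^{n}Z_t$, so that $\exp(S_n)$ is the likelihood ratio of $\P_\eta$ against $\P_\infty$ on $\mathcal F_n$ for $n>\eta$. I would first split $\P_\eta(N<\eta+m)=\P_\eta(N<\eta)+\P_\eta(\eta\le N<\eta+m)$, bounding the first term by $\P_\infty(N<\infty)\le\alpha$ since $\{N<\eta\}\in\mathcal F_{\eta-1}$, on which the two measures agree. For the second term, Wald's likelihood-ratio identity gives $\P_\eta(\eta\le N<\eta+m)=\E_\infty[\exp(S_N);\,\eta\le N<\eta+m]$. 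Splitting at the level $b=\log(1/(4\alpha))$ yields $\E_\infty[\exp(S_N);\,S_N\le b,\dots]\le e^b\P_\infty(N<\infty)\le e^b\alpha=1/4$, while the complementary piece equals $\P_\eta(S_N>b,\,\eta\le N<\eta+m)\le\P_\eta(\max_{\eta<n\le\eta+m}S_n>b)$ after changing measure back. Choosing $m$ so that $mD\le b/2$, the increments $Z_t$ are bounded (by the construction below), so an Azuma/Doob maximal inequality makes this last probability at most $1/4$ once $\alpha\le\alpha_0$. Altogether $\P_\eta(N<\eta+m)\le\alpha_0+1/2\le 3/4$, i.e.\ $\P_\eta(N\ge\eta+m)\ge 1/4=:\beta_0$, with $m\asymp\log(1/\alpha)/D$; crucially, the exponential smallness of the false-alarm budget is what produces the $\log(1/\alpha)$ factor (a cruder Pinsker bound would only yield $m\asymp 1/D$).

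To build the instance I would let $\P$ have a density $p$ (e.g.\ uniform on a ball in $\R^d$) and set $\mathrm{d}\Q=(1+\epsilon g)\,\mathrm{d}\P$ for a fixed bounded $g$ with $\int g\,\mathrm{d}\P=0$ and a small $\epsilon>0$ keeping $1+\epsilon g\ge 0$. Then a Taylor expansion (or the exact identity for the $\chi^2$-divergence, which dominates $D$) gives $D\le\epsilon^2\|g\|_{L^2(\P)}^2$, while directly from \eqref{eq:def-mmd}, $(\MMD_K[\P,\Q])^2=\epsilon^2\langle g,\Sigma_K g\rangle_{L^2(\P)}$, where $\Sigma_K$ is the kernel integral operator $(\Sigma_K g)(\cdot)=\int K(\cdot,\b x)g(\b x)\,\mathrm{d}\P(\b x)$. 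Choosing $g$ to be a top eigenfunction of $\Sigma_K$ (bounded since $K$ is continuous and $\P$ compactly supported) gives $\langle g,\Sigma_K g\rangle=\lambda_1\|g\|_{L^2(\P)}^2$, so $D\le(1/\lambda_1)(\MMD_K[\P,\Q])^2$. Because $p$ and $g$ are fixed functions of $K$ alone, the resulting constant $C_K^{\mathrm{KL}}:=1/\lambda_1$ depends only on $K$; setting $C_K=1/(4C_K^{\mathrm{KL}})$ then converts $m\asymp\log(1/\alpha)/D$ into the claimed threshold $C_K\log(1/\alpha)/(\MMD_K[\P,\Q])^2$. The smallness of $\epsilon$ both validates the density and guarantees $|Z_t|\le 2\epsilon$, as used above.

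The main obstacle is the comparison $D_{\mathrm{KL}}(\Q\,\|\,\P)\le C_K^{\mathrm{KL}}(\MMD_K[\P,\Q])^2$: this is where the kernel geometry enters, and it must be arranged so that $C_K^{\mathrm{KL}}$ depends on $K$ only. Concretely, one must verify that $\Sigma_K$ has a strictly positive top eigenvalue (which holds whenever $\psi\not\equiv 0$, so that $\MMD_K$ is non-degenerate), that the chosen eigenfunction is bounded so the perturbed density and the log-likelihood increments are well-behaved, and that the resulting $m$ simultaneously satisfies $mD\le b/2$ (needed for detection to fail) and $m\ge C_K\log(1/\alpha)/(\MMD_K[\P,\Q])^2$ (needed to match the stated threshold). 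The remaining ingredients---Wald's identity, the $e^b$ split, and the maximal inequality---are routine once the increments are bounded.
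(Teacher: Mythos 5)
Your proposal is correct and its information-theoretic core coincides with the paper's: both reduce the minimax bound to a Kullback--Leibler lower bound via the Lai-style change-of-measure argument (split the likelihood ratio at a level of order $\log(1/\alpha)$, bound one piece by $e^b\P_\infty(N<\infty)$, and control the other by a concentration bound on the partial sums of the log-likelihood increments; the paper uses a union bound plus Hoeffding where you invoke a maximal inequality, which is an immaterial difference). Where you genuinely diverge is in the construction of the least-favorable pair satisfying $\mathrm{KL}(\Q\,\|\,\P)\leq C_K^{-1}(\MMD_K[\P,\Q])^2$. The paper takes $\P,\Q$ to be two-point measures on $\{\boldsymbol 0,\boldsymbol 1\}$ with mixing weights $p,q\in[1/4,3/4]$, $q-p\geq 1/4$, for which both the squared MMD and the KL divergence reduce to explicit elementary expressions in $(p-q)^2$, yielding $C_K\propto K(\boldsymbol 0,\boldsymbol 0)-K(\boldsymbol 1,\boldsymbol 0)$ and an $\alpha_0$ that is absolute because the KL divergence is bounded below over that family. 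Your spectral perturbation $\d\Q=(1+\epsilon g)\,\d\P$ along a top eigenfunction of the integral operator $\Sigma_K$ is more canonical --- it identifies the hardest direction and gives $C_K\asymp\lambda_1$ --- at the cost of extra verifications (boundedness and strict positivity of the top eigenpair, validity of the perturbed density, and the fact that $\alpha_0$ remains independent of $K$ after normalizing $\|g\|_{L^2(\P)}=1$ so that $\epsilon$ cancels in the exponent of the maximal inequality), all of which go through but which you should spell out; the paper's two-point construction buys fully explicit constants with no spectral theory. One small point to make explicit in your write-up: the inequality $D\leq\lambda_1^{-1}(\MMD_K[\P,\Q])^2$ goes in the right direction precisely because the undetectable horizon $m\asymp\log(1/\alpha)/D$ then dominates the claimed threshold $C_K\log(1/\alpha)/(\MMD_K[\P,\Q])^2$; you use this implicitly when setting $C_K=1/(4C_K^{\mathrm{KL}})$.
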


We remark that in the online change point detection literature \citep{moustakides86optimal,ritov90optimality}, it is more common to study the expected risk of a stopping time. For example, for fixed $\mathbb{P}, \mathbb{Q}$, it is common to work with the so-called worst-worst-case average detection delay \citep{lorden71procedures} of a given stopping time $N$, which is defined via 
\begin{align*}
\sup_{\eta > 1} \esssup \mathbb{E}_\eta \left [ \left ( N - \eta \right)^+ \mid \mathcal{F}_\eta \right ]. 
\end{align*}
However, in the absence of further restrictions, studying this quantity for the problem at hand does not seem possible. In fact, as long as $\mathbb{P}^{\otimes \eta} := \mathbb{P} \otimes \dots \otimes \mathbb{P}$ and $\mathbb{Q}^{\otimes \eta} := \mathbb{Q} \otimes \dots \otimes \mathbb{Q}$ have a total variation distance smaller than $1$, one can couple the given process and a process where all $X_t$'s are drawn from $\mathbb{Q}$ so that with non-zero probability the two processes have identical sequences. In this case, we either lose control of the null and $\alpha$ cannot be arbitrarily close to zero, or we maintain control over the null but with non-zero probability the detection delay is infinite.

\section{Experiments}
\label{sec:experiments}

We collect our experiments on synthetic data in Section~\ref{sec:experiments-synth}  and on the MNIST data set in Section~\ref{experiments:real-world}. We refer to Appendix~\ref{sec:additional-mnist-results} for additional experiments and a numerical comparison of different thresholds for the stopping rule. To interpret the change point detection performance of the proposed method, we compare its average run length (ARL) and expected detection delay (EDD) to the existing kernel-based methods presented in Table~\ref{tab:comparison}.\footnote{\label{fn:code} All code replicating our experiments is available in the supplement and at \url{https://github.com/FlopsKa/rff-change-detection}.} 
For all experiments, we use the Gaussian kernel $K(\b x,\b y )= e^{-\gamma\norm{\b x -\b y}{2}^2}$ with $\gamma$ set by the median heuristic \citep{garreau18large} or its RFF approximation, depending on the algorithm.
All results were obtained on a PC with Ubuntu 20.04 LTS, 124GB RAM, and 32 cores with 2GHz each.

\subsection{Synthetic data}
\label{sec:experiments-synth}
In this section, we evaluate the runtimes of different configurations of the proposed Online RFF-MMD algorithm and compare its change point detection performance on synthetic data to that of other kernel-based approaches.

\tb{Runtime.} Figure~\ref{fig:runtime} summarizes the runtime results of Algorithm~\ref{alg:main-algorithm} with the number of random Fourier features $r\in\{10, 50,100, 500, 1\,000\}$ and for streams of length up to $n = 250\,000$. The experiments verify the $\O (r \log n)$ runtime complexity of the proposed algorithm, derived analytically in Section~\ref{sec:algorithm}. We note that the dependence on $d$ is linear; we consider $d=1$ only.

\begin{figure}[h]
    \centering
    \includegraphics[width=\linewidth]{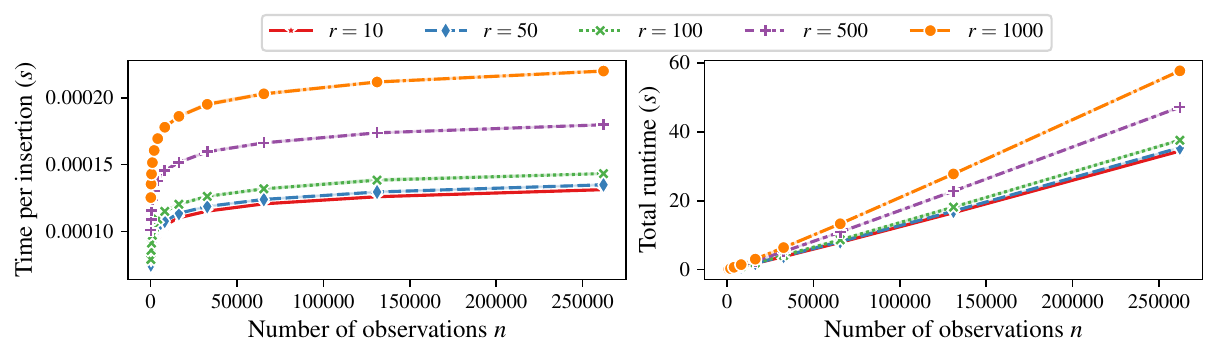}
    \caption{Average runtime ($10$ repetitions) of RFF-MMD per insert operation (left) and total (right).} \label{fig:runtime}
\end{figure}

\tb{ARL vs.\ expected detection delay.} To illustrate the EDD for a given target ARL, we reproduce the experiments of \citet[Figure~4]{wei22online}, also taking our method into account. Specifically, we consider the pre-change distribution $\P = \mathcal N\!\left( \bm 0_{20}, \b I_{20}\right)$ and set the parameters of each algorithm as follows.
Matching the settings of the reproduced experiment, we choose $B_{\text{max}} = 50$ and $N=15$ for online kernel CUSUM; for Scan B-statistics and NewMA,
we set $B_0 = 50$. The remaining parameters of NewMA then follow from the heuristics detailed by the authors \citep{keriven20newma}.
For Online RFF-MMD, we set $r=1\,000$. We compute the thresholds for a given target ARL by processing $10\times(\text{target ARL})$ samples with each algorithm, repeating for $100$ Monte Carlo (MC) iterations, and computing the $1-1/(\text{target ARL})$ quantile of the resulting test statistics. 
For approximating the EDD of each algorithm, we draw $64$ samples from
$\P$, respectively, before sampling from $\Q$; we report the average over $100$ repetitions. OKCUSUM and ScanB additionally receive $1\,000$
samples from $\P$ upfront, to use as a reference sample. For NewMA, we process $400$ additional samples from $\P$ for both the MC estimate and the EDD experiment, to reduce its variance.

\begin{figure}[htbp]
    \centering
    \includegraphics[width=\linewidth]{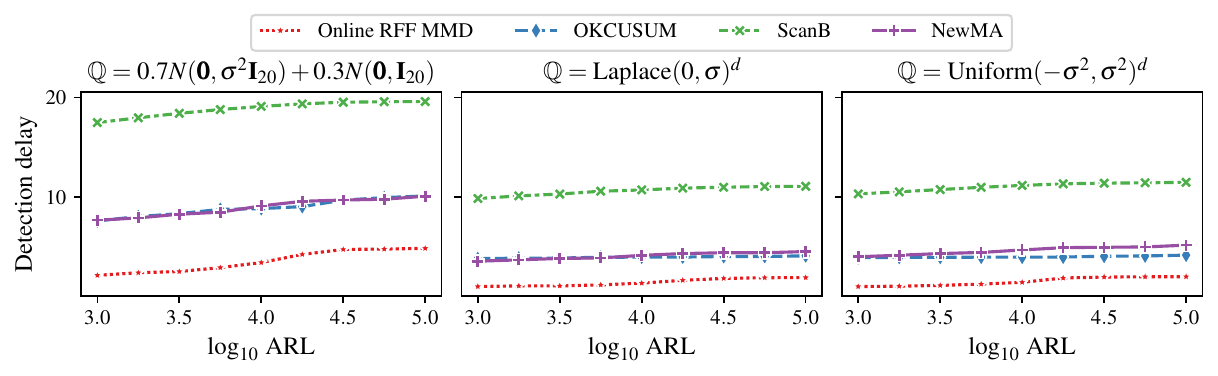}
    \caption{Average detection delay from $\P = \mathcal N(\bm 0_{20},\b I_{20})$ to the $\Q$ indicated on top ($d=20$, $\sigma = 2$).}
    \label{fig:synth-experiment}
\end{figure}

Having processed the indicated number of samples from $\P$, we then start sampling from either a mixed normal, a Laplace, or a Uniform distribution. Figure~\ref{fig:synth-experiment} collects the average detection delay; the respective post-change distribution $\Q$ is given on top. The results show that our algorithm achieves a smaller detection delay than the competitors for all considered post-change distributions, sometimes by a large margin.

\subsection{MNIST data}
\label{experiments:real-world}

In this section, we interpret the MNIST data set \citep{deng12mnist} as high-dimensional data stream, similar to \citet[Figure~7]{wei22online}, with the goal of detecting a change when the digit changes from $0$ to a different digit. The experimental setup matches that of Section~\ref{sec:experiments-synth}, but with $d=784$. Figure~\ref{fig:mnist-experiment} collects our results; the results for the digits 4--6 are similar and in Appendix~\ref{sec:exp-digits4-9}. Similar to Figure~\ref{fig:synth-experiment}, the proposed Online RFF-MMD algorithm shows very good performance. In particular, it achieves a lower EDD than all tested competitors throughout.

\begin{figure}[htbp]
    \centering
    \includegraphics[width=\linewidth]{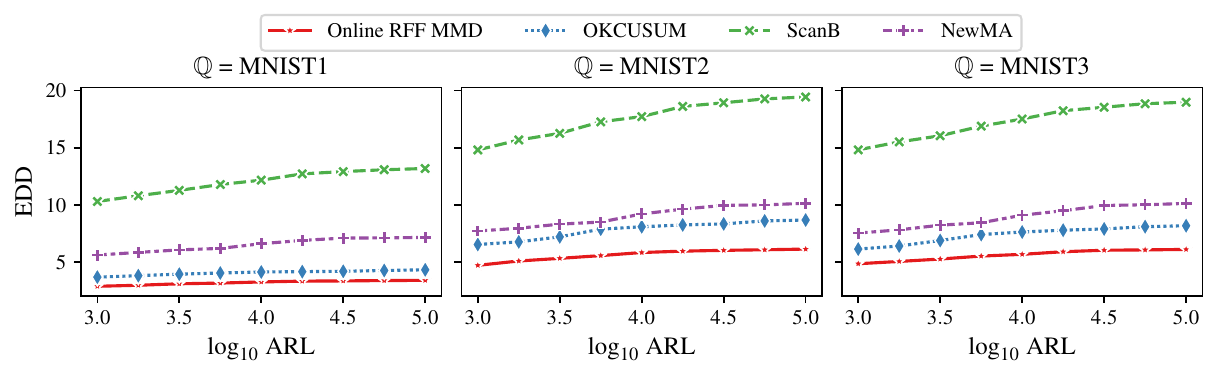}
    \caption{Average detection delay from MNIST digit 0 to digits $1$, $2$, and $3$ (left to right).}
    \label{fig:mnist-experiment}
\end{figure}

\section{Limitations}\label{sec:limitations}

As with all kernel-based tests, the choice of kernel impacts the power of the test. While our theoretical guarantees (Section~\ref{section: theoretical results}) hold for any kernel satisfying Assumption~\ref{section: kernel assumptions} and do not require any pre-change data, one usually selects the kernel or its parameters using a few available samples in practice. While kernel optimization is not the focus of this work, there exist works \citep{jitkrittum16interpretable,jitkrittum17adaptive2,jitkrittum17lineartime,liu20learning,schrab22ksdagg,schrab22efficient,hagrass22spectral,hagrass23spectralgof,hagrass24stein} to (approximately) achieve this goal; it is interesting future work to tackle this problem in the sequential setting. A separate future direction is considering non-i.i.d.\ data, for example, data that is strongly mixing \citep{bradley05strongmixing} or data exhibiting functional dependence \citep{wu05functionaldependence}.

\begin{ack}
The authors thank Zoltán Szabó and Tengyao Wang for helpful discussions. FK thanks Georg Gntuni and Marius Bohnert for exchanges on the algorithm's implementation. This work was supported by the pilot program Core-Informatics of the Helmholtz Association (HGF). 
\end{ack}

\bibliography{bib/collected_Florian,bib/publications,bib/collected_Shakeel}

\newpage
\appendix

\section{Appendix}

This appendix is structured as follows. We detail the extension of Online RFF-MMD to take a known or observed pre-change distribution into account in Appendix~\ref{section: estimatable or known pre change}. In particular, we show that in these settings tighter thresholds are possible. Appendix~\ref{sec:online detection of multiple change points} shows how to adapt our algorithm to detect multiple change points and gives corresponding guarantees. Additional numerical results of MMD-RFF are in Appendix~\ref{sec:additional-mnist-results}; we include numerical results w.r.t.\ our tighter thresholds in Appendix~\ref{sec:threshold-comparison}. Appendix~\ref{section: proofs} collects all our proofs, with auxiliary results in Appendix~\ref{sec:auxiliary-results} and external statements in Appendix~\ref{sec:external-statements}.

\subsection{Known or estimable pre-change distribution} \label{section: estimatable or known pre change}

In this section, we discuss practical extensions of Online RFF-MMD when additional information about the pre-change distribution is available. Specifically, in Section~\ref{section: estimatable pre-change}, we show how the algorithm can be adapted to settings in which (i) one has access to historical data known to be from the pre-change distribution, or (ii) one knows the pre-change distribution exactly. In Section~\ref{section: sharper thresholds}, we describe how this additional information allows sharpening the thresholds proposed in Theorems~\ref{theorem: average run length} and \ref{theorem: uniform false discovery control} in the main text.

\subsubsection{Incorporating information of the pre-change distribution} \label{section: estimatable pre-change}

To begin with, we consider the setting in which, for some $\nu \in \mathbb{N}$, historical data $X_{-\nu+1}, \dots, X_{0}$ known to be from the (nonetheless unknown) pre-change distribution $\mathbb{P}$ is available. This setting has been studied in the literature from both the parametric \citep{pollak91sequential} and non-parametric \citep{wei22online} perspectives. It is straightforward to extend the Online RFF-MMD stopping time defined in the main text to take advantage of the additional information. Intuitively, for each local test one may prepend the historical data to the block of data taken to be from the pre-change distribution. More formally, the following stopping time can be used:
\begin{equation*}
N = \inf \left \{ n \geq 2 \mid \bigcup_{j=0}^{\left \lfloor \log_2 (n) \right \rfloor - 1} \sqrt{\frac{2^j(n + \nu - 2^j)}{n + \nu }} \MMD_{\hat{K}} \! \left [ X_{(-\nu+1):(n-2^j)}, X_{(n-2^j+1):n} \right ] > \lambda_{n} \right \}. 
\end{equation*}
This stopping time can be implemented similarly to Algorithm \ref{alg:main-algorithm}, and such an implementation features the same time and space complexity as the original algorithm. 

Next, we consider the setting in which the pre-change distribution $\mathbb{P}$ is known exactly. With this additional information, rather than performing local two sample tests, one may perform local one sample tests where the RFF approximation to the mean embedding of the data's empirical distribution is compared to the RFF approximation to the mean embedding of $\mathbb{P}$. More formally, the following stopping time can be used:
\begin{equation*}
N = \inf \left \{ n \geq 2 \mid \bigcup_{j=0}^{\left \lfloor \log_2 (n) \right \rfloor - 1} \sqrt{2^j} \MMD_{\hat{K}} \! \left [ \mathbb{P} , X_{(n-2^j+1):n} \right ] > \lambda_{n} \right \}
\end{equation*}
The exact knowledge of $\mathbb{P}$ permits a precise approximation or the exact computation of $\mathbb{E}_{X\sim\mathbb{P}} \left [ \hat{z}_{K} \left ( X \right ) \mid \omega_1, \dots, \omega_r \right ]$, given $\omega_1, \dots, \omega_r$ sampled from $\Lambda$. Again, this stopping time can be implemented similarly to Algorithm \ref{alg:main-algorithm}, enjoying the same time and space complexity as the original algorithm.

\subsubsection{Sharper thresholds through knowledge of the pre-change distribution} \label{section: sharper thresholds}

Access to additional information about the pre-change distribution paves the way to sharpening the thresholds proposed in Theorems \ref{theorem: average run length} and \ref{theorem: uniform false discovery control}. Indeed, although Theorem \ref{theorem: minimax} suggests the thresholds proposed in the main paper are unimprovable up to constants, in practice these thresholds will be quite conservative as they are completely agnostic to the distribution of the data. 

The main tool for proving Theorems \ref{theorem: average run length} and \ref{theorem: uniform false discovery control} is Lemma \ref{lemma: Fourier MMD exponential bound}, which controls the tail behavior of the tests performed by Online RFF-MMD under their respective local nulls. With additional knowledge of the pre-change distribution, Lemma \ref{lemma: Fourier MMD exponential bound} can be significantly sharpened by taking the second moment of the feature map into account. To that end, we have the following result. 
\begin{lemma}
Given two independent samples $\left \{ X_1, \dots, X_n \right \}$ and $\left \{ Y_1, \dots, Y_m \right \}$ each with mutually independent entries drawn from some $\mathbb{P} \in \mathcal{M}_1^+$, for any $\varepsilon > 0$, it holds that 
\begin{equation*}
\mathbb{P} \!\left ( \MMD_{\hat{K}} \!\left [ X_{1:n}, Y_{1:m} \right ] > \varepsilon \right ) \leq 4 \exp \left ( -\frac{1}{2} \min (m, n) \varepsilon^2 \left [ \tilde{\sigma}^2 + 2 \varepsilon \right ]^{-1} \right ),
\end{equation*}
where $\tilde{\sigma}^2 = \mathbb{E}_{X \sim \mathbb{P}} \left [ K \left ( X, X \right ) \right ] - \mathbb{E}_{X,Y \sim \mathbb{P}} \left [ K \left ( X, Y \right ) \right ]$ 
\label{lemma: Bernstein MMD exponential bound}
\end{lemma}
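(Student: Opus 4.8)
The plan is to recognise that $\MMD_{\hat{K}}[X_{1:n},Y_{1:m}]=\norm{u}{2}$ with $u=\tfrac1n\sum_{i=1}^n\hat z_K(X_i)-\tfrac1m\sum_{j=1}^m\hat z_K(Y_j)\in\R^{2r}$, and to drive the argument with a dimension-free Bernstein inequality for sums of independent bounded vectors in a Hilbert space (of Pinelis--Sakhanenko / Yurinsky type, giving the leading factor $2$). First I would condition on the random frequencies $\omega=(\omega_1,\dots,\omega_r)$. Since both samples come from the same $\P$ under the local null, the common mean $\bar z:=\E_{X\sim\P}[\hat z_K(X)\mid\omega]$ gives $\E[u\mid\omega]=0$, so $u=\sum_k w_k$ is a sum of $n+m$ independent, zero-mean vectors, with $w_i=\tfrac1n(\hat z_K(X_i)-\bar z)$ for $i\le n$ and $w_{n+j}=-\tfrac1m(\hat z_K(Y_j)-\bar z)$.

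The Bernstein bound needs an almost-sure bound and a variance proxy. For the former, $\norm{\hat z_K(\b x)}{2}^2=\tfrac1r\sum_{\ell=1}^r(\sin^2(\omega_\ell\T\b x)+\cos^2(\omega_\ell\T\b x))=1$, so $\norm{\hat z_K(\b x)}{2}=1$ uniformly and $\norm{w_k}{2}\le 2/\min(m,n)$; combined with $\tfrac1n+\tfrac1m\le 2/\min(m,n)$, the resulting Bernstein correction is linear in $\varepsilon$ and is comfortably absorbed into the term $2\sqrt2\,\varepsilon$. For the latter, the per-sample second moment is $\E\norm{\hat z_K(X)-\bar z}{2}^2=1-\E_{X,Y}[\hat{K}(X,Y)\mid\omega]$, so the conditional variance is $\sigma^2(\omega)=\big(1-\E_{X,Y}[\hat{K}(X,Y)\mid\omega]\big)\big(\tfrac1n+\tfrac1m\big)$. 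Using $\hat{K}(\b x,\b x)=1$, the leading factor equals $\hat{\tilde\sigma}^2(\omega)-1$ with $\hat{\tilde\sigma}^2(\omega):=2-\E_{X,Y}[\hat{K}(X,Y)\mid\omega]$, the empirical analogue of $\tilde\sigma^2$.

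A key structural fact keeps $\hat{\tilde\sigma}^2(\omega)$ well behaved: writing $\varphi(\omega)=\E_{X\sim\P}[e^{i\omega\T X}]$, one has $\E_{X,Y}[\hat{K}(X,Y)\mid\omega]=\tfrac1r\sum_{\ell=1}^r\E_{X,Y}[\cos(\omega_\ell\T(X-Y))]=\tfrac1r\sum_{\ell=1}^r|\varphi(\omega_\ell)|^2\ge 0$, so $\hat{\tilde\sigma}^2(\omega)\le 2$ surely. Feeding $\sigma^2(\omega)$ and the a.s.\ bound into the vector Bernstein inequality, and invoking $2\big(\hat{\tilde\sigma}^2(\omega)-1\big)\le\hat{\tilde\sigma}^2(\omega)$ (valid precisely because $\hat{\tilde\sigma}^2(\omega)\le 2$), yields a conditional tail of exactly the stated shape, but with the feature-dependent $\hat{\tilde\sigma}^2(\omega)$ in place of $\tilde\sigma^2$.

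The main obstacle is the passage to the true-kernel quantity $\tilde\sigma^2=2\E[K(X,X)]-\E[K(X,Y)]$. The conditional argument only produces $\hat{\tilde\sigma}^2(\omega)$, and the two agree solely in expectation, $\E_\omega[\hat{\tilde\sigma}^2(\omega)]=\tilde\sigma^2$, via $\E_\omega[\hat{K}]=K$. Because the shared frequencies couple all samples, one cannot run the vector Bernstein directly over the joint law, and naively integrating the conditional tail fails, since $v\mapsto\exp(-A/(v+B))$ is concave only when $A$ is small (i.e.\ in the uninteresting part of the deviation regime). The clean route I would take is to organise the moment/MGF bound so that the \emph{joint} second moment governs the sub-Gaussian part of the tail: over the full law of samples and frequencies, $\E\norm{u}{2}^2=(\tilde\sigma^2-1)\big(\tfrac1n+\tfrac1m\big)$ depends on $K$ directly---this is the ``second moment of the feature map'' the sharpening exploits---while the uniform bound $\norm{\hat z_K}{2}=1$, being deterministic and independent of $\omega$, governs the sub-exponential part. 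The delicate point is verifying that this uniform bound alone controls the higher cumulants, so that $\E\norm{u}{2}^2$ rather than the fluctuating $\sigma^2(\omega)$ multiplies the Gaussian term; the non-negativity $\E[\hat{K}(X,Y)\mid\omega]\ge 0$ and the resulting $\tilde\sigma^2\le 2$ then let the final simplification $2(\tilde\sigma^2-1)\le\tilde\sigma^2$ go through and reproduce the stated denominator $2(\tilde\sigma^2+2\sqrt2\,\varepsilon)$.
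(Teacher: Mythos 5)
Your overall strategy coincides with the paper's: condition on the frequencies $\omega_1,\dots,\omega_r$, center the feature maps by $\hat\mu_K=\E_{X\sim\P}[\hat z_K(X)\mid\omega_1,\dots,\omega_r]$, apply a Hilbert-space Bernstein inequality (the paper uses Yurinsky's Theorem~3.3.4, quoted as Theorem~\ref{theorem: Yurinsky concentration}), and then integrate out the frequencies. The one structural difference is that the paper first partitions the larger sample into $\min(m,n)$ blocks and forms $\min(m,n)$ equinumerous zero-mean summands $\tilde\eta^{X,Y}_i$, so that a single pair of Bernstein constants $(B,H)$ suffices; your version with all $n+m$ heterogeneously scaled summands is workable but would require the form of the inequality with per-summand moment conditions, and your verification that the resulting linear term is ``comfortably absorbed'' into $2\sqrt2\,\varepsilon$ is asserted rather than checked. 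Your intermediate computations (unit norm of $\hat z_K$, $\E[\hat K(X,Y)\mid\omega]=\tfrac1r\sum_\ell|\varphi(\omega_\ell)|^2\ge0$, hence the conditional variance proxy is at most $2$) are correct and mirror the paper's bound $\E_\omega[\tilde\sigma^2(\omega_1,\dots,\omega_r)]\le 2\,\E[K(X,X)]-\E[K(X,Y)]$.

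The genuine gap is the de-conditioning step, which you identify as the crux but do not resolve. Your proposed ``clean route''---arranging the MGF bound so that the joint second moment $\E\|u\|_2^2$ governs the sub-Gaussian part while the deterministic bound $\|\hat z_K\|_2=1$ controls the higher cumulants---is a program, not a proof: unconditionally the $n+m$ summands are \emph{not} independent (they all share $\omega_1,\dots,\omega_r$), so no off-the-shelf vector Bernstein inequality applies to the joint law, and you give no argument that the higher conditional cumulants can be dominated uniformly in $\omega$ by quantities involving only the unconditional variance. The paper closes this step differently, by applying Jensen's inequality to $v\mapsto\exp\bigl(-\tfrac12 n\varepsilon^2[v+2\sqrt2\varepsilon]^{-1}\bigr)$ together with $\E_\omega[\tilde\sigma^2(\omega_1,\dots,\omega_r)]\le\tilde\sigma^2$. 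Your objection that this map is concave only when $\tfrac12 n\varepsilon^2\le 2(v+2\sqrt2\varepsilon)$, i.e.\ outside the interesting deviation regime, is a legitimate criticism of that step; but flagging a weakness in the paper's argument does not discharge your own obligation, and as written your proof does not establish the stated tail bound with the population quantity $\tilde\sigma^2$ in the denominator.
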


Lemma~\ref{lemma: Bernstein MMD exponential bound} allows the following improvements of Theorems \ref{theorem: average run length} and \ref{theorem: uniform false discovery control}.

\begin{corollary}
For any $\gamma > 1$, replacing the thresholds in Theorem \ref{theorem: average run length} with the scale dependent thresholds 
\begin{align}
\lambda_{n,j} &= \frac{2\sqrt 2 f^2 (\gamma)}{\sqrt{\min (2^j, n-2^j)}} + \tilde{\sigma} f(\gamma), \hspace{2em} n \in \mathbb{N},\; j \leq \log_2 (n) \label{equation: ARL imporved threshold} \\
f(\gamma) &= \sqrt{2\log (16 \gamma \log_2 (2 \gamma)) }\nonumber
\end{align}
it holds that $\mathbb{E}_\infty [N] \ge \gamma$ where $N$ is as defined in (\ref{equation: MMD stop time}).
\label{corollary: ARL}
\end{corollary}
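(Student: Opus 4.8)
The plan is to prove the corollary exactly as Theorem~\ref{theorem: average run length} is proved, replacing the single per-test tail bound used there (Lemma~\ref{lemma: Fourier MMD exponential bound}) with the sharper Bernstein-type bound of Lemma~\ref{lemma: Bernstein MMD exponential bound}, and to check that the scale-dependent thresholds in \eqref{equation: ARL imporved threshold} are calibrated precisely so that every local test retains the same per-test false-alarm probability that drives the proof of Theorem~\ref{theorem: average run length}. Under the global null all observations are i.i.d.\ from $\mathbb{P}$, so for every candidate split at time $n$ the two blocks $X_{1:(n-2^j)}$ and $X_{(n-2^j+1):n}$ are two independent samples from the \emph{same} law $\mathbb{P}$, of sizes $a = n-2^j$ and $b = 2^j$. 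Writing $m = \min(a,b) = \min(2^j,\,n-2^j)$, the local test rejects iff $\MMD_{\hat K}[X_{1:(n-2^j)}, X_{(n-2^j+1):n}] > \varepsilon$ with $\varepsilon = \lambda_{n,j}\sqrt{(a+b)/(ab)}$, so Lemma~\ref{lemma: Bernstein MMD exponential bound} applies directly with threshold $\varepsilon$ and effective sample size $m$.

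The key calibration step is to verify that this choice of $\lambda_{n,j}$ forces the Bernstein exponent to reach $f(\gamma)$. First I would note that the exponent $\tfrac12 m\varepsilon^2(\tilde\sigma^2 + 2\sqrt2\,\varepsilon)^{-1}$ is increasing in $\varepsilon$, and that $\varepsilon \ge \lambda_{n,j}/\sqrt m$ because $(a+b)/(ab) = 1/a + 1/b \ge 1/m$. Substituting the lower bound $\tilde\varepsilon := \lambda_{n,j}/\sqrt m = u/m + v/\sqrt m$ with $u = 4\sqrt2\,f(\gamma)$ and $v = \tilde\sigma\sqrt{2f(\gamma)}$, the target inequality $m\tilde\varepsilon^2 \ge 2f(\gamma)\bigl(\tilde\sigma^2 + 2\sqrt2\,\tilde\varepsilon\bigr)$ becomes, upon expanding $m\tilde\varepsilon^2 = u^2/m + 2uv/\sqrt m + v^2$ and using the identities $v^2 = 2f(\gamma)\tilde\sigma^2$ and $4\sqrt2 f(\gamma) = u$, the manifestly true relation $u^2/m + 2uv/\sqrt m \ge u^2/m + uv/\sqrt m$, the two sides differing by exactly $uv/\sqrt m \ge 0$. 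Hence the exponent is at least $f(\gamma)$, and each local test has false-alarm probability at most $2e^{-f(\gamma)} = 1/\bigl(2\gamma\log_2(2\gamma)\bigr)$, which is precisely the per-test bound used in the proof of Theorem~\ref{theorem: average run length}.

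With this identical per-test bound secured, I would invoke the aggregation step of Theorem~\ref{theorem: average run length} essentially verbatim: a union bound over the at most $\lfloor\log_2 n\rfloor - 1$ splits active at each time $n$, combined with the counting of contributions across time, controls the run-length behaviour of $N$ and delivers the stated guarantee $\mathbb{E}_\infty[N] \le \gamma$. The only new bookkeeping is that the thresholds are now split-dependent through $m$; but since the tail bound of the previous paragraph holds uniformly over all pairs $(n,j)$ with the \emph{same} value $2e^{-f(\gamma)}$, the aggregation argument is unaffected and carries over without modification.

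I expect the main obstacle to be the calibration computation of the second paragraph rather than the aggregation. Unlike the sub-Gaussian bound underlying Theorem~\ref{theorem: average run length}, Lemma~\ref{lemma: Bernstein MMD exponential bound} has a denominator that is linear in $\varepsilon$, so the threshold must carry both an $O(1/\sqrt m)$ variance term (through $\tilde\sigma$) and an $O(1/m)$ correction; the delicate point is confirming that the particular constants $4\sqrt2$ and $\sqrt2$ in \eqref{equation: ARL imporved threshold} split the budget between these two terms in exactly the way needed to clear the quadratic inequality $m\tilde\varepsilon^2 \ge 2f(\gamma)(\tilde\sigma^2 + 2\sqrt2\,\tilde\varepsilon)$. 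Once that algebraic identity is in place, the remainder is a direct transcription of the Theorem~\ref{theorem: average run length} argument.
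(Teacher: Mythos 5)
Your route is exactly the one the paper intends (the paper states this corollary without a written proof, as a direct consequence of substituting Lemma~\ref{lemma: Bernstein MMD exponential bound} for Lemma~\ref{lemma: Fourier MMD exponential bound} in the proof of Theorem~\ref{theorem: average run length}), and your calibration computation is the genuine content: the monotonicity of the Bernstein exponent in $\varepsilon$, the reduction $\varepsilon \geq \lambda_{n,j}/\sqrt{m}$ via $1/a + 1/b \geq 1/\min(a,b)$, and the algebraic verification that $m\tilde{\varepsilon}^2 - \left ( v^2 + u \tilde{\varepsilon} \right ) = uv/\sqrt{m} \geq 0$ with $u = 4\sqrt{2}f(\gamma)$, $v = \tilde{\sigma}\sqrt{2 f(\gamma)}$ are all correct, and this is precisely how the constants $4\sqrt{2}$ and $\sqrt{2}$ in \eqref{equation: ARL imporved threshold} are engineered.

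There is, however, a concrete factor-of-two slip in your aggregation step. Lemma~\ref{lemma: Bernstein MMD exponential bound} carries a prefactor of $2$ (inherited from the Hilbert-space Bernstein inequality, Theorem~\ref{theorem: Yurinsky concentration}), so your calibration yields a per-test false-alarm probability of $2e^{-f(\gamma)} = 1/\left ( 2\gamma \log_2 (2\gamma) \right )$, whereas the bound used in the proof of Theorem~\ref{theorem: average run length} is $e^{-\lambda^2/2} = 1/\left ( 4\gamma \log_2(2\gamma) \right )$ --- half as large. Your claim that the two are ``precisely'' equal is false, and it matters: running the union bound over the at most $2\gamma \log_2(2\gamma)$ tests with your per-test bound gives $\mathbb{P}_\infty \left ( N \leq 2\gamma \right ) \leq 1$, so the final step $\mathbb{E}_\infty [N] \geq 2\gamma \left ( 1 - \mathbb{P}_\infty \left ( N \leq 2\gamma \right ) \right )$ delivers only $\mathbb{E}_\infty[N] \geq 0$, which is vacuous. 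Note also that the factor of $2$ cannot be absorbed by the slack $uv/\sqrt{m}$ in your quadratic inequality, since that slack vanishes as $\tilde{\sigma} \downarrow 0$. The repair is routine but changes the stated constant: take $f(\gamma) = \log \left ( 8 \gamma \log_2 (2\gamma) \right )$ (equivalently, inflate $f$ by an additive $\log 2$), so that $2 e^{-f(\gamma)} = 1/\left ( 4 \gamma \log_2 (2\gamma) \right )$ and the rest of the Theorem~\ref{theorem: average run length} argument goes through verbatim. Since the paper gives no explicit proof of the corollary, its stated $f(\gamma) = \log \left ( 4\gamma \log_2 (2\gamma) \right )$ appears to carry the same slack; your write-up should either flag this or adopt the inflated constant. (Separately, you reproduce the paper's evident typo $\mathbb{E}_\infty[N] \leq \gamma$; the guarantee being proved is $\mathbb{E}_\infty[N] \geq \gamma$.)
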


\begin{corollary}
For any $\alpha \in (0,1)$, replacing the thresholds in Theorem \ref{theorem: uniform false discovery control} with the scale dependent thresholds
\begin{align*}
 & \lambda_{n,j} = \frac{2\sqrt 2  f^2(\alpha, n)}{\sqrt{\min (2^j, n-2^j)}} + \tilde{\sigma} f (\alpha, n), \hspace{2em} n \in \mathbb{N}, j \leq \log_2 (n) \\ 
 & f(\alpha, n) = \sqrt{2(\log (4n / \alpha) + 2\log (\log_2(n)) + \log (\log_2(2n)))}
\end{align*}
it holds that $\mathbb{P}_\infty (N < \infty) \leq \alpha$ where $N$ is as defined in (\ref{equation: MMD stop time}). 
\end{corollary}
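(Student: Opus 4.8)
The plan is to follow the same skeleton as the proof of Theorem~\ref{theorem: uniform false discovery control}, but to feed in the sharper Bernstein-type tail bound of Lemma~\ref{lemma: Bernstein MMD exponential bound} in place of the distribution-agnostic bound used there. Under $\mathbb{P}_\infty$ the entire stream is i.i.d.\ from $\mathbb{P}$, so for each candidate time $n$ and each dyadic scale $j$ the two blocks $X_{1:(n-2^j)}$ and $X_{(n-2^j+1):n}$ form two independent samples, each with i.i.d.\ entries from $\mathbb{P}$; this is exactly the hypothesis of Lemma~\ref{lemma: Bernstein MMD exponential bound}, with the relevant second moment being $\tilde{\sigma}^2 = 2\mathbb{E}_{X\sim\mathbb{P}}[K(X,X)] - \mathbb{E}_{X,Y\sim\mathbb{P}}[K(X,Y)]$. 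First I would write $\{N < \infty\} = \bigcup_{n\geq 2}\bigcup_{j=1}^{\lfloor\log_2 n\rfloor - 1}\{\text{local test }(n,j)\text{ rejects}\}$ and apply a union bound, reducing the claim to controlling a single local rejection probability and then summing the resulting bound over the dyadic grid and over time.

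For a single local test, writing $m = 2^j$ and $n' = n - 2^j$, the rejection event $\sqrt{mn'/(m+n')}\,\MMD_{\hat{K}} > \lambda_{n,j}$ is equivalent to $\MMD_{\hat{K}} > \varepsilon$ with $\varepsilon = \lambda_{n,j}\sqrt{(m+n')/(mn')}$. Applying Lemma~\ref{lemma: Bernstein MMD exponential bound} bounds this probability by $2\exp(-\tfrac12 M\varepsilon^2/(\tilde{\sigma}^2 + 2\sqrt{2}\varepsilon))$ with $M = \min(m,n')$, and the crux is to show the stated threshold forces the exponent to be at least $f(\alpha,n)$. I would do this by treating the requirement $M\varepsilon^2 \geq 2f\tilde{\sigma}^2 + 4\sqrt{2} f\varepsilon$ as a quadratic in $\varepsilon$; its positive root, after the elementary relaxation $\sqrt{a+b}\leq\sqrt{a} + \sqrt{b}$, is at most $\tfrac{4\sqrt{2} f}{M} + \tilde{\sigma}\sqrt{2f/M}$. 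Combining this with the sandwich $M/2 \leq mn'/(m+n') \leq M$, which gives $\varepsilon \geq \lambda_{n,j}/\sqrt{M}$, shows that the condition $\lambda_{n,j} \geq \tfrac{4\sqrt{2} f(\alpha,n)}{\sqrt{M}} + \tilde{\sigma}\sqrt{2f(\alpha,n)}$ is precisely what makes the Bernstein exponent at least $f(\alpha,n)$. This is exactly the proposed scale-dependent threshold, and it explains why the sharpened threshold splits into a ``bounded'' term scaling like $f/\sqrt{M}$ and a ``variance'' term scaling like $\tilde{\sigma}\sqrt{f}$; each local test then contributes at most $2e^{-f(\alpha,n)}$.

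It remains to sum $2e^{-f(\alpha,n)}$ over the $\lfloor\log_2 n\rfloor - 1$ scales at each time and over all $n\geq 2$, and to verify the total is at most $\alpha$. This is where the precise logarithmic correction terms in $f(\alpha,n)$ must earn their keep: $e^{-f(\alpha,n)}$ carries the leading factor $\alpha/n$ together with inverse powers of $\log_2 n$ that have to simultaneously absorb the $\log_2 n$ scales being union-bounded at each time and outrun the harmonic divergence of $\sum_n 1/n$, so that the doubly-indexed series converges to $\alpha$ rather than merely to a finite multiple of it. I expect this summation step—matching the inverse-log factors so the series telescopes/converges to the target level, and not just to a finite constant—to be the main obstacle and the part most sensitive to the exact constants. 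By contrast, the quadratic inversion in the previous paragraph, though it is the conceptual heart of the sharpening, is routine once the $\sqrt{a+b}\leq\sqrt{a}+\sqrt{b}$ relaxation and the $mn'/(m+n')$ sandwich are in hand.
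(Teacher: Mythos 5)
Your overall route is the right one and is exactly what the paper intends: run the peeling/union-bound skeleton of the proof of Theorem~\ref{theorem: uniform false discovery control}, but replace the distribution-free tail bound (Lemma~\ref{lemma: Fourier MMD exponential bound}) by the Bernstein-type bound of Lemma~\ref{lemma: Bernstein MMD exponential bound}. Your quadratic inversion is also correct: requiring $M\varepsilon^2 \geq 2f\tilde{\sigma}^2 + 4\sqrt{2}f\varepsilon$ with $M=\min(2^j, n-2^j)$, taking the positive root, relaxing via $\sqrt{a+b}\leq\sqrt{a}+\sqrt{b}$, and using $M/2 \leq \tfrac{2^j(n-2^j)}{n} \leq M$ from Lemma~\ref{lemma: min bound} yields precisely the stated $\lambda_{n,j}$, with each local test contributing at most $2e^{-f(\alpha,n)}$.

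The genuine gap is the step you explicitly defer: the final summation does not close with the stated $f(\alpha,n)$, and you never verify that it does. With $f(\alpha,n)=\log(n/\alpha)+\tfrac{3}{2}\log(\log_2 n)$ one has $e^{-f(\alpha,n)}=\tfrac{\alpha}{n}(\log_2 n)^{-3/2}$, and after union-bounding over the $\lfloor\log_2 n\rfloor$ scales at each time the total false-alarm bound is of order
\begin{align*}
\sum_{n\geq 2} \, 2\left\lfloor \log_2 n\right\rfloor \frac{\alpha}{n\left(\log_2 n\right)^{3/2}} \asymp \alpha \sum_{n \geq 2} \frac{1}{n\sqrt{\log_2 n}} = \infty ,
\end{align*}
and the same divergence appears in the peeled form as $\sum_{l\geq 1} l\cdot l^{-3/2} = \sum_{l\geq 1} l^{-1/2}$. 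So the argument cannot terminate at ``$\leq \alpha$'' (nor at any finite bound) as written. For the series to converge to $\alpha$ you need the exponent to supply strictly more than two inverse powers of $\log_2 n$ after the union over scales, exactly as in Theorem~\ref{theorem: uniform false discovery control} where the correction is $2\log(\log_2 n)+\log(\log_2 (2n))$ and the telescoping identity $\sum_{l}l^{-1}(l+1)^{-1}=1$ is what delivers the clean constant; you also need an additional $\log 2$ in $f$ (or a factor-of-two slack elsewhere) to absorb the prefactor $2$ in Lemma~\ref{lemma: Bernstein MMD exponential bound}'s two-sided tail, which the single-sided bound of Theorem~\ref{theorem: bounded difference} used in Theorem~\ref{theorem: uniform false discovery control} did not require. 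A complete proof must either carry out this summation with a corrected $f$ or explain why the stated $f$ suffices --- and it does not.
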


To put the above results in context, we detail Corollary \ref{corollary: ARL}. On large scales (large $j$-s), where detections tend to occur, the scale dependent thresholds behave approximately like $\tilde{\sigma} \sqrt{2 \log (4 \gamma \log_2 (2 \gamma))}$, which differs from the threshold used in Theorem \ref{theorem: average run length} by a factor of $\tilde{\sigma}$. Therefore, when $\tilde{\sigma}$ is significantly smaller than $1$, the thresholds in (\ref{equation: ARL imporved threshold}) will be significantly smaller than those suggested by Theorem \ref{theorem: average run length}. As the kernel is assumed to be bounded from above by $1$, these smaller thresholds generally occur.

\subsection{Online detection of multiple change points}\label{sec:online detection of multiple change points}

In this section, we explain how the algorithm in the main part can be extended to detect multiple change points in an online fashion. Indeed, consider the setting where data $X_1, X_2, \dots$ are observed sequentially. There is a sequence of integers $(\eta_k)_{k \in \mathbb{N}}$ with $\eta_0 = 0$ and $\eta_k < \eta_{k+1}$ for each $k \in \mathbb{N}$, and a sequence of measures $(\mathbb{Q}_k)_{k \in \mathbb{N}}$ each in $\mathcal{M}^1_+$ with $\mathbb{Q}_k \neq \mathbb{Q}_{k+1}$ for each $k \in \mathbb{N}$ such that for each $t \in \mathbb{N}$ the data are distributed according to
\begin{equation*}
X_t \sim 
 \left\{\begin{matrix}
\mathbb{Q}_1 & \text{if} \quad  \eta_0 < t \leq \eta_1 \\
 \mathbb{Q}_2 & \text{if} \quad \eta_1 < t \leq \eta_2  \\
 \vdots &  \\
\end{matrix}\right. .
\end{equation*}
Consider running Algorithm 1 on this data stream using the threshold sequence given in Theorem 2. As soon as a change is detected, say at location $n$, we re-start the algorithm at $X_{n+1}$. However, the index on the threshold sequence is maintained. More precisely: having detected a change at time $n$, at the $(n+j)$-th iteration (for $j \geq 1$), we construct local statistics over the interval $\{ n+1, \dots, n+j \}$ according to Section 4.1, however the local statistics continue to be compared with the threshold $\lambda_{n+j}$. We are able to prove the following guarantee on the change point locations recovered in this way: 

\begin{theorem}
For any deterministic time $T < \infty$ let $M$ denote the number of $\eta_k$-s taking values in $\left \{ 1, \dots, T \right \}$. Write $\delta_k = \min \left ( \eta_k - \eta_{k-1}, \eta_{k+1} - \eta_k \right )$ for $k = 1, \dots, M-1$ and $\delta_M = \min \left ( \eta_M - \eta_{M-1}, T - \eta_M \right )$ for the effective sample size associated with the $k$-th change point location. For some $\alpha \in (0,1)$, let $\hat{M}$ be the number of change points detected up to time $T$ using the threshold sequence given in Theorem 2 and the procedure described above, and let $\hat{\eta}_1, \dots, \hat{\eta}_{\hat{M}}$ be their locations. If $\cup_{k = 1}^M \operatorname{supp} \left ( \mathbb{Q}_k \right ) \subseteq \mathcal{X}$ for some compact set $\mathcal{X} \subset \mathbb{R}^d$, 
\begin{equation*}
\delta_k \geq C_1 \frac{\log \left ( 2 T / \alpha \right )}{ \left ( \MMD_K \left [ \mathbb{Q}_k, \mathbb{Q}_{k+1} \right ] \right )^2} \quad \text{for} \quad k = 1, \dots, M,
\end{equation*}
and the number of random Fourier features is chosen so that
\begin{align*}
\sqrt{r} \geq C_2 \frac{h \!\left ( d , \left | \mathcal{X} \right |, \sigma \right ) + \sqrt{2 \log \left ( 2 / \alpha \right )}}{ \left( \min_{k = 1, \dots M} \MMD_K \!\left [ \mathbb{Q}_k, \mathbb{Q}_{k+1} \right ]\right)^2},
\end{align*}
then with probability at least $1 - 2\alpha$ it holds that $\hat{M} = M$ and 
\begin{equation*}
\eta_k < \hat{\eta}_k \leq \eta_k + C_3 \frac{\log \left ( 2T / \alpha \right )}{\left ( \MMD_K \left [ \mathbb{Q}_k, \mathbb{Q}_{k+1} \right ] \right )^2} \quad \text{for} \quad k = 1, \dots M.
\end{equation*}
Here $h \!\left ( d , \left | \mathcal{X} \right |, \sigma \right )$ is as given in Theorem 3, and $C_1$, $C_2$, and $C_3$ are absolute constants independent of $\alpha$ and the sequences $\{\delta_k\}_{k=1,\dots,M}$ and $\{ \MMD_K \!\left [ \mathbb{Q}_k, \mathbb{Q}_{k+1} \right ] \}_{k=1,\dots,M}$. 
\label{theorem: multiple changes}
\end{theorem}

\subsection{Additional experiments} \label{sec:additional-mnist-results}

In this section, we summarize additional numerical results. The MNIST results with the MC threshold as in the main text are in Section~\ref{sec:exp-digits4-9}. We include results obtained without threshold estimation in Section~\ref{sec:exp-dist-free}. A comparison of the distribution dependent and distribution-free bounds is in Section~\ref{sec:threshold-comparison}. Numerical studies on the Human Activity Sensing Consortium (HASC) and on the MarzukaBL data sets are in Section~\ref{sec:experiments-hasc} and in Section~\ref{sec:experiments-mazurkabl}, respectively.

\subsubsection{MNIST data digits 4--9}\label{sec:exp-digits4-9}

In the following Figure~\ref{fig:mnist-experiment2}, we collect the change detection performances of the algorithms of Table~\ref{tab:comparison} on MNIST data, where the digit changes from $0$ to one of $4$--$9$; the results on the other digits and the experimental setup are in Section~\ref{experiments:real-world}. As in the corresponding experiment in the main part of this article, our proposed Online RFF-MMD algorithm consistently achieves the lowest expected detection delay, highlighting its good practical performance.

\begin{figure}[htbp]
    \centering
    \includegraphics[width=\linewidth]{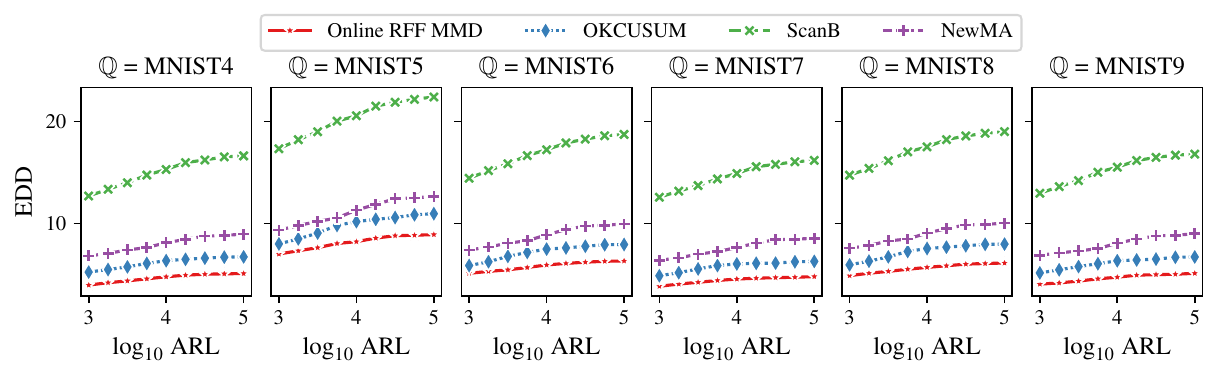}
    \caption{Average detection delay from MNIST digit 0 to digits 4--9 (left to right).}
    \label{fig:mnist-experiment2}
\end{figure}

\subsubsection{HASC (Human Activity Sensing Consortium) data}\label{sec:experiments-hasc}

This section collects our experiments on the Human Activity Sensing Consortium (HASC; available at \url{http://hasc.jp/hc2011/}) challenge $2011$ data set, which is also considered in \citet{liu13changepointdetection,li19scanbstatistics,wei22online}. As in \citet{wei22online}, we consider the change from \textit{walking} to \textit{staying} of participant $101$. For preprocessing, we order the corresponding csv-files in the data set lexicographically, omitting the first $1\,596$ (detailed below) samples of \textit{walking}, and then concatenating $100$ \textit{walking} observations and $100$ \textit{staying} observations to obtain a total of $10$ data sets (with $d=3$) with a single change point each. For obtaining the thresholds for the proposed Online RFF MMD, NewMA, Scan-B statistics, and OKCUSUM change detection algorithms, we proceed as elaborated in Section~\ref{sec:experiments-synth}, that is, we use a Monte Carlo approach with $\alpha=0.1$ on all $10$ \textit{walking} data sets. As per \citet{wei22online}, for ScanB and OKCUSUM, we set the number of windows $N=14$ and the window length $w=114$, implying that both algorithms receive $14 \cdot 114 = 1\,596$ samples from \textit{walking} upfront. Likewise, we process $1\,596$ elements with NewMA upfront. All kernel-based approaches use the (approximated) Gaussian kernel with the $\gamma>0$ parameter set by the median heuristic.

For the density ratio-based (i.e., non-kernel-based) RuLSIF algorithm---which showed the best performance on HASC in \citet{liu13changepointdetection}---, we use the python \texttt{changepoynt} implementation and consider the $l_2$-norm of each three-dimensional observation.\footnote{The \texttt{changepoynt} library is available at \url{https://github.com/Lucew/changepoynt}.} We then obtain the change scores for the full data set and select the point with the highest score as predicted change point, as done in \citet[Section 4.2]{liu13changepointdetection}. To match their setup, we set the window length to $50$, the number of windows to $10$, and $\alpha=0.1$. Additionally, due to the large number of ``too early'' cases reported in this setup, we introduce an offset and take the window length to be equal to the offset. In this setup, we consider the point with the highest score plus the offset as reported change point. A ``miss'' can not occur, due to the scores reported to RuLSIF having at least one maximum. Table~\ref{table:experiments-hasc} collects our results.

\begin{table}[htbp] 
\centering

\caption{Comparison of change detection algorithms on $10$ data sets derived from the HASC data set. ``Too early'' refers to the number of times an algorithm reported a change point before the actual change occurred. ``Miss'' reports cases in which no change was reported.} \label{table:experiments-hasc}
\begin{tabular}{lrrr}
\toprule
Algorithm & Average delay & Too early & Miss \\
\midrule
Online RFF MMD & 21.86 & 2 & 1 \\
NewMA & 34.25 & 1 & 5 \\
ScanB & 31.20 & 0 & 0 \\
OKCUSUM & 17.44 & 1 & 0 \\
RuLSIF & 4.50 & 8 & 0 \\
RuLSIF ($\text{offset}=30$) & 20.38 & 2 & 0 \\
RuLSIF ($\text{offset}=40$) & 35.0 & 3 & 0 \\
RuLSIF ($\text{offset}=50$) & 39.2 & 0 & 0 \\
\bottomrule
\end{tabular}
\end{table}

We emphasize that in the above case all algorithms except for the proposed one and RuLSIF receive $1\,596$ samples upfront, to use as a reference sample. Even though this fundamentally favors the kernel-based competitors, our proposed method achieves results that are comparable to those of the other kernel-based approaches and to RuLSIF.

\subsubsection{MazurkaBL data} \label{sec:experiments-mazurkabl}

In \citet{kosta17dynamic}, the authors found that change points in loudness information of Chopin's Mazurkas correspond to score positions having dynamic markings, tempo, or expression markings, among others. To further validate our algorithm's performance, we additionally run the proposed method on the ``M17-4'' sample  (illustrated in Figure~\ref{fig:chopin-mazurka}) of pianist ``pid50534-05'' of their MazurkaBL \citep{kosta18mazurkabl} data set with the goal of detecting these changes. Here, the dimensionality is $d=1$.

\begin{figure}[htbp]
    \centering
    \includegraphics[width=\linewidth,trim=1.2cm 4mm 0 0]{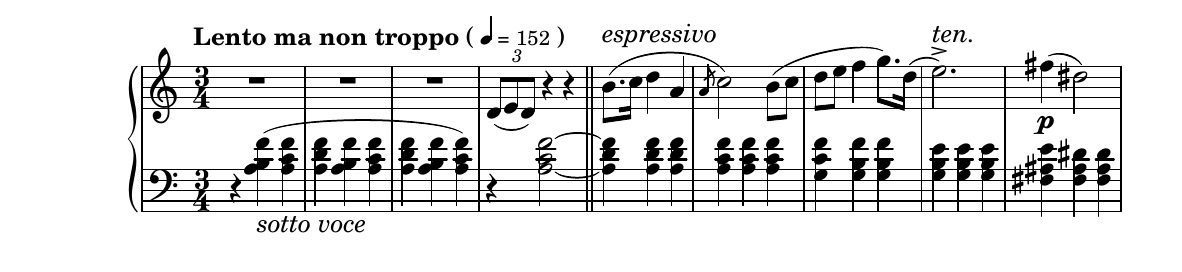}
    \caption{An excerpt of Frédéric Chopin's Mazurka Op.~17 No.~4.}  \label{fig:chopin-mazurka}
\end{figure}

As in the main article (see Section~\ref{sec:experiments}), we approximate the Gaussian kernel with $\gamma > 0$ set by the median heuristic. For obtaining the thresholds using Monte Carlo iterations, we slice the data along each annotated change point, where we consider the annotations provided by \citet{kosta18mazurkabl} as ground truth, and compute the test statistics obtained by the proposed Online RFF MMD algorithm individually on each one. We then select the $1-0.1$-quantile across all test statistics so obtained as threshold.

To detect change points, we again slice the loudness data, but now such that each slice contains precisely one change point, which yields a total of $25$ samples with an average length of $1\,561.32$. We process each one of these with our proposed method and consider the first time the test statistic exceeds the threshold as ``change''. In total, our proposed method flags $10$ change points too early, and, on the remaining $15$ has an average detection delay of $73.67$, with a median detection delay of $64.0$.

While, when contrasting these results with the results obtained on the MNIST (Section~\ref{sec:additional-mnist-results}) and HASC (Section~\ref{sec:experiments-hasc}) data sets, detecting changes in the selected loudness data of a Mazurka seems substantially more challenging, the proposed algorithm still manages to detect many changes with a relatively small delay.

\subsubsection{Distribution-free bound}\label{sec:exp-dist-free}

In this section, we show the change detection performance of the proposed Online RFF-MMD algorithm if no pre-change sample is used to estimate the threshold. Instead, we use Theorem~\ref{theorem: average run length} to compute the distribution-free threshold sequence $\{\lambda_n \mid n\in\mathbb N\}$ for a given target ARL. To obtain an EDD estimate, we sample and process $512$ observations from MNIST digit $0$ (pre-change) and $1\,024$ samples from digits 1--9 (post-change), respectively, averaging the detection delay over $100$ repetitions. The results are in Figure~\ref{fig:mnist-experiment-distribution-free}. When comparing to Figure~\ref{fig:mnist-experiment} and Figure~\ref{fig:mnist-experiment2}, the figure shows that our method has an increased detection delay, which is due to the looser distribution-free bound (see Section~\ref{sec:threshold-comparison} for a numerical comparison). Still, except for the change to the digit $5$ with a guaranteed ARL of $10^5$, Online RFF-MMD detects all changes reliably.

\begin{figure}[htbp]
    \centering
    \includegraphics[width=\linewidth]{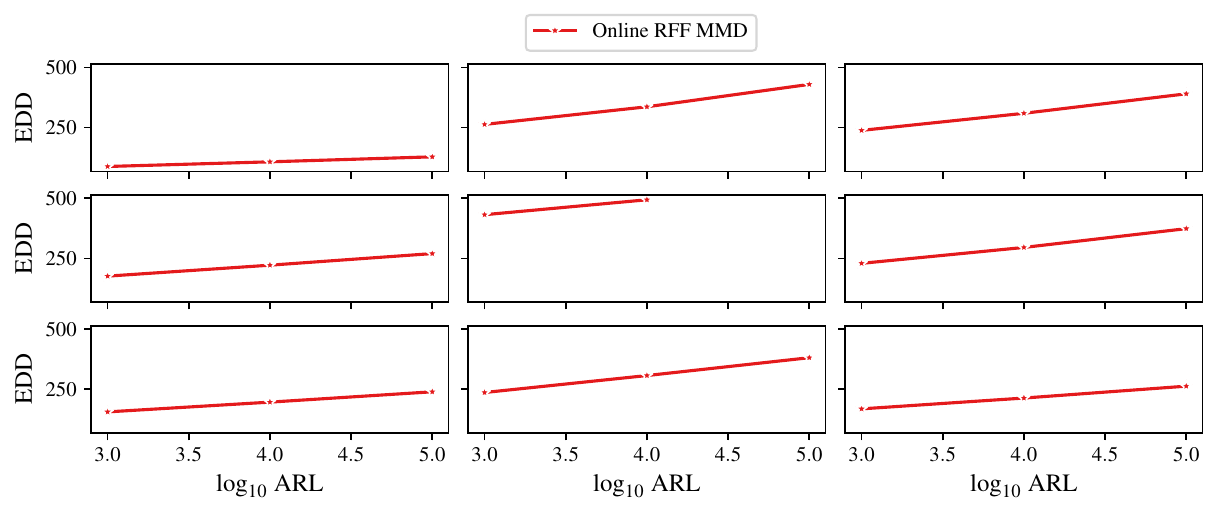}
    \caption{Average detection delay from MNIST digit 0 to digits 1--3, 4--6, 7--9 (top to bottom) with the distribution-free threshold sequence of Theorem~\ref{theorem: average run length}.}
    \label{fig:mnist-experiment-distribution-free}
\end{figure}

\subsubsection{Threshold comparison}
\label{sec:threshold-comparison}

In this section, we compare the tightness of our thresholds in the offline two-sample testing setting. Specifically, we fix the level $\alpha = 0.01$ and let $\P=\Q = \mathcal N(0,1)$. We then approximate the $1-\alpha$ quantile of $\MMD_{\hat K}\! \left(\hat \P_n, \hat \Q_n \right)$ (with $n=1\,000$, $\hat K$ approximating the Gaussian kernel with $r=1\,000$ RFFs, and $\gamma >0$ set by the median heuristic) by (i) obtaining new samples from $\P,\Q$ and (ii) permuting a fixed sample from $\P,\Q$ for $1\,000$ rounds. Figure~\ref{fig:threshold-comparison} shows the respective histograms and the estimated quantiles along with the thresholds obtained by Lemma~\ref{lemma: Bernstein MMD exponential bound} and Lemma~\ref{lemma: Fourier MMD exponential bound}, respectively. As one expects, the figure shows that the variance estimate used in Lemma~\ref{lemma: Bernstein MMD exponential bound} allows to obtain a tighter bound, where we consider the resampling/permutation-based thresholds as ground truth. We emphasize that independent of the threshold used, the resulting test is consistent against any fixed alternative.

\begin{figure}[h]
    \centering
    \includegraphics[width=0.8\linewidth]{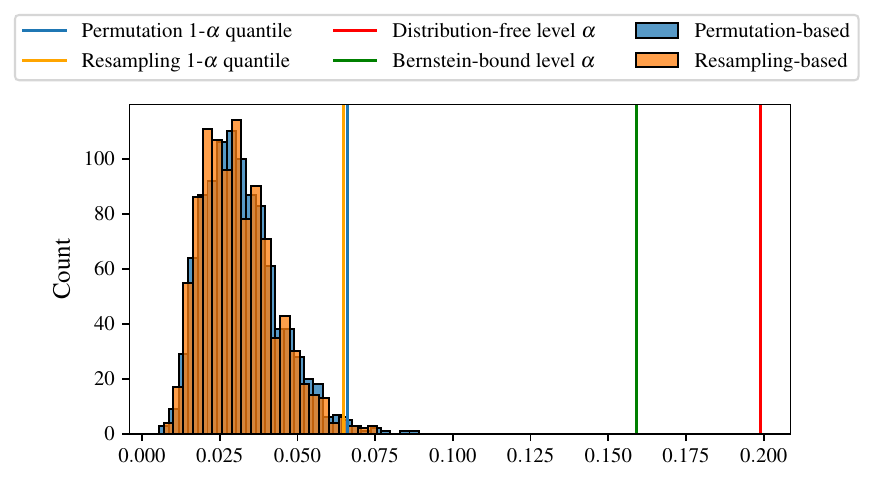}
    \caption{Comparison of different thresholds for the acceptance region of the MMD two-sample test.}
    \label{fig:threshold-comparison}
\end{figure}

\subsection{Proofs} \label{section: proofs}

This section is dedicated to our proofs. The proof of Theorem~\ref{theorem: average run length} is in Appendix~\ref{sec:proof average run length}, that of Theorem~\ref{theorem: uniform false discovery control} is in Appendix~\ref{sec:uniform false discovery control}, that of Theorem~\ref{theorem: high prob detection delay} is in Appendix~\ref{sec:proof detection delay}, and that of our minimax result (Theorem~\ref{thm:minimax}) is in Section~\ref{sec:proof minimax}.
The tighter threshold detailed in Lemma~\ref{lemma: Bernstein MMD exponential bound} is proved in Appendix~\ref{sec:proof-lemma: Bernstein MMD exponential bound} and we state the proof for our multiple change point detection result (Theorem~\ref{theorem: multiple changes}) in Appendix~\ref{sec:proof-theorem: multiple changes}.

\subsubsection{Proof of Theorem \ref{theorem: average run length}} \label{sec:proof average run length}

\begin{proof}
For ease of reading, for each $n \geq 2$ and $j = 0, \dots, \left \lfloor \log_2(n) \right \rfloor -1$, put 
\begin{equation}
\hat{M}_{n,j} := \sqrt{\frac{2^j(n-2^j)}{n}} \MMD_{\hat K}\! \left [X_{1:(n-2^j)}, X_{(n-2^j+1):n} \right ]. 
\label{equation: M n j notation}
\end{equation}
By the law of total expectation, we have that
\begin{align}
\mathbb{E}_\infty \left [ N \right ] & = \mathbb{E}_\infty \left [ N \mid N \leq 2 \gamma \right ] \mathbb{P}_\infty \left ( N \leq 2 \gamma \right ) + \mathbb{E}_\infty \left [ N \mid N > 2 \gamma \right ] \mathbb{P}_\infty \left ( N > 2 \gamma \right ) \nonumber \\ 
& \geq 2 \gamma \left [ 1 - \mathbb{P}_\infty \left ( N \leq 2 \gamma \right ) \right ].
\label{equation: total expectation bound}
\end{align}
Putting $\lambda = \sqrt{2 \log \left ( 4 \gamma \log_2 \left ( 2 \gamma \right ) \right )}$, a union bound argument together with Lemma \ref{lemma: Fourier MMD exponential bound} gives that 
\begin{align}
\MoveEqLeft\mathbb{P}_\infty \left ( N \leq 2 \gamma \right ) = \mathbb{P}_\infty \left ( \cup_{n=2}^{2 \gamma} \cup_{j=0}^{\left \lfloor \log_2 (n) \right \rfloor -1} \hat{M}_{n,j} > \sqrt{2} + \lambda \right ) \nonumber \\ 
& \leq \sum_{n=1}^{2 \gamma} \sum_{j=0}^{\left \lfloor \log_2 (n) \right \rfloor -1} \int \dots \int \mathbb{P}_\infty \left ( \hat{M}_{n,j} > \sqrt{2} + \lambda_n \mid \omega_1, \dots, \omega_r \right ) \mathrm{d} \Lambda \left ( \omega_1 \right ) \dots \mathrm{d} \Lambda \left ( \omega_r \right ) \nonumber \\ 
& \leq 2 \gamma \log_2 (2 \gamma) e^{-\lambda^2 / 2}  = \frac{1}{2}. 
\label{equation: P N bound}
\end{align}
Finally, plugging (\ref{equation: P N bound}) into (\ref{equation: total expectation bound}) proves the desired result. 
\end{proof}

\subsubsection{Proof of Theorem \ref{theorem: uniform false discovery control}} \label{sec:uniform false discovery control}

\begin{proof}
Write 
\begin{equation}
\pi_n = \sqrt{2 \left ( \log(n / \alpha) + 2 \log \left ( \log_2 (n) \right ) + \log \left ( \log_2 (2n) \right ) \right )}
\label{equation: pi_n def}
\end{equation}
for each $n \in \mathbb{N}$. Applying standard peeling arguments \cite{yu23note, verzelen23optimal}, we have that
\begin{subequations}
\begin{align}
\mathbb{P}_\infty \left ( N < \infty \right ) & = \mathbb{P}_\infty \left ( \cup_{n=2}^\infty \cup_{j=0}^{\left \lfloor \log_2(n) \right \rfloor - 1} \hat{M}_{n,j} > \sqrt{2} + \pi_n \right ) \nonumber \\
& \leq \sum_{l=1}^\infty \mathbb{P}_\infty \left ( \cup_{n=2^l}^{2^{l+1}} \cup_{j=0}^{\left \lfloor \log_2(n) \right \rfloor -1} \hat{M}_{n,j} > \sqrt{2} + \pi_n \right ) \nonumber \\ 
&  \leq \sum_{l=1}^\infty 2^l \max_{2^l \leq n \leq 2^{l+1}} \mathbb{P}_\infty \left ( \cup_{j=0}^{\left \lfloor \log_2(n) \right \rfloor -1} \hat{M}_{n,j} > \sqrt{2} + \pi_n \right ) \label{equation: union max} \\
& \leq \sum_{l=1}^\infty 2^l \max_{2^l \leq n \leq 2^{l+1}} \sum_{j=0}^{\left \lfloor \log_2(n) \right \rfloor -1} \mathbb{P}_\infty \left ( \hat{M}_{n,j} > \sqrt{2} + \pi_n \right ),
\label{equation: peeling argument}
\end{align}
\end{subequations}
where in line (\ref{equation: union max}), we apply a union bound and bound the resulting sum by its maximum. Then, applying Lemma \ref{lemma: Fourier MMD exponential bound} as was done in (\ref{equation: P N bound}), we obtain that
\begin{align*}
& \mathbb{P}_\infty\! \left ( N < \infty \right ) \\
& \leq \sum_{l=1}^\infty 2^l \max_{2^l \leq n \leq 2^{l+1}} \sum_{j=0}^{\left \lfloor \log_2(n) \right \rfloor - 1} \int \dots \int \mathbb{P}_\infty \left ( \hat{M}_{n,j} > \sqrt{2} + \pi_n \mid \omega_1, \dots, \omega_r \right ) \mathrm{d} \Lambda \left ( \omega_1 \right ) \dots \mathrm{d} \Lambda \left ( \omega_r \right ) \\ 
& \leq \sum_{l=1}^\infty l 2^l e^{-(\pi_{2^l})^2 / 2}. 
\end{align*}
Finally using the facts that (i) $\exp \left ( -\pi_{2^l}^2 / 2 \right ) \leq \alpha l^{-2} (l+1)^{-1} 2^{-l}$ for all $l \in \mathbb{N}$ and (ii) $\sum_{l = 1}^\infty l^{-1}(l+1)^{-1} = 1$, we obtain that $\mathbb{P}_{\infty}\! \left ( N < \infty \right ) \leq \alpha$.
\end{proof}

\subsubsection{Proof of Theorem \ref{theorem: high prob detection delay}} \label{sec:proof detection delay}
\begin{proof}
We first observe that for any triplet of integers $(m,n,\nu)$ satisfying (i) $m \leq n$ and (ii) $\nu \leq n / 2$, given two samples 
\begin{align*}
 X_{1:n} = \left \{ X_1, \dots, X_{n-\nu}, \tilde{Y}_{n-\nu+1}, \dots, \tilde{Y}_n \right \} &&\text{and}&&
 Y_{1:m} = \left \{ Y_1, \dots, Y_m \right \},
\end{align*}
with mutually independent entries taking values in some bounded set $\mathcal{X} \subset \mathbb{R}^d$ where the $X$'s are sampled from $\mathbb{P}$ and the $Y$'s and $\tilde{Y}$'s are sampled from $\mathbb{Q}$ for some $\mathbb{P}, \mathbb{Q} \in \mathcal{M}_1^+ \left ( \mathcal{X} \right )$ not identical, for any $\varepsilon > 0$, it holds that 
\begin{align}
& \mathbb{P} \!\left ( \sqrt{\frac{nm}{n+m}} \MMD_{\hat{K}} \left [ X_{1:n}, Y_{1:m} \right ] \leq \sqrt{2} + \varepsilon \right ) \nonumber \\
& \leq \mathbb{P} \!\left ( 2 \sqrt{m  \sup_{\b x, \b y \in \mathcal{X}} \left | \hat{K} \left ( \b x, \b y \right ) - K \left ( \b x, \b y \right ) \right |} > \frac{1}{3} \left [ \sqrt{m} \left ( \frac{\sqrt{2} - 1}{2 \sqrt{2}} \right ) \MMD_K \left [ \mathbb{P}, \mathbb{Q} \right ] - \left ( \varepsilon + \frac{10}{\sqrt{2}} \right ) \right ]  \right ) \nonumber \\
&\quad + 4 \exp \left ( - \frac{1}{18} \left \{ \left [ \sqrt{m} \left ( \frac{\sqrt{2} - 1}{2 \sqrt{2}} \right ) \MMD_K \left [ \mathbb{P}, \mathbb{Q} \right ] - \left ( \varepsilon + \frac{10}{\sqrt{2}} \right ) \right ] \vee 0 \right \}^2 \right )
\label{equation: two sample test bound I}
\end{align}
where $\MMD_K$ is as in (\ref{eq:def-mmd}) and $\MMD_{\hat{K}}$ is as defined in (\ref{equation: random Fourier MMD}). To show (\ref{equation: two sample test bound I}), let the $\tilde{X}$'s below be sampled independently from $\mathbb{P}$ and introduce the quantities
\begin{align*}
& \Delta_1 = \left \| \frac{1}{n} \left [ \sum_{i=1}^{n-\nu} K \left ( \cdot, X_i \right ) + \sum_{i=n-\nu+1}^n K \left ( \cdot, \tilde{X}_i \right ) \right ] - \frac{1}{m} \sum_{i=1}^m K \left ( \cdot, Y_i \right ) \right \|_{\mathcal{H}_K} \\
& \Delta_2 = \left \| \frac{1}{\nu} \sum_{i=n-\nu+1}^{n} K \left ( \cdot, \tilde{X}_i \right ) - \frac{1}{\nu} \sum_{i=n-\nu+1}^n K \left ( \cdot, \tilde{Y}_i \right ) \right \|_{\mathcal{H}_K}. 
\end{align*}
Note that by the reverse triangle inequality 
\begin{equation*}
\MMD_{K} \left [ X_{1:n}, Y_{1:m} \right ] \geq \Delta_1 - \sqrt{\frac{\nu}{n}} \Delta_2. 
\end{equation*}
Consequently, using the above and by repeated applications of the triangle inequality, one has that
\begin{subequations}
\begin{align}
& \sqrt{\frac{nm}{n+m}} \MMD_{\hat{K}} \left [ X_{1:n}, Y_{1:m} \right ] \nonumber \\
& \geq \sqrt{\frac{nm}{n+m}} \left ( 1 - \sqrt{\frac{\nu}{n}} \right ) \MMD_K \left [ \mathbb{P}, \mathbb{Q} \right ] \nonumber \\
& \quad- \sqrt{\frac{nm}{n+m}} \left | \MMD_{\hat{K}} \left [ X_{1:n}, Y_{1:m} \right ] - \MMD_{K} \left [ X_{1:n}, Y_{1:m} \right ] \right | \label{equation: reverse triangle bound I} \\
& \quad- \sqrt{\frac{nm}{n+m}}\left | \Delta_1 - \mathbb{E} \left [ \Delta_1 \right ] \right | - \sqrt{\frac{2m}{n+m}} \sqrt{\frac{\nu}{2}} \left | \Delta_2 - \mathbb{E} \left [ \Delta_2 \right ] \right | \label{equation: reverse triangle bound II} \\
& \quad- \sqrt{\frac{nm}{n+m}} \left | \mathbb{E} \left [ \Delta_1 \right ] - \MMD_K \left [ \mathbb{P}, \mathbb{Q} \right ] \right |  - \sqrt{\frac{2m}{n+m}} \sqrt{\frac{\nu}{2}} \left | \mathbb{E} \left [ \Delta_2 \right ] - \MMD_K \left [ \mathbb{P}, \mathbb{Q} \right ] \right |. \label{equation: reverse triangle bound IV}
\end{align}
\end{subequations}
For term (\ref{equation: reverse triangle bound I}), applying Lemmas \ref{lemma: min bound} and \ref{lemma: sqrt is Holder} together with the fact that the $X$'s and $Y$'s take values in some compact $\mathcal{X} \subset \mathbb{R}^d$, we obtain that
\begin{align}
\sqrt{\frac{nm}{n+m}} \left | \MMD_{\hat{K}} \left [ X_{1:n}, Y_{1:m} \right ] - \MMD_K \left [ X_{1:n}, Y_{1:m} \right ] \right | \leq 2 \sqrt{m  \sup_{\b x, \b y \in \mathcal{X}} \left | \hat{K} \left ( \b x, \b y \right ) - K \left ( \b x, \b y \right ) \right |}. 
\label{equation: MMD diff bound}
\end{align}
For the penultimate term in (\ref{equation: reverse triangle bound IV}), applying the bound
\begin{equation}
\mathbb{E} \left | \MMD_{K} \left [ X_{1:n}, Y_{1:m} \right ] - \MMD_{K} \left [ \mathbb{P}, \mathbb{Q} \right ] \right | \leq 2 \left ( \frac{1}{\sqrt{n}} + \frac{1}{\sqrt{m}} \right ) 
\label{equation: absolute expectation bound}
\end{equation}
for $X$'s sampled from $\mathbb{P}$ and $Y$'s sampled from $\mathbb{Q}$, whose proof can be found for instance in Section A.2 of \cite{gretton12kernel}, together with the bound $\sqrt{x+y} \geq \left ( \sqrt{2} / 2 \right ) \left ( \sqrt{x} + \sqrt{y} \right )$ for all $x,y \geq 0$, which holds due to the concavity of the square root, one has that
\begin{align}
\sqrt{\frac{nm}{n+m}} \left | \mathbb{E} \left [ \Delta_1 \right ] - \MMD_{K} \left [\mathbb{P}, \mathbb{Q} \right ] \right | & \leq \sqrt{\frac{nm}{n+m}} \mathbb{E} \left | \Delta_1 - \MMD_{K} \left [  \mathbb{P}, \mathbb{Q} \right ] \right | \nonumber \\
& \leq 2 \frac{\left ( \sqrt{n} + \sqrt{m} \right )}{\sqrt{n + m}} \leq \frac{4}{\sqrt{2}}. 
\label{equation: MMD mean bound I}
\end{align}
Identical arguments together with the fact that $m \leq n$ implies that $(2m) / (m + n) \leq 1$ give
\begin{align}
(\ref{equation: reverse triangle bound II}) \leq \sqrt{\frac{2m}{n+m}} \sqrt{\frac{\nu}{2}} \mathbb{E} \left | \Delta_2 - \MMD_{K} \left [  \mathbb{P}, \mathbb{Q} \right ] \right | \leq \frac{4}{\sqrt{2}}. 
\label{equation: MMD mean bound II}
\end{align}
Therefore, combining (\ref{equation: MMD diff bound}), (\ref{equation: MMD mean bound I}), and (\ref{equation: MMD mean bound II}), rearranging, and applying the rough bound 
\begin{align*}
\mathbb{P} \left ( \sum_{j=1}^K Z_j > x \right ) \leq \sum_{j=1}^K \mathbb{P} \left ( Z_j > x / K \right )
\end{align*}
which holds for any $x \in \mathbb{R}, K \in \mathbb{N}$ and any random variables $Z_1, \dots, Z_K$, we obtain that 
\begin{subequations}
\begin{align}
&(\ref{equation: two sample test bound I})  \leq \mathbb{P} \left ( 2 \sqrt{m \times \sup_{\b x, \b y \in \mathcal{X}} \left | \hat{K} \left ( \b x, \b y \right ) - K \left ( \b x, \b y \right ) \right |} > \right . \nonumber \\
& \left . \hspace{10em} \frac{1}{3} \left [ \sqrt{m} \left ( \frac{\sqrt{2} - 1}{2 \sqrt{2}} \right ) \MMD_{K} \left [  \mathbb{P}, \mathbb{Q} \right ] - \left ( \varepsilon + \frac{10}{\sqrt{2}} \right ) \right ] \vee 0 \right ) \nonumber \\
&  + \mathbb{P} \left ( \sqrt{\frac{nm}{n+m}}\left | \Delta_1 - \mathbb{E} \left [ \Delta_1 \right ] \right | > \frac{1}{3} \left [ \sqrt{m} \left ( \frac{\sqrt{2} - 1}{2 \sqrt{2}} \right ) \MMD_{K} \left [  \mathbb{P}, \mathbb{Q} \right ] - \left ( \varepsilon + \frac{10}{\sqrt{2}} \right ) \right ] \vee 0 \right ) \label{equation: two sample test bound II} \\
&  + \mathbb{P} \left ( \sqrt{\frac{\nu}{2}} \left | \Delta_2 - \mathbb{E} \left [ \Delta_2 \right ] \right | > \frac{1}{3} \left [ \sqrt{m} \left ( \frac{\sqrt{2} - 1}{2 \sqrt{2}} \right ) \MMD_{K} \left [  \mathbb{P}, \mathbb{Q} \right ] - \left ( \varepsilon + \frac{10}{\sqrt{2}} \right ) \right ] \vee 0  \right ) \label{equation: two sample test bound III}.
\end{align}
\end{subequations}
Note that we assume $0 \leq K (\cdot, \cdot ) \leq 1$. Therefore, arguing as in Lemma \ref{lemma: random Fourier bounded diff}, one can show that MMD as defined in (\ref{equation: biased MMD estimate}) has the bounded differences property with constants (\ref{equation: self bound constants}). Hence, applying Theorem \ref{theorem: bounded difference} to terms (\ref{equation: two sample test bound II}) and (\ref{equation: two sample test bound III}) one arrives at (\ref{equation: two sample test bound I}). Turning to the problem of interest we first make explicit the constants in Theorem \ref{theorem: high prob detection delay}: 
\begin{align*}
& C_1 = 2 \times C_3 \\
& C_2^{-1} = \frac{1}{3} \left [ \left ( \frac{\sqrt{2} - 1}{4} - \frac{\sqrt{50} + \sqrt{6}}{\sqrt{C_3}} \right ) \right ] \\
& C_3 = \left ( \sqrt{6} + \sqrt{50} + \sqrt{54} \right )^2 \times \left ( 4 / (\sqrt{2} - 1) \right )^2. 
\end{align*}
Next, define the quantities 
\begin{subequations}
\begin{align}
& k^* = \min \left \{ k \in \mathbb{N} \mid k \leq \eta \text{ and } \sqrt{k} \times \MMD_{K} \left [  \mathbb{P}, \mathbb{Q} \right ] \geq \sqrt{ C_3 \log \left ( 2 \eta / \alpha  \right ) } \right \} \label{equation: k^* def} \\ 
& t_{k^*} = \min \left \{ t = 2^j \mid j \in \mathbb{N} \text{ and } t \leq k^* \right \} \label{equation: t_k^* def}. 
\end{align}
\end{subequations}
Note that condition (\ref{equation: signal strength condition I}) guarantees that (\ref{equation: k^* def}) exists and it can be checked that $C_2 > 0$. Let $j^*$ be the $j$ appearing in \eqref{equation: t_k^* def}. Consequently, using (\ref{equation: two sample test bound I}) and the fact that $k^* / 2 \leq t_{k^*} \leq k^*$, we obtain that
\begin{subequations}
\begin{align}
& \mathbb{P} \left ( \left ( N - \eta \right )^+ > k^* \right ) \nonumber \\
& \leq \mathbb{P} \Bigg ( \bigcap_{j = 0}^{\left \lfloor \log_2 \left ( \eta + k^* \right ) \right \rfloor - 1} \sqrt{\frac{2^j(\eta + k^* -2^j)}{\eta + k^* }} \MMD_{\hat{K}} \! \left [ X_{I_{j^*,k^*}}, X_{J_{j^*,k^*}} \right ] \leq \sqrt{2} + \lambda_{\eta + k^*} \Bigg ) \nonumber \\
& \leq \mathbb{P} \left ( \sqrt{\frac{ t_{k^*} (\eta + k^* -t_{k^*})}{\eta + k^*  }} \MMD_{\hat{K}} \left [ X_{I_{j^*,k^*}}, X_{J_{j^*,k^*}} \right ] \leq \sqrt{2} + \lambda_{2\eta} \right ) \nonumber \\
& \leq \mathbb{P} \left ( 2 \sqrt{ k^* \sup_{x,y \in \mathcal{X}} \left | \hat{K} \left ( \b x, \b y \right ) - K \left ( \b x, \b y \right ) \right |}> \frac{1}{3} \left [ \sqrt{k^*}  \frac{\sqrt{2} - 1}{4}  \MMD_{K} \left [  \mathbb{P}, \mathbb{Q} \right ] - \left ( \lambda_{2\eta} + \frac{10}{\sqrt{2}} \right ) \right ]  \right ) \label{equation: detection prob I} \\
& \qquad+ 4 \exp \left ( - \frac{1}{18} \left \{ \left [ \sqrt{k^*} \frac{\sqrt{2} - 1}{4}  \MMD_{K} \left [  \mathbb{P}, \mathbb{Q} \right ] - \left ( \lambda_{2 \eta}+ \frac{10}{\sqrt{2}} \right ) \right ] \vee 0 \right \}^2 \right ). \label{equation: detection prob II}
\end{align}
\end{subequations}
where for typographical reasons we have put: 
\begin{align*}
    & I_{j^*,k^*} := 1:(\eta + k^* -2^{j^*}) \\
    & J_{j^*,k^*} := (\eta + k^* -2^{j^*}+1):(\eta + k^* ),  
\end{align*}
for $j = 0, \dots,  \left \lfloor \log_2 \left ( \eta + k^* \right ) \right \rfloor - 1$.
Now (\ref{equation: k^* def}) together with the fact that $\lambda_{2 \eta} + \frac{10}{\sqrt{2}} \leq \left ( \sqrt{50} + \sqrt{6} \right ) \times \sqrt{\log \left ( 2 \eta / \alpha \right )}$ for all $\alpha \in (0,1)$ and $\eta \in \mathbb{N}$ guarantees that the term on the right of the inequality in (\ref{equation: detection prob I}) is no larger than $C_2 \times \sqrt{k^*} \MMD_{K} \left [  \mathbb{P}, \mathbb{Q} \right ]$. Hence, appealing to (\ref{equation: no. features condition I}) and Theorem \ref{theorem: RFF exponential concentration}, we obtain that $(\ref{equation: detection prob I}) \leq \alpha / 2$. 
Moreover, since for each $k \leq \eta$ it holds that 
\begin{align*}
\MoveEqLeft \left \{ \sqrt{k} \left ( \frac{\sqrt{2} - 1}{4} \right ) \MMD_{K} \left [  \mathbb{P}, \mathbb{Q} \right ] - \left ( \lambda_{2 \eta}+ \frac{10}{\sqrt{2}} \right ) \geq \sqrt{18 \times 3 \log \left ( \frac{2 \eta}{\alpha} \right ) }  \right \} \\
&\subseteq \left \{ \sqrt{k} \times \MMD_{K} \left [  \mathbb{P}, \mathbb{Q} \right ] \geq \sqrt{ C_3 \log \left ( \frac{2 \eta}{\alpha} \right ) } \right \} 
\end{align*}
with $k^*$ defined as in (\ref{equation: k^* def}), we obtain that $(\ref{equation: detection prob II}) \leq 4 \times \left ( \alpha / 2 \eta \right )^3 \leq \alpha / 2$. With these facts in place the theorem is proved. 
\end{proof}

\subsubsection{Proof of Theorem \ref{thm:minimax}} \label{sec:proof minimax}

\begin{proof}
Let $\boldsymbol{1} = \left ( 1, \dots, 1 \right )\T \in \R^d$ and $\boldsymbol{0} = \left ( 0, \dots, 0 \right )\T \in \R^d$, let $\delta_{ \boldsymbol{x} }$ denote the Dirac measure (for any set $A \in \sigma\!\left( \mathbb{R}^d \right)$ and any $\boldsymbol{x} \in \mathbb{R}^d$,  $\delta_{ \boldsymbol{x} } \!\left ( A \right ) = 1$ if $\boldsymbol{x} \in A$ and $0$ otherwise), and let 
\begin{gather*}
\mathcal{M}^* = \left \{ \mathbb{P},\mathbb{Q} \in \mathcal{M}_1^+ \left ( \mathbb{R}^d \right ) \mid \mathbb{P} = p \delta_{ \boldsymbol{1}} + \left ( 1 - p \right ) \delta_{\boldsymbol{0}}, \mathbb{Q} = q \delta_{ \boldsymbol{1} } + \left ( 1 - q \right ) \delta_{\boldsymbol{0}}, \right. \\
\left. p,q \in \left [ 1/4, 3/4 \right] \text{ and } q - p \geq 1/4 \right \}. 
\end{gather*}
Therefore, for any $\mathbb{P},\mathbb{Q} \in \mathcal{M}^*$, making use of the symmetry of $K$, we have that 
\begin{align}
& \left (\MMD_K \left [ \mathbb{P}, \mathbb{Q} \right ] \right ) ^ 2 \nonumber \\
& = \mathbb{E}_{X,X' \sim \mathbb{P}} \left [ K \left ( X, X' \right )\right ] + \mathbb{E}_{Y,Y' \sim \mathbb{Q}} \left [ K \left ( Y , Y' \right ) \right ] - 2 \mathbb{E}_{X \sim \mathbb{P}, Y \sim \mathbb{Q}} \left [ K \left ( X, Y \right ) \right ] \\
& = p^2 K \left ( \boldsymbol{1}, \boldsymbol{1} \right ) + (1-p)^2 K \left ( \boldsymbol{0}, \boldsymbol{0} \right ) + 2 p (1-p) K \left ( \boldsymbol{1}, \boldsymbol{0} \right ) \nonumber \\ 
& \quad+ q^2 K \left ( \boldsymbol{1}, \boldsymbol{1} \right ) + (1-q)^2 K \left ( \boldsymbol{0}, \boldsymbol{0} \right ) + 2 q (1-q) K \left ( \boldsymbol{1}, \boldsymbol{0} \right ) \nonumber \\
& \quad-2 \left ( pq K \left ( \boldsymbol{1}, \boldsymbol{1} \right ) + (1-p) (1-q) K \left ( \boldsymbol{0}, \boldsymbol{0} \right ) + 2 \left ( p(1-q) + q(1-p) \right ) K \left ( \boldsymbol{1}, \boldsymbol{0} \right ) \right ) \nonumber \\
& = \left ( K \left ( \boldsymbol{1}, \boldsymbol{1}\right ) + K \left ( \boldsymbol{0}, \boldsymbol{0} \right ) - 2 K \left ( \boldsymbol{1} , \boldsymbol{0}\right ) \right ) \left ( p - q \right )^2 \label{equation: MMD Bern}
\end{align}
Moreover, for any $\mathbb{P},\mathbb{Q} \in \mathcal{M}^*$, we also have that 
\begin{subequations}
\begin{align}
\text{KL} \left ( \mathbb{Q} \parallel \mathbb{P} \right ) & = q \log \left ( \frac{q}{p} \right ) - \left ( 1 - q \right ) \log \left ( \frac{1-p}{1-q} \right ) \nonumber \\ 
& \leq \left ( q - p \right ) \left [ \frac{q}{p} - \frac{1-q}{2 - (p+q)} \right ] \label{equation: KL Bern I} \\
& \leq \frac{17}{6} \left ( q - p \right ) \label{equation: KL Bern II} \\
& \leq \frac{34}{3} \left ( q - p \right )^2 \label{equation: KL Bern III},
\end{align}
\end{subequations}
where (\ref{equation: KL Bern I}) holds due to the bound $\frac{x-1}{x+1} \leq \log (x) \leq x - 1$ for $x \geq 1$, (\ref{equation: KL Bern II}) holds because $p,q \in \left [ 1/4, 3/4 \right ]$, and (\ref{equation: KL Bern III}) holds due to the bound $x \leq 4 x^2$ for $x \geq 1/4$. Combining (\ref{equation: MMD Bern}) and (\ref{equation: KL Bern III}), and additionally making use of the shift invariance of $K$, we obtain that
\begin{equation}
\text{KL} \left ( \mathbb{Q} \parallel \mathbb{P} \right ) \leq \frac{17}{3} \left ( K \left ( \boldsymbol{0}, \boldsymbol{0} \right ) - K \left ( \boldsymbol{1} , \boldsymbol{0}\right ) \right )^{-1} \MMD_K^2 \left [ \mathbb{P}, \mathbb{Q} \right ]. 
\label{equation: MMD KL bound}
\end{equation}
Therefore, putting $C_K = (1/2) (3/17) \left ( K \left ( \boldsymbol{0}, \boldsymbol{0} \right ) - K \left ( \boldsymbol{1} , \boldsymbol{0}\right ) \right )$ for the constant in (\ref{equation: minimax event}), using (\ref{equation: MMD KL bound}) along with the fact that $\mathcal{M}^* \subset \mathcal{M}_1^+ \left ( \mathbb{R}^d \right )$, we have that 
\begin{align}
\text{L.H.S. of } (\ref{equation: minimax event}) \geq \inf_{N : \mathbb{P}_\infty \left ( N \leq \infty \right ) \leq \alpha} \sup_{\substack{\eta > 1 \\ \mathbb{P},\mathbb{Q} \in \mathcal{M}^*}} \mathbb{P} \left ( N \geq \eta + \frac{(1/2) \log \left ( 1 / \alpha \right )}{\text{KL} \left ( \mathbb{Q} \parallel \mathbb{P} \right )} \right ). 
\label{equation: MMD becomes KL}
\end{align}
Consequently the theorem is proved if we can find absolute constants $\alpha_0, \beta_0 \in (0,1)$ and pre- and post-change distributions $\mathbb{P},\mathbb{Q} \in \mathcal{M^*}$ such that for all $\alpha \leq \alpha_0$ it holds that
\begin{align}
\inf_{N : \mathbb{P}_\infty \left ( N \leq \infty \right ) \leq \alpha} \sup_{\substack{\eta > 1}} \mathbb{P} \left ( N \geq \eta + \frac{(1/2) \log \left ( 1 / \alpha \right )}{\text{KL} \left ( \mathbb{Q} \parallel \mathbb{P} \right )} \right ) \geq \beta_0. 
\label{equation: P,Q prob bound}
\end{align}
To show (\ref{equation: P,Q prob bound}), one can use a change of measure argument originally due to \citet{lai98information}. In fact one can directly use the version of \citeauthor{lai95sequential}'s argument adapted to finite sample analysis by \citet[Proposition 4.1]{yu23note}. For clarity of exposition, we repeat the argument below. The following holds for arbitrary $\mathbb{P}, \mathbb{Q} \in \mathcal{M^*}$. For each $n \in \mathbb{N}$ let $\mathcal{F}_n$ be the $\sigma$-field generated by $\{ X_i \}_{i=1}^n$ and let $\mathbb{P}^{\otimes n}$ be the restriction of the joint law to $\mathcal{F}_n$. We can write
\begin{equation*}
\frac{\mathrm{d} \mathbb{P}^{\otimes n}_\eta}{\mathrm{d} \mathbb{P}^{\otimes n}_\infty} = \exp \left ( \sum_{i=1}^n Z_i \right ), \hspace{2em} \text{ for } n > \eta
\end{equation*}
where, as in the main text, the subscripts indicate the time at which the change occurs. For a chosen $\alpha \in (0,1)$ and an arbitrary stopping time satisfying $\mathbb{P}_\infty (N < \infty ) \leq \alpha$ introduce the events
\begin{align*}
& \mathcal{E}_1 = \left \{ \eta \leq N \leq \eta + \frac{ (1/2) \log \left ( 1 / \alpha \right )}{ \text{KL} \left ( \mathbb{Q} \parallel \mathbb{P} \right )}, \sum_{i= \eta + 1}^{N} Z_i \leq \left ( 3/4 \right ) \log \left ( 1 / \alpha \right ) \right \} \\
& \mathcal{E}_2 = \left \{ \eta \leq N \leq \eta + \frac{\left ( 1/2 \right ) \log \left ( 1 / \alpha \right )}{ \text{KL} \left ( \mathbb{Q} \parallel \mathbb{P} \right )}, \sum_{i= \eta + 1}^{N} Z_i > \left ( 3/4 \right ) \log \left ( 1 / \alpha \right ) \right \}. 
\end{align*}
For the first event we have that
\begin{equation}
\mathbb{P}_\eta \left ( \mathcal{E}_1 \right ) = \int_{\mathcal{E}_1} \exp \left ( \sum_{i=1}^N Z_i \right ) \mathrm{d} \mathbb{P}_\eta \leq \exp \left ( \left ( 3/4 \right ) \log \left ( 1 / \alpha \right ) \right ) \mathbb{P}_\infty \left ( \mathcal{E}_1 \right ) \leq \alpha^{1/4}
\label{equation: E_1 bound}
\end{equation}
where the first inequality is due to the definition of $\mathcal{E}_1$ and the second inequality holds because the probability of $N$ being finite when no change occurs is bounded from above by $\alpha$. For the second event we have that 
\begin{subequations}
\begin{align}
\mathbb{P}_\eta \left ( \mathcal{E}_2 \right ) &\leq \mathbb{P}_\eta \left ( \bigcup_{1 = t}^{ \left ( 1/2 \right ) \log \left ( 1 / \alpha \right ) (\text{KL} \left ( \mathbb{Q} \parallel \mathbb{P} \right ))^{-1} -1} \sum_{i=\eta+1}^{\eta + t} Z_i > \left ( 3/4 \right ) \log \left ( 1 / \alpha \right ) \right ) \nonumber \\
& = \mathbb{P}_\eta \left ( \bigcup_{1 = t}^{ \left ( 1/2 \right ) \log \left ( 1 / \alpha \right ) (\text{KL} \left ( \mathbb{Q} \parallel \mathbb{P} \right ))^{-1} - 1} \sum_{i=\eta+1}^{\eta + t} \left ( Z_i - \text{KL} \left ( \mathbb{Q} \parallel \mathbb{P} \right ) \right ) > \left ( 1/4 \right ) \log \left ( 1 / \alpha \right ) \right ) \label{equation: E_2 bound i} \\
& \leq \frac{(1/2) \log (1/\alpha)}{\text{KL} \left ( \mathbb{Q} \parallel \mathbb{P} \right )} \exp \left ( - \log (1/\alpha) \right ) \label{equation: E_2 bound ii} \\
& \leq \alpha^{1/4} \label{equation: E_2 bound iii}
\end{align}
\end{subequations}
where in particular (\ref{equation: E_2 bound i}) holds by subtracting $t \times \text{KL} \left ( \mathbb{Q} \parallel \mathbb{P} \right )$ from both sides of the inequality and using the fact that for every $t$ in the union it holds that $t \times \text{KL} \left ( \mathbb{Q} \parallel \mathbb{P} \right ) < (1/2) \log (1/\alpha)$, (\ref{equation: E_2 bound ii}) holds due to a union bound argument followed by an application of  Hoeffding's inequality, and (\ref{equation: E_2 bound iii}) holds for all $\alpha \leq \alpha_0$ where
\begin{equation*}
\alpha_0 = \sup \left \{ \alpha \in (0,1) \mid (1/2) \log (1/\alpha) \alpha^{3/4} \leq \inf_{\mathbb{P}, \mathbb{Q} \in \mathcal{M}^*} \text{KL} \left ( \mathbb{Q} \parallel \mathbb{P} \right ) \text{ and } 2 \alpha^{1/4} < 1 \right \}. 
\end{equation*}
Since the above arguments do not depend on the stopping time $N$ or the change point location $\eta$, the bounds (\ref{equation: E_1 bound}) and (\ref{equation: E_2 bound iii}) together imply that $\text{R.H.S. of } \eqref{equation: MMD becomes KL} \geq 1 - 2 \alpha_0^{1/4}$, which proves the desired result. 
\end{proof}

\subsubsection{Proof of Lemma \ref{lemma: Bernstein MMD exponential bound}} \label{sec:proof-lemma: Bernstein MMD exponential bound}
\begin{proof} 
It suffices to show that for any $\epsilon > 0$ it holds that
\begin{equation}
    \P \otimes \Lambda \Bigg(\Bigg\|\frac1n\sum_{i=1}^n\hat z_K(X_i) - \E_{\mid \omega} \hat z_K(X)\Bigg\| > \epsilon\Bigg) \le 2 \exp\Bigg(-\frac12\frac{n\varepsilon^2}{\tilde \sigma^2+2\varepsilon }\Bigg), \label{eq:bernstein-bound-for-mean-embedding}
\end{equation}
which implies the stated result by using $\pm \E_{\mid \omega}\hat z_K(X)$, the triangle inequality, and a union bound. Here and in the following $\E_{\mid \omega}$ (resp.\ $\P_{\mid \omega}$) refers to the expectation (resp.\ probability) with $\omega$ fixed. To prove \eqref{eq:bernstein-bound-for-mean-embedding}, we fix  $\omega = (\omega_1,\ldots,\omega_r)$ and let $\eta_i = \hat z_K(X_i) - \E_{\mid \omega} \hat z_K(X)$. Then, all $\eta_i$-s are zero mean w.r.t.\ $\P$ and i.i.d. Further, we have for any $p\ge 2$ that
\begin{equation*}
    \E_{\mid \omega}\|\eta_1\|^p = \E_{\mid \omega} \|\eta_1\|^{p-2}\|\eta_1\|^2 \le \sup\|\eta_1\|^{p-2} \E_{\mid \omega} \|\eta_1\|^2 \le 2^{p-2}\E_{\mid \omega} \|\eta_1\|^2,
\end{equation*}
where the last inequality follows by using the triangle inequality and  $\langle \hat z_K(X_1),\hat z_K(X_1)\rangle = 1$ to obtain
\begin{equation*}
    \sup \|\eta_1\|^{p-2} \le 2^{p-2} \sup \|\hat z_K(X_1)\|^{p-2} = 2^{p-2}.
\end{equation*}
Hence, we have the bound
\begin{equation*}
     \E_{\mid \omega}\|\eta_1\|^p \le 2^{p-2}\E_{\mid \omega}\|\eta_1\|^2 \le \frac12p! 2^{p-2}\E_{\mid \omega}\|\eta_1\|^2
\end{equation*}
and the $\eta_i$-s satisfy the Bernstein condition with $B^2=\E_{\mid \omega}\|\eta_1\|^2$ and $H=2$. The application of Theorem~\ref{theorem: Yurinsky concentration} yields that for any $\varepsilon >0$
\begin{equation*}
     \P_{\mid \omega}\Bigg(\Bigg\|\frac1n\sum_{i=1}^n\eta_i\Bigg\| > \epsilon\Bigg) \le 2 \exp\Bigg(-\frac12\frac{n^2\varepsilon^2}{B^2+\varepsilon n H}\Bigg) \le  2 \exp\Bigg(-\frac12\frac{n\varepsilon^2}{B^2+\varepsilon H}\Bigg).
\end{equation*}
To lift the conditioning, we observe that
\begin{align*}
    \P\otimes \Lambda \Bigg(\Bigg\|\frac1n\sum_{i=1}^n\eta_i\Bigg\| > \epsilon\Bigg) &= \E \P_{\mid \omega}\Bigg(\Bigg\|\frac1n\sum_{i=1}^n\eta_i\Bigg\| > \epsilon\Bigg) \le 2 \E \exp\Bigg(-\frac12\frac{n\varepsilon^2}{B^2+\varepsilon H}\Bigg) \\
    &\le 2 \exp\Bigg(-\frac12\frac{n\varepsilon^2}{\E B^2+\varepsilon H}\Bigg), 
\end{align*}
where the last inequality holds by the concavity of $\exp(-1/x)$ for $x>0$ and Jensen's inequality. Finally, we conclude the proof by using that
\begin{align*}
    \E B^2 &= \E\Big(\langle \hat z_K(X),\hat z_K(X)\rangle + \langle \E_\P \hat z_K(X), \E_\P \hat z_K(Y) \rangle - 2 \langle \hat z_K(X), \E_\P \hat z_K(Y) \Big) \\
    &= \E K(X,X) - \E K(X,Y) = \tilde \sigma^2,
\end{align*}
where the boundedness of all terms allows exchanging the expectations by the Fubini-Tonelli theorem. \qedhere

\end{proof}

\subsubsection{Proof of Theorem \ref{theorem: multiple changes}} \label{sec:proof-theorem: multiple changes}
\begin{proof}
We first show that with probability at least $1 - \alpha$ no local tests conducted on null intervals will cause the algorithm to wrongly declare a change. For each $n \leq T$ define $k_n = \max \{ k \leq M+1 \mid \eta_k \leq n \}$. With $\pi_n$ defined as in \eqref{equation: pi_n def}, introduce the event
\begin{equation}
A_1 = \left \{ \cap_{n=2}^T \cap_{j=1}^{\left \lfloor \log_2 (n - \eta_{k_n} + 1) \right \rfloor - 1} \hat{M}_{n,j} \leq \pi_n \right \}. 
\label{equation: no spurious changes}
\end{equation}
We therefore have that
\begin{subequations}
\begin{align}
\mathbb{P} ( A_1^c ) & \leq \sum_{l=1}^T 2^l \max_{2^l \leq n \leq 2^{l+1}} \sum_{j=0}^{\left \lfloor \log_2(n - \eta_{k_n} + 1) \right \rfloor - 1} \mathbb{P} \left ( \hat{M}_{n,j} > \pi_n \right ) \label{equation: multi change peeling i} \\ 
& \leq \sum_{l=1}^T l 2^l e^{-(\pi_{2^l})^2 / 2} \label{equation: multi change peeling ii} \\ 
& = \frac{\alpha \log_2 (T)}{ 1 + \log_2 (T)} < \alpha, \nonumber 
\end{align}
\end{subequations}
where \eqref{equation: multi change peeling i} follows by arguments similar to \eqref{equation: union max} and \eqref{equation: multi change peeling ii} holds due to Lemma \ref{lemma: Fourier MMD exponential bound}. Therefore, if $M = 0$ we are done. We now tackle the case $M > 0$. We will show that when $M > 0$ with probability at least $1-2 \alpha$ all true change points are detected and localized at the stated rate. We first make the constants in Theorem \ref{theorem: multiple changes} explicit: 
\begin{align*}
    & C_1 = 2 \times C_3 \\ 
    & C_2 = 4 \\ 
    & C_3 = \left ( 2 \times \left ( \sqrt{2} + \sqrt{4} + \sqrt{6} \right ) \right )^2. 
\end{align*}
Introduce also the event
\begin{equation}
A_2 = \left \{ \sup_{\b x,\b y \in \mathcal{X}} \left | \hat{K} \left ( \b x, \b y \right ) - K \left ( \b x, \b y \right ) \right | \leq \frac{h \left ( d , \left | \mathcal{X} \right |, \sigma \right ) + \sqrt{2 \log (2 / \alpha)}}{\sqrt{r}} \right \}, 
\label{equation: RFFs kernel is close}
\end{equation}
on which the random Fourier feature kernel $\hat{K}$ is close to $K$ in sup norm sense. Note that by Theorem \ref{theorem: RFF exponential concentration} the event \eqref{equation: RFFs kernel is close} holds with probability at least $1 - \alpha/2$. Introduce also the sequence of events
\begin{equation*}
B_k = \left \{ \eta_k < \hat{\eta}_k \leq \eta_k + C_3 \frac{\log \left ( T / \alpha \right )}{\left ( \MMD_K \left [ \mathbb{Q}_k, \mathbb{Q}_{k+1} \right ] \right )^2} \right \}, \quad \text{for} \quad k = 1, \dots M, 
\end{equation*}
such that on the event $B_k$ the $k$-th change point is detected and localized at the required rate. If $\hat{M} < M$ we employ the convention $\hat{\eta}_k = T$ for all $k > \hat{M}$. We now show that on \eqref{equation: no spurious changes} and \eqref{equation: RFFs kernel is close} the event $B_1$ holds with probability at least $T/(2 \alpha)$. Introduce the quantity
\begin{align*}
n_1^* = \min \left \{ n < \eta_2 \mid n - \eta_1 = 2^{j_1^*} \text{ for } j_1^* \in \mathbb{N}, \sqrt{2^{j_1^*}} \text{MMD}_K [\mathbb{Q}_1, \mathbb{Q}_2] > \sqrt{C_3 \log (2T / \alpha)}  \right \} 
\end{align*}
To ease the notation put $I_{1}^* = 1:\eta_1$, $J_1^* = (\eta_1 + 1):n_1^*$, and $V_1^* = \frac{2^{j_1^*} (n_1^*-2^{j_1^*})}{n_1^*}$. Observe that 
\begin{subequations}
\begin{align}
& \sqrt{V_1^*} \text{MMD}_{\hat{K}} \left [ X_{I_1^*}, X_{J_1^*}\right ] \nonumber \\ 
& \quad \geq \sqrt{V_1^*} \text{MMD}_{K} \left [ \mathbb{Q}_1, \mathbb{Q}_2 \right ] + \sqrt{V_1^*} \left ( \text{MMD}_{K} \left [ X_{I_1^*}, X_{J_1^*}\right ] - \mathbb{E} \left [ \text{MMD}_{K} \left [ X_{I_1^*}, X_{J_1^*}\right ]\right ] \right ) \label{equation: multiple change trinagle I} \\ 
& \quad \quad - 2 \sqrt{V_1^*} \sqrt{\sup_{\boldsymbol{x,y}} \left | \hat{K} (\boldsymbol{x, y}) - K (\boldsymbol{x,y}) \right |} \nonumber \\ 
& \quad \quad - \sqrt{V_1^*} \left | \mathbb{E} \left [ \text{MMD}_{K} \left [ X_{I_1^*}, X_{J_1^*}\right ] \right ] - \text{MMD}_{K} \left [ \mathbb{Q}_1, \mathbb{Q}_2 \right ] \right | \nonumber \\ 
& \quad \geq \frac{1}{2} \sqrt{V_1^*} \text{MMD}_{K} \left [ \mathbb{Q}_1, \mathbb{Q}_2 \right ] + \sqrt{V_1^*} \left ( \text{MMD}_{K} \left [ X_{I_1^*}, X_{J_1^*}\right ] - \mathbb{E} \left [ \text{MMD}_{K} \left [ X_{I_1^*}, X_{J_1^*}\right ]\right ] \right ) - \frac{4}{\sqrt{2}} \label{equation: multiple change trinagle II}
\end{align}
\end{subequations}
Where \eqref{equation: multiple change trinagle I} holds by the triangle inequality and arguments similar to \eqref{equation: MMD diff bound}, and \eqref{equation: multiple change trinagle II} holds by arguments similar to \eqref{equation: absolute expectation bound} together with the fact that on \eqref{equation: RFFs kernel is close} we will have that
\begin{equation*}
\sup_{\boldsymbol{x,y}} \left | \hat{K} (\boldsymbol{x, y}) - K (\boldsymbol{x,y}) \right | \leq \frac{1}{4} \min_{k=1, \dots, M} \left ( \text{MMD}_K \left [ \mathbb{Q}_k, \mathbb{Q}_{k+1} \right ] \right )^2 \leq \frac{1}{4} \left ( \text{MMD}_K \left [ \mathbb{Q}_1, \mathbb{Q}_2 \right ] \right )^2. 
\end{equation*}
Using the above we therefore have that
\begin{align*}
\mathbb{P} \left ( B_1^c \right ) & \leq \mathbb{P} \left ( \sqrt{V_1^*} \text{MMD}_{\hat{K}} \left [ X_{I_1^*}, X_{J_1^*}\right ] \leq \lambda_T \right ) \nonumber \\
& \leq \mathbb{P} \left ( \sqrt{V_1^*} \left ( \mathbb{E} \left [ \text{MMD}_{\hat{K}} \left [ X_{I_1^*}, X_{J_1^*}\right ] \right ] - \text{MMD}_{\hat{K}} \left [ X_{I_1^*}, X_{J_1^*}\right ] \right )  \right  . \\ 
& \quad \quad \quad \quad \quad  \left . > \frac{1}{2} \sqrt{V_1^*} \text{MMD}_K \left [ \mathbb{Q}_1, \mathbb{Q}_2 \right ] - \left ( \sqrt{4} + \sqrt{6} \right ) \sqrt{\log (2T / \alpha)} \right )  \leq \frac{\alpha}{2T}, 
\end{align*}
where the final inequality follows from the definition of $C_3$, the bounded difference property of $\text{MMD}_K$, and Theorem \ref{theorem: bounded difference}. By the definition we must have that $2^{j_1^*} \leq \delta_1 / 2$, therefore on the event $A_1$ the first change point is detected and the algorithm starts at most mid way between $\eta_1$ and $\eta_2$. Identical arguments to those above therefore give that, on the events $A_1$, $A_2$, and $B_1$ the event $B_2$ holds with probability at least $1 - \alpha / (2T)$ and having detected the second change point the algorithm re-starts at most mid way between $\eta_2$ and $\eta_3$. By induction and a union bound argument, on the events \eqref{equation: no spurious changes} and \eqref{equation: RFFs kernel is close} the events $B_1, \dots B_M$ hold with probability at least $1 - M \alpha / (2T) \geq 1 - \alpha / 2$. Since \eqref{equation: no spurious changes} and \eqref{equation: RFFs kernel is close} jointly hold with probability $ 1 - 3 \alpha / 2$ we are done. 
\end{proof}

\subsection{Auxiliary results} \label{sec:auxiliary-results}

In this section, we collect a few auxiliary results. Besides establishing useful bounds on real numbers in Lemma~\ref{lemma: min bound} and Lemma~\ref{lemma: sqrt is Holder}, we show that the bounded differences property of RFF-MMD (Lemma~\ref{lemma: random Fourier bounded diff}) leads to its exponential concentration (Lemma~\ref{lemma: Fourier MMD exponential bound}). The latter is one of the key ingredients for deriving our threshold sequences elaborated Section~\ref{section: theoretical results}.

\begin{lemma}
For any $x,y > 0$ it holds that
\begin{equation*}
\frac{1}{2} \min \left ( x, y \right ) \leq \frac{xy}{x+y} \leq \min \left ( x, y \right ), 
\end{equation*}
and, moreover, both inequalities are tight. 
\label{lemma: min bound}
\end{lemma}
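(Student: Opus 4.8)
The statement to prove is the elementary inequality
\[
\frac{1}{2} \min(x,y) \leq \frac{xy}{x+y} \leq \min(x,y)
\]
for $x, y > 0$, with both inequalities tight. The plan is to exploit the symmetry of all three expressions under swapping $x$ and $y$, so that without loss of generality I may assume $x \leq y$, whence $\min(x,y) = x$. Everything then reduces to two one-line manipulations.

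First I would establish the upper bound. Assuming $x \leq y$, I want $\frac{xy}{x+y} \leq x$. Since $x, y > 0$, the denominator $x+y$ is positive, so this is equivalent to $xy \leq x(x+y)$, i.e.\ $xy \leq x^2 + xy$, i.e.\ $0 \leq x^2$, which holds trivially. For the lower bound, I want $\frac{x}{2} \leq \frac{xy}{x+y}$, equivalently $x(x+y) \leq 2xy$, i.e.\ $x^2 + xy \leq 2xy$, i.e.\ $x^2 \leq xy$, i.e.\ $x \leq y$ (after dividing by the positive quantity $x$), which is exactly the assumption $x \leq y$. Thus both inequalities follow from the single ordering hypothesis.

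For tightness, I would exhibit explicit equality cases. The upper bound $\frac{xy}{x+y} = \min(x,y)$ is approached as the larger variable tends to infinity: with $x$ fixed and $y \to \infty$ one has $\frac{xy}{x+y} = \frac{x}{1 + x/y} \to x = \min(x,y)$, so the inequality is tight in the limit (one may instead simply note it is attained only in this asymptotic sense, or state tightness as a supremum). The lower bound is attained exactly when $x = y$: there $\frac{xy}{x+y} = \frac{x^2}{2x} = \frac{x}{2} = \frac{1}{2}\min(x,y)$.

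This proof presents essentially no obstacle; it is a routine algebraic verification. The only point requiring a word of care is the interpretation of tightness for the upper bound, since equality is not attained at any finite pair $(x,y)$ but only in the limit as the ratio of the two variables diverges. I would therefore phrase the upper-bound tightness as the statement that the constant $1$ cannot be improved, witnessed by the limiting family above, while the lower-bound constant $\tfrac12$ is attained exactly on the diagonal $x = y$.
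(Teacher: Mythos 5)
Your proof is correct and takes essentially the same approach as the paper: both arguments reduce to an elementary algebraic comparison (the paper factors $\frac{xy}{x+y} = \min(x,y)\bigl(1 + \frac{\min(x,y)}{\max(x,y)}\bigr)^{-1}$ while you cross-multiply after assuming $x \le y$, which is a purely cosmetic difference). Your treatment of tightness — equality at $x=y$ for the lower bound, and only in the limit $y \to \infty$ for the upper bound — matches the paper's exactly.
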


\begin{proof}
We first note that
\begin{equation*}
\frac{xy}{x+y} = \frac{\min \left ( x, y \right ) \max \left ( x, y \right )}{\min \left ( x, y \right ) + \max \left ( x, y \right )} = \min \left ( x, y \right ) \left ( 1 + \frac{\min \left ( x , y \right )}{\max \left ( x, y \right )} \right )^{-1}. 
\end{equation*}
For the lower bound we use the fact that $1 + \frac{\min \left ( x , y \right )}{\max \left ( x, y \right )} \leq 2$, where equality holds when $x = y$. For the upper bound, we use that $1 + \frac{\min \left ( x , y \right )}{\max \left ( x, y \right )} \geq 1$, where equality holds in the limit when, for instance, $x$ is fixed and $y \rightarrow + \infty$. 
\end{proof}

\begin{lemma}
For $x,y > 0$ it holds that $\left | \sqrt{x} - \sqrt{y} \right | \leq \sqrt{\left | x - y \right |}.$
\label{lemma: sqrt is Holder}
\end{lemma}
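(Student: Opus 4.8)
The plan is to exploit the symmetry of the claimed inequality under swapping $x$ and $y$, so that without loss of generality one may assume $x \geq y$. Under this assumption both sides of the desired bound become unsigned: since the square root is monotone increasing, $\left| \sqrt{x} - \sqrt{y} \right| = \sqrt{x} - \sqrt{y} \geq 0$, and likewise $\left| x - y \right| = x - y \geq 0$. The goal thus reduces to establishing $\sqrt{x} - \sqrt{y} \leq \sqrt{x - y}$ for $x \geq y > 0$.

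I would then prove this reduced inequality via subadditivity of the square root, namely $\sqrt{a+b} \leq \sqrt{a} + \sqrt{b}$ for all $a, b \geq 0$, which itself follows immediately upon squaring since $\left( \sqrt{a} + \sqrt{b} \right)^2 = a + 2\sqrt{ab} + b \geq a + b$. Applying this with $a = x - y$ and $b = y$ gives
\begin{equation*}
\sqrt{x} = \sqrt{(x - y) + y} \leq \sqrt{x - y} + \sqrt{y},
\end{equation*}
and rearranging yields exactly $\sqrt{x} - \sqrt{y} \leq \sqrt{x - y}$, as required. An equally short alternative is to square the reduced inequality directly: $\left( \sqrt{x} - \sqrt{y} \right)^2 = x - 2\sqrt{xy} + y \leq x - y$ is equivalent to $y \leq \sqrt{xy}$, i.e.\ $\sqrt{y} \leq \sqrt{x}$, which holds by the assumption $x \geq y$.

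There is no real obstacle here: the statement is an elementary real-variable inequality and the only point requiring any care is checking that the reduction to $x \geq y$ is legitimate, which follows because both $\left| \sqrt{x} - \sqrt{y} \right|$ and $\sqrt{\left| x - y \right|}$ are symmetric in $x$ and $y$. I would present the subadditivity argument as the main line since it is the most transparent.
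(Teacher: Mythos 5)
Your proof is correct. It takes a mildly different route from the paper: you reduce by symmetry to the case $x \geq y$ and then invoke subadditivity of the square root, $\sqrt{(x-y)+y} \leq \sqrt{x-y} + \sqrt{y}$, whereas the paper avoids any case reduction by factoring the difference as
\begin{equation*}
\left| \sqrt{x} - \sqrt{y} \right| = \frac{\left| x - y \right|}{\sqrt{x} + \sqrt{y}} \leq \frac{\left| x - y \right|}{\left| \sqrt{x} - \sqrt{y} \right|},
\end{equation*}
which after clearing the denominator gives $\left| \sqrt{x} - \sqrt{y} \right|^2 \leq \left| x - y \right|$ directly (handling $x = y$ separately to avoid division by zero). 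Both arguments are one-liners; yours has the small advantage of isolating subadditivity, a reusable fact (it is the statement that $t \mapsto \sqrt{t}$ is $\tfrac{1}{2}$-H\"older, which is what the lemma's label alludes to), while the paper's conjugate trick is marginally shorter since it needs no WLOG step. Your second, squaring-based alternative is also fine and is essentially the paper's argument read backwards.
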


\begin{proof}
When $x = y$ the statement is trivially true. When $x \neq y$ it holds that
\begin{equation*}
\left | \sqrt{x} - \sqrt{y} \right | = \frac{\left | x - y \right |}{\sqrt{x} + \sqrt{y}} \leq \frac{\left | x - y \right |}{\left | \sqrt{x} - \sqrt{y} \right |} \Rightarrow \left | \sqrt{x} - \sqrt{y} \right | \leq \sqrt{\left | x - y \right |}.%
\end{equation*}
\end{proof}

\begin{lemma}
The RFF-MMD as defined in (\ref{equation: random Fourier MMD}) between two empirical measures composed respectively of $n$ and $m$ sample points is a function mapping from $(\R^d)^{m+n} \to \R$. This function has the bounded differences property with constants
\begin{align}
c_{i} = 
\begin{cases}
2 / n & \text{ if } i = 1, \dots, n, \\
2 / m & \text{ if } i = n + 1, \dots, n + m.
\end{cases}
\label{equation: self bound constants}
\end{align}
\label{lemma: random Fourier bounded diff}
\end{lemma}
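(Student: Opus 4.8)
The plan is to verify the defining inequality of the self-bounding (bounded-differences) property directly, by perturbing one coordinate at a time; the exponential concentration of Lemma~\ref{lemma: Fourier MMD exponential bound} then follows by feeding the resulting constants into Theorem~\ref{theorem: bounded difference}. Write
\begin{equation*}
f\!\left( X_1, \dots, X_n, Y_1, \dots, Y_m \right) = \MMD_{\hat K}\!\left[ X_{1:n}, Y_{1:m}\right] = \norm{\frac{1}{n}\sum_{i=1}^n \hat{z}_K(X_i) - \frac{1}{m}\sum_{j=1}^m \hat{z}_K(Y_j)}{2},
\end{equation*}
which indeed maps $(\R^d)^{m+n} \to \R$ as claimed.

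The crucial preliminary fact I would establish first is that the RFF feature map has unit norm everywhere. From the definition in \eqref{equation: random Fourier feture kernel approx} and the Pythagorean identity,
\begin{equation*}
\norm{\hat{z}_K(\b x)}{2}^2 = \frac{1}{r}\sum_{j=1}^r \left( \sin^2(\omega_j\T \b x) + \cos^2(\omega_j\T \b x)\right) = 1 \qquad \text{for every } \b x \in \R^d,
\end{equation*}
so that $\norm{\hat{z}_K(\b x)}{2} = 1$ uniformly in $\b x$, irrespective of the sampled frequencies $\omega_1, \dots, \omega_r$.

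With this in hand the bounded-differences constants fall out immediately. Fix $i \leq n$ and replace $X_i$ by an arbitrary $X_i'$, keeping all other arguments fixed; then only the first average changes, by exactly $\tfrac{1}{n}\left(\hat{z}_K(X_i') - \hat{z}_K(X_i)\right)$. Applying the reverse triangle inequality $\bigl|\norm{a}{2} - \norm{b}{2}\bigr| \leq \norm{a-b}{2}$, followed by the ordinary triangle inequality and the unit-norm identity, yields
\begin{equation*}
\left| f(\dots, X_i, \dots) - f(\dots, X_i', \dots)\right| \leq \frac{1}{n}\norm{\hat{z}_K(X_i) - \hat{z}_K(X_i')}{2} \leq \frac{2}{n}.
\end{equation*}
The identical computation applied to a $Y_j$-coordinate, where it is the second average that carries the prefactor $1/m$, produces the bound $2/m$. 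These are exactly the constants in \eqref{equation: self bound constants}, completing the argument.

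I expect no genuine obstacle: the whole proof rests on the single elementary observation that $\norm{\hat{z}_K(\b x)}{2} = 1$ uniformly in $\b x$, which is what collapses the per-coordinate perturbation bound to a data-independent constant. The only points warranting mild care are applying the reverse triangle inequality in the correct direction (so that the perturbation of a single embedding, rather than of the full statistic, controls the difference) and tracking which of the two averages---and hence whether the prefactor $1/n$ or $1/m$ appears---is affected by each coordinate.
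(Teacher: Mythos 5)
Your proof is correct, but it takes a genuinely different route from the paper's. The paper proves the bounded-differences property via the dual (integral probability metric) representation $\MMD_{\hat K} = \sup_{f \in \hat{\mathcal{G}}}\bigl( \frac{1}{n}\sum_i f(\b x_i) - \frac{1}{m}\sum_j f(\b y_j)\bigr)$ over the unit ball of $\mathcal{H}_{\hat K}$: it bounds the difference of two suprema by the supremum of the differences, then invokes the reproducing property, Cauchy--Schwarz, and $\hat K(\b x, \b x) = 1$ to conclude. You instead work directly with the explicit Euclidean form \eqref{equation: random Fourier MMD}, observing that $\norm{\hat z_K(\b x)}{2} = 1$ pointwise and applying the reverse triangle inequality for the $\ell_2$ norm. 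Both arguments pivot on exactly the same fact --- the unit norm of the feature map / the normalization $\hat K(\b x,\b x)=1$ --- so they are morally equivalent, but yours is shorter and avoids the RKHS machinery entirely, which is natural here since the RFF-MMD is literally defined as a Euclidean norm. What the paper's dual-representation argument buys is reusability: the identical computation applies verbatim to the exact kernel MMD \eqref{equation: biased MMD estimate} (the paper explicitly recycles it in the proof of Theorem~\ref{theorem: high prob detection delay}, where the statistic is no longer a finite-dimensional Euclidean norm), whereas your argument would need the additional remark that $\norm{K(\cdot,\b x)}{\H_K} = \sqrt{K(\b x,\b x)} \leq 1$ to cover that case. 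One cosmetic point: what you (correctly) verify is the bounded-differences property of Theorem~\ref{theorem: bounded difference}; the lemma's phrase ``self bounding property'' is the paper's slightly nonstandard name for the same thing, so no issue arises.
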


\begin{proof}
Recall that if $K : \mathbb{R}^d \times \mathbb{R}^d \rightarrow \mathbb{R}$ is the reproducing kernel for some RKHS $\mathcal{H}_K$, for any $\mathbb{P},\mathbb{Q} \in \mathcal{M}_1^+$, one has that
\begin{equation*}
    \MMD_{K} \left [ \mathbb{P}, \mathbb{Q} \right ] = \sup_{f \in \mathcal{H}_K : \left \| f \right \|_{\mathcal{H}_K} \le 1} \left ( \mathbb{E}_{X \sim \mathbb{P}} \left [ f \left ( X \right ) \right ] - \mathbb{E}_{Y \sim \mathbb{Q}} \left [ f \left ( Y \right ) \right ]  \right ) 
\end{equation*}
Note that $\hat{K} : \mathbb{R}^d \times \mathbb{R}^d \rightarrow \mathbb{R}$ as defined in (\ref{equation: random Fourier feture kernel approx}) is the reproducing kernel for an RKHS $\mathcal{H}_{\hat{K}}$ whose elements are vectors in $\mathbb{R}^{2r}$. Introduce the set
\begin{equation*}
    \hat{\mathcal{G}} = \{ f \in \mathcal{H}_{\hat{K}} \mid \left \| f \right \|_{\mathcal{H}_{\hat{K}}} \leq 1 \}. 
\end{equation*}
Let $\MMD_{\hat K} \!\left (\b x_{1:n}, \b y_{1:m} \right ) \left ( \tilde{\b x}_{i'} \right )$ stand for (\ref{equation: random Fourier MMD}) with inputs $\left \{ \b x_1, \dots, \b x_n \right \}$ and $\left \{ \b y_1 , \dots, \b y_m \right \}$ with the $i'$-th $\b x$ replaced by $\tilde{\b x}_{i'}$. We therefore have that
\begin{align*}
\MoveEqLeft\sup_{\substack{\b x_1, \dots, \b x_n \\ \b y_1, \dots, \b y_m \\ \tilde{\b x}_{i'}}} \left | \MMD_{\hat{K}} \left [ \b x_{1:n}, \b y_{1:m} \right ] - \MMD_{\hat{K}} \left [ \b x_{1:n}, \b y_{1:m} \right ] \left ( \tilde{x}_{i'} \right ) \right | \\
& = \sup_{\substack{\b x_1, \dots, \b x_n \\ \b y_1, \dots, \b y_m \\ \tilde{\b x}_{i'}}} \left | \sup_{f \in \hat{\mathcal{G}}} \left ( \frac{1}{n} \sum_{i=1}^n f (\b x_i) - \frac{1}{m} \sum_{j=1}^m f (\b y_i) \right ) \right. \\
& \hspace{8em}-  \left. \sup_{f \in \hat{\mathcal{G}}} \left ( \frac{1}{n} \sum_{i=1}^n f (\b x_i) - \frac{1}{n} \left [ f(\b x_{i'}) - f(\tilde{\b x}_{i'}) \right ] - \frac{1}{m} \sum_{j=1}^m f (\b y_i) \right ) \right | \\
& \leq \sup_{\substack{\b x,\tilde{\b x} \\ f \in \hat{\mathcal{G}}}} \frac{1}{n} \left | f (\b x) - f(\tilde{\b x}) \right | = \sup_{\b x, \tilde{\b x}} \frac{1}{n} \left \| \hat{K} \left ( \cdot, \b x \right ) - \hat{K} \left ( \cdot, \tilde{\b x} \right ) \right \|_{\hat{\mathcal{H}}} \\
& \leq \sup_{\b x, \tilde{\b x}} \frac{1}{n} \left ( \left \| \hat{K} \left ( \cdot, \b x \right ) \right \|_{\hat{\mathcal{H}}} + \left \| \hat{K} \left ( \cdot, \tilde{\b x} \right ) \right \|_{\hat{\mathcal{H}}} \right ) = \frac{2}{n}
\end{align*}
where we used the reverse triangle inequality, the reproducing property, CBS for obtaining the supremum over a unit ball and that, for any $\b x \in \mathbb{R}^d$ one has that 
\begin{equation*}
\left < \hat{K} \left ( \cdot, \b x \right) , \hat{K} \left ( \cdot, \b x \right) \right >_{\mathcal{H}_{\hat{K}}}^2 = \cos (0) = 1. 
\end{equation*}
The same calculations can be applied to $\MMD_{\hat K} \!\left ( \b x_{1:n}, \b y_{1:m} \right ) \left ( \tilde{\b y}_{j'} \right )$. This proves the desired result. 
\end{proof}

\begin{lemma}
Given two independent samples $\left \{ X_1, \dots, X_n \right \}$ and $\left \{ Y_1, \dots, Y_m \right \}$, each with mutually independent entries drawn from some $\mathbb{P} \in \mathcal{M}_1^+$, for any $\varepsilon > 0$, it holds that 
\begin{equation*}
\mathbb{P} \left ( \sqrt{\frac{nm}{n+m}} \MMD_{\hat K} \!\left [X_{1:n}, Y_{1:m} \right ] > \sqrt{2} + \varepsilon \right ) \leq e^{-\varepsilon^2 / 2}. 
\end{equation*}
\label{lemma: Fourier MMD exponential bound}
\end{lemma}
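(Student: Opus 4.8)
The plan is to prove the bound conditionally on the random frequencies and then integrate them out, using that for fixed frequencies $\MMD_{\hat K}[X_{1:n}, Y_{1:m}]$ is a deterministic function of the $n+m$ independent observations. Write $\bm\omega = (\omega_1, \dots, \omega_r)$, $M = \MMD_{\hat K}[X_{1:n}, Y_{1:m}]$, and $S = \sqrt{nm/(n+m)}\,M$. First I would condition on $\bm\omega$ and invoke Lemma \ref{lemma: random Fourier bounded diff}, which establishes that $M$ has the bounded differences property with the constants \eqref{equation: self bound constants}, namely $c_i = 2/n$ for the $X$-coordinates and $c_i = 2/m$ for the $Y$-coordinates. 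Summing their squares gives $\sum_i c_i^2 = n(2/n)^2 + m(2/m)^2 = 4(n+m)/(nm)$, so McDiarmid's inequality (Theorem \ref{theorem: bounded difference}) yields, for every $t>0$,
\[
\P(M - \E[M \mid \bm\omega] > t \mid \bm\omega) \le \exp(-nm\,t^2 / (2(n+m))).
\]
Substituting $t = \sqrt{(n+m)/(nm)}\,\varepsilon$ rescales this into $\P(S - \E[S \mid \bm\omega] > \varepsilon \mid \bm\omega) \le e^{-\varepsilon^2/2}$, since the exponent collapses to $\varepsilon^2/2$ and the deviation event transforms exactly into the one for $S$.

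The second ingredient is a bound on the conditional mean. Because both samples are drawn from the same $\P$, the centred feature vectors $\hat z_K(X_i) - \hat\mu$ and $\hat z_K(Y_j) - \hat\mu$, with $\hat\mu = \E[\hat z_K(X) \mid \bm\omega]$, are mean zero and independent, so all cross terms vanish and
\[
\E[M^2 \mid \bm\omega] = \left(\tfrac1n + \tfrac1m\right)\left(\E[\|\hat z_K(X)\|_2^2 \mid \bm\omega] - \|\hat\mu\|_2^2\right).
\]
The key observation is the unit-norm identity $\|\hat z_K(\b x)\|_2^2 = \tfrac1r \sum_{j=1}^r(\sin^2(\omega_j\T \b x) + \cos^2(\omega_j\T \b x)) = 1$ for every $\b x$, which follows directly from \eqref{equation: random Fourier feture kernel approx}; hence the parenthetical factor is at most $1$ and $\E[M^2 \mid \bm\omega] \le (n+m)/(nm)$. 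Jensen's inequality then gives $\E[S \mid \bm\omega] = \sqrt{nm/(n+m)}\,\E[M \mid \bm\omega] \le \sqrt{nm/(n+m)}\cdot\sqrt{(n+m)/(nm)} = 1 \le \sqrt 2$, so the $\sqrt2$ offset in the statement comfortably absorbs the conditional mean.

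Combining the two pieces, since $\E[S \mid \bm\omega] \le \sqrt 2$ implies the inclusion $\{S > \sqrt 2 + \varepsilon\} \subseteq \{S - \E[S \mid \bm\omega] > \varepsilon\}$, I obtain $\P(S > \sqrt 2 + \varepsilon \mid \bm\omega) \le e^{-\varepsilon^2/2}$ for every realisation of the frequencies, and integrating over $\Lambda^{\otimes r}$ removes the conditioning while preserving the bound. I expect the main obstacle to be the conditional-mean step: one must verify the cancellation of cross terms carefully and apply the unit-norm identity correctly, as this is exactly what turns the raw variance proxy into the clean constant bounded by $\sqrt2$. The rescaling bookkeeping in the McDiarmid step is routine but must be tracked so that the $nm/(n+m)$ normalisation matches the claimed statement.
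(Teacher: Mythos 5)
Your proposal is correct and follows essentially the same route as the paper's proof: condition on the frequencies, apply McDiarmid via the bounded-differences constants of Lemma~\ref{lemma: random Fourier bounded diff}, bound the conditional mean of the statistic by $\sqrt{2}$, and integrate over $\Lambda^{\otimes r}$. The only (harmless) difference is in the conditional-mean step, where your variance decomposition with centred feature vectors gives $\mathbb{E}[\mathrm{MMD}_{\hat K}^2 \mid \bm\omega] \le (1/n+1/m)(1-\|\hat\mu\|_2^2) \le (n+m)/(nm)$, slightly sharper than the paper's bound of $2(n+m)/(nm)$ obtained from $\hat K \ge -1$; both comfortably yield the $\sqrt{2}$ offset.
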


\begin{proof}
It is an immediate consequence of Lemma \ref{lemma: random Fourier bounded diff} and Theorem \ref{theorem: bounded difference} that for any $\varepsilon' > 0$
\begin{align}
& \mathbb{P} \left ( \MMD_{\hat K}\! \left [ X_{1:n}, Y_{1:m} \right ] - \mathbb{E}\! \left [ \MMD_{\hat K} \!\left [X_{1:n}, Y_{1:m} \right ] \mid \omega_1, \dots, \omega_r \right ] > \varepsilon' \mid \omega_1, \dots, \omega_r \right ) \nonumber \\
& \hspace{4em} \leq \exp \left ( -\frac{\varepsilon'^2}{2} \frac{nm}{n+m} \right ). 
\label{equation: Fourier MMD exponential bound}
\end{align}
Moreover, arguing as in the last step of the proof of Proposition 4 in \cite{kalinke25maximum} gives
\begin{align}
& \mathbb{E} \!\left [ \MMD_{\hat K} \!\left [X_{1:n}, Y_{1:m} \right ] \mid \omega_1, \dots, \omega_r \right ] \nonumber \\
& = \mathbb{E} \left [ \left ( \frac{1}{n^2} \sum_{i=1}^n \sum_{j = 1}^n \hat{K} \left ( X_i, X_j \right ) + \frac{1}{m^2} \sum_{i=1}^m \sum_{j=1}^m \hat{K} \left ( Y_i, Y_j \right ) \right . \right . \nonumber \\
& \hspace{16em} \left . \left . - \frac{2}{nm} \sum_{i=1}^n \sum_{j=1}^m \hat{K} \left ( X_i, Y_j \right ) \right )^{\frac{1}{2}} \mid \omega_1, \dots, \omega_r \right ] \nonumber \\
& \leq \Bigg ( \left \{ \frac{1}{n} + \frac{1}{m} \right \} \underbrace{\mathbb{E} \left [ \hat{K} \left (X, X \right ) \mid \omega_1, \dots, \omega_r \right ]}_{=1} \nonumber \\
&\hspace{12em} +\underbrace{\left \{ \frac{n-1}{n} + \frac{m-1}{m} - 2 \right \}}_{=-\left(\frac1n + \frac1m\right)} \mathbb{E} \left [ \hat{K} \left (X, Y \right ) \mid \omega_1, \dots, \omega_r \right ]  \Bigg ) ^ {\frac{1}{2}} \nonumber \\
& = \Bigg ( \left \{ \frac{1}{n} + \frac{1}{m} \right \} \Bigg ( 1 - \underbrace{\mathbb{E} \left [ \hat{K} \left (X, Y \right ) \mid \omega_1, \dots, \omega_r \right ]}_{\ge -1} \Bigg ) \Bigg ) ^{\frac{1}{2}}\nonumber \\
& \leq \sqrt{\frac{2 \left ( m + n \right )}{mn}},
\label{equation: Fourier MMD eman bound}
\end{align}
where the first inequality follows from Jensen's inequality.
Consequently, setting $\varepsilon' = \varepsilon \sqrt{\frac{n+m}{nm}}$, plugging (\ref{equation: Fourier MMD eman bound}) into (\ref{equation: Fourier MMD exponential bound}), and integrating over the $\omega$-s with respect to the product measure $\Lambda^{\otimes r} := \Lambda \otimes \dots \otimes \Lambda$ yields the desired result.
\end{proof}

\subsection{External statements} \label{sec:external-statements}

In this section, we collect the external statements that we use, to ensure self-completeness. Theorem~\ref{theorem: bounded difference} recalls McDiarmid's inequality from  \citet[Section 6.1]{boucheron13concentration}, which is also known as bounded differences inequality \citep{mcdiarmid89boundeddiff}. Theorem~\ref{theorem: RFF exponential concentration} is about the concentration of random Fourier features and part of the proof of \citet[Theorem~1]{sriperumbudur15optimal}. We recall the concentration result \citet[Theorem~3.3.4]{yurinsky95sums} on random variables taking values in a separable Hilbert space in Theorem~\ref{theorem: Yurinsky concentration}.

\begin{theorem}[Bounded differences inequality] \label{theorem: bounded difference}
Let $\mathcal{X}$ be a measurable space. A function $f: \mathcal{X}^n \to \mathbb{R}$ has the bounded difference property for some constants $c_1, \dots, c_n$ if, for each $i = 1, \dots, n$,
\begin{equation}
\sup_{\substack{x_1, \dots, x_n \\ x_i' \in \mathcal{X}}} \left | f \left ( x_1, \dots, x_{i-1}, x_i, x_{i+1}, x_n \right ) - f \left ( x_1, \dots, x_{i-1}, x_i', x_{i+1}, \dots x_n \right ) \right | \leq c_i. 
\label{equation: bounded difference property}
\end{equation}
Then, if $X_1, \dots, X_n$ is a sequence of independently distributed random variables and (\ref{equation: bounded difference property}) holds, putting $Z = f \left ( X_1, \dots, X_n \right )$ and $\nu = \frac{1}{4} \sum_{i=1}^n c_i^2$ for any $t > 0$, it holds that
\begin{equation*}
\mathbb{P} \left ( Z - \mathbb{E} \left ( Z \right ) > t \right ) \leq e^{-t^2 / \left ( 2 \nu \right )}.  
\end{equation*}
\end{theorem}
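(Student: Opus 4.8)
The plan is to establish this classical bound via the martingale (Azuma–Hoeffding) method, exploiting the independence of $X_1, \dots, X_n$ that McDiarmid's inequality requires. First I would construct the Doob martingale associated with $Z = f(X_1, \dots, X_n)$: writing $\mathcal{F}_i = \sigma(X_1, \dots, X_i)$ and $V_i = \mathbb{E}[Z \mid \mathcal{F}_i]$ for $i = 0, 1, \dots, n$, one has $V_0 = \mathbb{E}[Z]$ and $V_n = Z$, so that $Z - \mathbb{E}[Z] = \sum_{i=1}^n D_i$ where the increments $D_i = V_i - V_{i-1}$ form a martingale difference sequence, i.e.\ $\mathbb{E}[D_i \mid \mathcal{F}_{i-1}] = 0$.

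The crucial step is to show that each $D_i$ has conditional range at most $c_i$. Using independence, the conditional expectation $V_i$ is obtained from $f$ by integrating out $X_{i+1}, \dots, X_n$, leaving a function of $X_1, \dots, X_i$. Introducing the $\mathcal{F}_{i-1}$-measurable quantities
\[
L_i = \inf_{x} \bigl( \mathbb{E}[Z \mid \mathcal{F}_{i-1}, X_i = x] - \mathbb{E}[Z \mid \mathcal{F}_{i-1}] \bigr), \qquad U_i = \sup_{x} \bigl( \mathbb{E}[Z \mid \mathcal{F}_{i-1}, X_i = x] - \mathbb{E}[Z \mid \mathcal{F}_{i-1}] \bigr),
\]
independence lets one realize the inner difference as $f$ evaluated at two inputs differing only in the $i$-th coordinate (averaged over the remaining coordinates), so the bounded difference property \eqref{equation: bounded difference property} yields $U_i - L_i \leq c_i$, while by construction $L_i \leq D_i \leq U_i$ almost surely.

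With these conditional ranges controlled, I would apply Hoeffding's lemma to each increment: since $\mathbb{E}[D_i \mid \mathcal{F}_{i-1}] = 0$ and $D_i$ lies in an interval of length at most $c_i$, it follows that $\mathbb{E}[e^{s D_i} \mid \mathcal{F}_{i-1}] \leq \exp(s^2 c_i^2 / 8)$ for every $s > 0$. Iterating this through the tower property over $i = n, n-1, \dots, 1$ gives the moment generating function estimate
\[
\mathbb{E}\bigl[ e^{s(Z - \mathbb{E}[Z])} \bigr] \leq \exp\Bigl( \frac{s^2}{8} \sum_{i=1}^n c_i^2 \Bigr) = \exp\Bigl( \frac{s^2 \nu}{2} \Bigr),
\]
recalling $\nu = \tfrac14 \sum_{i=1}^n c_i^2$. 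A Chernoff bound then gives $\mathbb{P}(Z - \mathbb{E}[Z] > t) \leq \exp(-st + s^2 \nu / 2)$, and optimizing over $s$ at $s = t/\nu$ produces the claimed tail bound $e^{-t^2 / (2\nu)}$.

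The main obstacle is the conditional-range bound on the martingale differences: this is the only place where independence enters, and some care is needed to express the inner conditional expectations as averages of $f$ over fixed single-coordinate perturbations so that \eqref{equation: bounded difference property} can be invoked coordinate-wise. The remaining ingredients—Hoeffding's lemma and the Chernoff optimization—are entirely standard.
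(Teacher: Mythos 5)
Your proof is correct, but note that the paper does not prove this statement at all: it appears in the ``External statements'' appendix, where it is quoted verbatim as McDiarmid's inequality with a citation to \citet[Section 6.1]{boucheron13concentration}. Your martingale argument --- Doob decomposition $Z - \mathbb{E}[Z] = \sum_i D_i$ with $D_i = \mathbb{E}[Z \mid \mathcal{F}_i] - \mathbb{E}[Z \mid \mathcal{F}_{i-1}]$, the conditional range bound $U_i - L_i \leq c_i$ obtained by writing the conditional expectations as integrals of $f$ over the trailing coordinates, Hoeffding's lemma applied conditionally, the tower property, and the Chernoff optimization at $s = t/\nu$ --- is exactly the standard proof that the cited reference gives, and the constants work out: $\nu = \frac14 \sum_i c_i^2$ turns your moment generating function bound $\exp(s^2 \sum_i c_i^2/8)$ into $\exp(s^2\nu/2)$ and the tail bound into $e^{-t^2/(2\nu)} = \exp(-2t^2/\sum_i c_i^2)$, matching the statement. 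One point worth flagging: the theorem as printed in the paper assumes only that $X_1, \dots, X_n$ are \emph{identically distributed}, whereas your proof (correctly) requires \emph{independence} --- this is where the conditional range bound would break down otherwise, and independence is indeed a hypothesis of McDiarmid's inequality that the paper's transcription omits. You identified this as the crux, which is the right instinct; your argument proves the theorem as it is actually used in the paper (the samples there are independent), not the literal statement, which is false without independence.
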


\begin{theorem}[RFF exponential concentration] \label{theorem: RFF exponential concentration}
Let $\hat{K}$ be defined as in (\ref{equation: random Fourier feture kernel approx}). Let $\mathcal{X}$ be a proper subset of $\mathbb{R}^d$ and denote by $|\mathcal{X}|$ its Lebesgue measure. For any $t > 0$, it holds that
\begin{equation*}
\mathbb{P} \left ( \sup_{\b x,\b y \in \mathcal{X}} \left | \hat{K} \left ( \b x, \b y \right ) - K \left ( \b x, \b y \right ) \right | > \frac{h \left ( d , \left | \mathcal{X} \right |, \sigma \right ) + t}{\sqrt{r}} \right ) \leq e^{-t^2/2}
\end{equation*}
where $\sigma^2 = \int_{\R^d} \left \| \omega \right \|_2^2 \mathrm{d} \Lambda \left ( \omega \right )$ and
\begin{equation}
h \left ( d , \left | \mathcal{X} \right |, \sigma \right ) = 23 \sqrt{2d \log \left ( 2 \left | \mathcal{X} \right | + 1 \right )} + 32 \sqrt{2d \log \left ( \sigma + 1 \right )} + 16 \sqrt{2d \left [ \log \left ( 2 \left | \mathcal{X} \right | + 1 \right ) \right ]^{-1}}.
\label{equation: h def}
\end{equation}
\end{theorem}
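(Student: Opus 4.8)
This is a uniform concentration bound for the random Fourier feature estimator of a translation-invariant kernel, and it is established as part of \citet[Theorem~1]{sriperumbudur15optimal}; the plan is to reconstruct it as follows. The first move is to exploit translation invariance. Writing out the inner product in \eqref{equation: random Fourier feture kernel approx} and using $\cos(\alpha-\beta) = \cos\alpha\cos\beta + \sin\alpha\sin\beta$ gives $\hat{K}(\b x, \b y) = \frac{1}{r}\sum_{j=1}^r \cos(\omega_j\T(\b x - \b y))$, while $K(\b x, \b y) = \psi(\b x - \b y) = \mathbb{E}_{\omega\sim\Lambda}[\cos(\omega\T(\b x - \b y))]$ by Bochner's theorem. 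Hence the error depends on $\b x, \b y$ only through $\b z := \b x - \b y$, and setting $\Delta := \{\b x - \b y : \b x, \b y \in \mathcal{X}\}$ together with $f(\b z) := \frac{1}{r}\sum_{j=1}^r \cos(\omega_j\T \b z) - \psi(\b z)$ reduces the claim to a tail bound for $S := \sup_{\b z \in \Delta}|f(\b z)|$.

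The argument then splits into concentration of $S$ about its mean and control of that mean. For the former, I would view $S$ as a function of the independent frequencies $\omega_1,\dots,\omega_r$: since each summand $\cos(\omega_j\T\b z)$ lies in $[-1,1]$, replacing a single $\omega_j$ perturbs $\frac{1}{r}\sum_j\cos(\omega_j\T\b z)$ by at most $2/r$ uniformly in $\b z$, so $S$ has the bounded-difference property with constants $c_j = 2/r$. Theorem~\ref{theorem: bounded difference} with $\nu = \frac14\sum_j c_j^2 = 1/r$ then yields $\mathbb{P}(S - \mathbb{E}[S] > u) \leq e^{-ru^2/2}$, and choosing $u = t/\sqrt{r}$ gives exactly the target tail $e^{-t^2/2}$. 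It therefore suffices to establish $\mathbb{E}[S] \leq h(d,|\mathcal{X}|,\sigma)/\sqrt{r}$; combined with the bounded-differences bound at level $u = t/\sqrt{r}$, the stated threshold $(h+t)/\sqrt{r}$ dominates $\mathbb{E}[S] + u$ and the claim follows.

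The remaining and most delicate task is bounding $\mathbb{E}[S]$. I would symmetrize to pass to the Rademacher process $\b z \mapsto \frac{1}{r}\sum_{j=1}^r \varepsilon_j \cos(\omega_j\T\b z)$ and apply Dudley's entropy integral. The key Lipschitz estimate $|\cos(\omega_j\T\b z) - \cos(\omega_j\T\b z')| \leq \|\omega_j\|_2\,\|\b z - \b z'\|_2$ shows that the process has sub-Gaussian increments whose scale is governed by $\sigma^2 = \int\|\omega\|_2^2\,\d\Lambda(\omega)$, while the metric entropy of $\Delta \subset \mathbb{R}^d$ contributes a factor scaling like $d\log(2|\mathcal{X}|+1)$. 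Carrying out the resulting chaining integral produces the three explicit summands of $h(d,|\mathcal{X}|,\sigma)$: the leading $23\sqrt{2d\log(2|\mathcal{X}|+1)}$ from the metric entropy of the domain, the $32\sqrt{2d\log(\sigma+1)}$ from the Lipschitz scale $\sigma$, and the correction $16\sqrt{2d[\log(2|\mathcal{X}|+1)]^{-1}}$ arising from truncating the integral at small scales. The main obstacle is precisely this last step: tracking the exact numerical constants through the chaining bound, which is where I would defer to the detailed computation in \citet{sriperumbudur15optimal}.
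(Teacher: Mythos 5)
Your proposal is correct and follows essentially the same route as the source: the paper itself states this result without proof, importing it from \citet[Theorem~1]{sriperumbudur15optimal}, and the argument there is exactly your decomposition --- reduce to the difference set via translation invariance, apply the bounded-differences inequality with constants $2/r$ to get the $e^{-t^2/2}$ tail around the mean, and bound $\mathbb{E}[S]$ by $h(d,|\mathcal{X}|,\sigma)/\sqrt{r}$ via symmetrization and Dudley's entropy integral. Deferring the numerical constants $23$, $32$, $16$ to that reference is consistent with what the paper does; the only caveat worth flagging is that in the covering-number step $|\mathcal{X}|$ must be read as the diameter of $\mathcal{X}$ (as in the original reference), not its Lebesgue measure as the statement here says.
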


\begin{theorem}[Hilbert space Bernstein inequality]\label{theorem: Yurinsky concentration}
Let $X_1, \dots, X_n$ be a sequence of zero mean independent random variables taking values in a real and separable Hilbert space $\mathcal{X}$ with inner product $\left < \cdot, \cdot \right >$ and norm $\left \| \cdot \right \| = \sqrt{\left < \cdot, \cdot \right >}$. Write $S_n^* = \sup_{m \leq n} \left \| X_1 + \dots + X_m \right \|$. If the random variables satisfy the moment condition
\begin{equation*}
\mathbb{E} \left \| X \right \|^k \leq \frac{1}{2} k! B^2 H^{k-2}, \hspace{2em} \text{ for } k \geq 2, i = 1, \dots, n  
\end{equation*}
for some constants $B > 0$ and $H > 0$, then for any $x > 0$ it holds that
\begin{equation*}
\mathbb{P} \left ( S_n^* > x B \right ) \leq 2 \exp \left ( -\frac{1}{2} x^2 \left [ 1 + \frac{x H}{B} \right ]^{-1} \right ). 
\end{equation*}
\end{theorem}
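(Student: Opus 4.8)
The plan is to use the exponential (Chernoff) method, adapted to Hilbert-space-valued martingales. Because the $X_i$ are independent and centred, the partial sums $S_m = X_1 + \cdots + X_m$ form an $\mathcal{X}$-valued martingale for the natural filtration $\mathcal{F}_m = \sigma(X_1, \dots, X_m)$. The norm is convex and $1$-Lipschitz, so $(\|S_m\|)_{m \le n}$ is a nonnegative real submartingale, and hence so is $(\exp(\lambda \|S_m\|))_{m \le n}$ for every $\lambda > 0$. Doob's submartingale maximal inequality then reduces the tail of $S_n^* = \sup_{m \le n} \|S_m\|$ to a single exponential moment: for all $\lambda, x > 0$,
\begin{equation*}
\mathbb{P}\!\left( S_n^* > x B \right) \le e^{-\lambda x B}\, \mathbb{E}\!\left[ \exp\!\left( \lambda \|S_n\| \right) \right].
\end{equation*}
The remaining task is to bound $\mathbb{E}[\exp(\lambda \|S_n\|)]$ and then optimise over $\lambda \in (0, 1/H)$.

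The core is a one-step recursion obtained by conditioning on $\mathcal{F}_{m-1}$ and exploiting the exact Hilbert-space expansion $\|S_{m-1} + X_m\|^2 = \|S_{m-1}\|^2 + 2 \langle S_{m-1}, X_m \rangle + \|X_m\|^2$. This identity, the defining feature of the Hilbert (equivalently, $2$-smooth) setting, is what replaces the product formula for moment generating functions available in the scalar case, where $\|S_n\|$ is not a sum of independent increments. To keep the estimate valid even where $\|S_{m-1}\|$ is small, I would pass to the smooth surrogate $t \mapsto \cosh(\lambda t)$ via $e^{\lambda \|S_n\|} \le 2 \cosh(\lambda \|S_n\|)$, since $u \mapsto \cosh(\lambda \|u\|)$ is differentiable at the origin whereas the norm itself is not. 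Expanding $\cosh(\lambda \|S_{m-1} + X_m\|)$ and taking the conditional expectation, the first-order term drops out because $\mathbb{E}[\langle S_{m-1}, X_m \rangle \mid \mathcal{F}_{m-1}] = 0$ by centredness and $\mathcal{F}_{m-1}$-measurability of $S_{m-1}$, leaving a second-order remainder quadratic in $\|X_m\|$.

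To control that remainder I would use the Bernstein moment condition to sum the Taylor series: writing per-variable second-moment proxies $\beta_m^2$ with $\sum_m \beta_m^2 = B^2$ and $\mathbb{E}\|X_m\|^k \le \tfrac12 k! \beta_m^2 H^{k-2}$, the series $\sum_{k \ge 2} \lambda^k \mathbb{E}\|X_m\|^k / k!$ sums geometrically in $\lambda H$ and produces a per-step factor $\exp\!\big( \tfrac12 \lambda^2 \beta_m^2 (1 - \lambda H)^{-1} \big)$, valid for $0 < \lambda < 1/H$. Iterating the $\cosh$-recursion over $m = 1, \dots, n$ and converting back yields
\begin{equation*}
\mathbb{E}\!\left[ \exp\!\left( \lambda \|S_n\| \right) \right] \le 2 \exp\!\left( \frac{\lambda^2 B^2 / 2}{1 - \lambda H} \right),
\end{equation*}
the factor $2$ coming from $e^{\lambda \|S_n\|} \le 2 \cosh(\lambda \|S_n\|)$. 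Substituting into the maximal-inequality bound and making the standard Bernstein choice $\lambda = x / (B + x H)$ delivers $\mathbb{P}(S_n^* > x B) \le 2 \exp(-\tfrac12 x^2 [1 + x H / B]^{-1})$, as claimed.

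The main obstacle is the per-step estimate: unlike the one-dimensional case there is no factorisation of the moment generating function of a norm, so the whole argument rests on the $2$-smoothness identity together with the right choice of smooth surrogate, and on verifying that the quadratic-in-$\|X_m\|$ remainder can be bounded against the factorial moment growth with the precise constant needed for the per-step factor to come out as $\exp(\tfrac12 \lambda^2 \beta_m^2 (1 - \lambda H)^{-1})$ rather than with a looser one. Matching these constants, and handling the non-differentiability of $\|\cdot\|$ at the origin cleanly through $\cosh$, is the delicate part of the calculation.
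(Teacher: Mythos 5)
First, note what you are comparing against: the paper does not prove this statement at all. It appears under ``External statements'' and is imported verbatim from \citet[Theorem~3.3.4]{yurinsky95sums} (the result goes back to Pinelis and Sakhanenko), so your attempt can only be judged against the literature. Your architecture is indeed the standard one: Chernoff's method, Doob's maximal inequality for the nonnegative submartingale $e^{\lambda\|S_m\|}$, the $\cosh$ surrogate to handle non-differentiability of the norm at the origin, geometric summation of the Bernstein moments, and the choice $\lambda = x/(B+xH)$. Your closing arithmetic is exact: with this $\lambda$ one has $\lambda x B = x^2(1+xH/B)^{-1}$ and $\tfrac{1}{2}\lambda^2B^2(1-\lambda H)^{-1} = \tfrac{x^2}{2}(1+xH/B)^{-1}$, which together with the factor $2$ from $e^t \le 2\cosh t$ reproduces the stated bound. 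You also, correctly, read the moment condition in summed form, $\sum_m \beta_m^2 = B^2$: under the literal per-$i$ reading in the paper's transcription an $n$-free tail bound would be false (take i.i.d.\ nondegenerate $X_i$), so your silent renormalization is the right fix and matches how the theorem is actually invoked in the proof of Lemma~\ref{lemma: Bernstein MMD exponential bound}.

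The genuine gap is exactly the step you flag but do not supply: the per-step inequality $\mathbb{E}[\cosh(\lambda\|S_{m-1}+X_m\|)\mid \mathcal{F}_{m-1}] \le \cosh(\lambda\|S_{m-1}\|)\exp\!\left(\tfrac{1}{2}\lambda^2\beta_m^2(1-\lambda H)^{-1}\right)$, and the route you sketch for it (second-order Taylor expansion with a remainder ``quadratic in $\|X_m\|$'') provably falls short of these constants. The Hessian of $u \mapsto \cosh(\lambda\|u\|)$ has operator norm at most $\lambda^2\cosh(\lambda\|u\|)$, so the Taylor remainder must be evaluated at an intermediate point of the segment $[S_{m-1}, S_{m-1}+X_m]$ and picks up $\cosh(\lambda\|S_{m-1}+\theta X_m\|) \le \cosh(\lambda\|S_{m-1}\|)\,e^{\lambda\|X_m\|}$; the Bernstein condition then controls $\tfrac{\lambda^2}{2}\mathbb{E}[\|X_m\|^2 e^{\lambda\|X_m\|}] \le \tfrac{\lambda^2\beta_m^2}{2}(1-\lambda H)^{-3}$, a \emph{cubic} rather than first power of $(1-\lambda H)^{-1}$. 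Re-optimizing $\lambda$ still yields a Bernstein-shaped bound, but with strictly worse constants than the theorem states --- and the constants matter here, since Lemma~\ref{lemma: Bernstein MMD exponential bound} quotes the factor $[\tilde{\sigma}^2 + 2\sqrt{2}\varepsilon]^{-1}$ exactly. Obtaining $(1-\lambda H)^{-1}$ requires the genuinely sharper one-step comparison $\mathbb{E}[\cosh(\lambda\|s+X\|)] \le \cosh(\lambda\|s\|)\,\mathbb{E}[e^{\lambda\|X\|}-\lambda\|X\|]$ for centered $X$ (note $\mathbb{E}[e^{\lambda\|X\|}-\lambda\|X\|] = 1+\sum_{k\ge2}\lambda^k\mathbb{E}\|X\|^k/k! \le \exp(\tfrac{\lambda^2\beta^2}{2(1-\lambda H)})$), which is Pinelis' lemma for Hilbert (more generally $2$-smooth) spaces and whose proof needs more than a bare quadratic expansion. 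Either prove that lemma, accept degraded constants, or do as the paper does and cite Yurinsky / Pinelis--Sakhanenko.
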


\end{document}